\title{Simultaneous Swap Regret Minimization via KL-Calibration}
\author{Haipeng Luo\thanks{Author ordering is alphabetical.} \\ USC \\ $\mathsf{haipengl@usc.edu}$  \and Spandan Senapati\footnotemark[1] \\ USC \\ $\mathsf{ssenapat@usc.edu}$  \and Vatsal Sharan\footnotemark[1] \\ USC \\ $\mathsf{vsharan@usc.edu}$}
\newcommand{\abs}[1]{\ensuremath{\left\lvert#1\right\rvert}}
\newcommand{\bi}[1]{\ensuremath{\bm{#1}}} 
\newcommand{\ip}[2]{\ensuremath{\left\langle #1, #2 \right\rangle}} 
\newcommand{\msr}{\mathsf{Msr}}
\newcommand{\mpsr}{\mathsf{PMsr}}
\newcommand{\breg}{\mathsf{BREG}}
\newcommand{\sreg}{\mathsf{SReg}}
\newcommand{\psreg}{\mathsf{PSReg}}
\newcommand{\KL}{\mathsf{KL}}
\newcommand{\kcal}{\mathsf{KLCal}}
\newcommand{\pkcal}{\mathsf{PKLCal}}
\newcommand{\bigs}[1]{\left[ #1 \right]}
\newcommand{\bigc}[1]{\left( #1 \right)}
\newcommand{\bigcurl}[1]{\left\{ #1 \right\}}
\newcommand{\ind}[1]{\mathbb{I}{\left[#1\right]}}
\DeclareMathOperator*{\argmin}{argmin}
\definecolor{lightblue}{RGB}{0,102,204}
\def \p {{\bi{p}}}
\def \q {{\bi{q}}}
\def \e {{\bi{e}}}
\def \Q {{\bi{Q}}}
\def \cA {{\c{A}}}
\def \cL {{\c{L}}}
\def \cF {{\c{F}}}
\def \Rn {{\mathbb{R}}}
\def \cA{\mathcal{A}}
\def \cD{\mathcal{D}}
\def \cE{\mathcal{E}}
\def \cF{\mathcal{F}}
\def \cH{\mathcal{H}}
\def \cI{\mathcal{I}}
\def \cL{\mathcal{L}}
\def \cO{\mathcal{O}}
\def \cP{\mathcal{P}}
\def \cT{\mathcal{T}}
\def \cV{\mathcal{V}}
\def \cW{\mathcal{W}}
\def \cZ{\mathcal{Z}}
\def \nn{\nonumber}
\def \ucal{\mathsf{UCal}}
\def \cal{\mathsf{Cal}}
\def \pcal{\mathsf{PCal}}
\newcommand{\savehyperref}[2]{\texorpdfstring{\hyperref[#1]{#2}}{#2}}
\newtheorem{remark}{\bf Remark}
\newtheorem{theorem}{Theorem}
\newtheorem{corollary}{Corollary}
\newtheorem{lemma}{Lemma}
\newtheorem{proposition}{Proposition}
\begin{document}
\maketitle

\begin{abstract}
    Calibration is a fundamental concept that aims at ensuring the reliability of probabilistic predictions by aligning them with real-world outcomes. 
    There is a surge of studies on new calibration measures that are easier to optimize compared to the classical $\ell_1$-Calibration while still having strong implications for downstream applications.
    One such recent example is the work by~\citet{fishelsonfull} who show that it is possible to achieve $\tilde{\cO}(T^{1/3})$ pseudo $\ell_{2}$-Calibration error via minimizing pseudo swap regret of the squared loss, which in fact implies the same bound for all bounded proper losses with a smooth univariate form. In this work, we significantly generalize their result in the following ways: (a) in addition to smooth univariate forms, our algorithm also simultaneously achieves $\tilde{\cO}(T^{1/3})$ swap regret for any proper loss with a twice continuously differentiable univariate form (such as Tsallis entropy); (b) our bounds hold not only for pseudo swap regret that measures losses using the forecaster's distributions on predictions, but also hold for the actual swap regret that measures losses using the forecaster's actual realized predictions.
    
    We achieve so by introducing a new stronger notion of calibration called \textit{(pseudo) KL-Calibration}, which we show is equivalent to the (pseudo) swap regret with respect to  log loss. 
    We prove that there exists an algorithm that achieves $\tilde{\cO}(T^{1/3})$ KL-Calibration error and provide an explicit algorithm that achieves $\tilde{\cO}(T^{1/3})$ pseudo KL-Calibration error.
    Moreover, we show that the same algorithm achieves {${\cO}(T^{1/3} (\log T) ^ {-\frac{1}{3}}\log (T/{\delta}))$} swap regret with probability at least $1-\delta$ for any proper loss with a smooth univariate form, which implies $\tilde{\cO}(T^{1/3})$ $\ell_2$-Calibration error.
    A technical contribution of our work is a new randomized rounding procedure and a non-uniform discretization scheme to minimize the swap regret for log loss.     
\end{abstract}
\newpage
\tableofcontents
\newpage
\section{Introduction}\label{sec:intro}
We consider online \textit{calibration} 
--- a problem of making sequential probabilistic predictions over binary outcomes. Formally, at each time $t = 1, \dots, T$, a forecaster randomly predicts $p_{t} \in [0, 1]$ while simultaneously the adversary chooses $y_{t} \in \{0, 1\}$, and subsequently the forecaster observes the true label $y_{t}$. Letting $n_{p}$ denote the number of rounds the forecaster predicts $p_{t} = p$, the forecaster's predictions are perfectly calibrated if for all $p \in [0, 1]$, the empirical distribution of the label conditioned on the forecast being $p$, i.e., the quantity $\rho_{p} \coloneqq \sum_{t: p_{t} = p} y_{t} / n_{p}$, matches $p$. The $\ell_{q}$-Calibration error ($q \ge 1$) is then defined as 
\begin{equation}\label{eq:ell_2_cal}
    \cal_{q} \coloneqq \sum_{p \in [0, 1]} \sum_{t = 1} ^ {T} \ind{p_{t} = p} \bigc{p - \rho_p} ^ q. 
\end{equation}

A related concept used in~\citet{fishelsonfull} that we call \textit{pseudo calibration error} measures the error using the forecaster's conditional distribution $\cP_{t} \in \Delta_{[0, 1]}$ at time $t$, instead of the actual prediction $p_t$. More specifically, the pseudo $\ell_{q}$-Calibration error is defined as 
\begin{align}\label{eq:pseudo_ell_2_cal}
    \pcal_{q} \coloneqq \sum_{t = 1} ^ {T} \mathbb{E}_{p\sim\cP_{t}}[ \bigc{p -  \tilde{\rho}_{p} } ^ {q}],
\end{align} 
where $\tilde{\rho}_{p} \coloneqq \frac{\sum_{t = 1} ^ {T} y_{t} \cP_{t}(p)}{\sum_{t = 1} ^ {T} \cP_{t}(p)}$.
By not dealing with the random variable $p_t$, pseudo calibration is often easier to optimize.

Two of the most popular calibration measures are $\ell_{1}$ and $\ell_{2}$-Calibration. 
It has been long known that $\cal_1 = \cO(T^{2/3})$ is achievable, and there are some recent breakthroughs towards 
closing the gap between this upper bound and a standard lower bound $\cal_1 = \Omega(\sqrt{T})$ (see more discussion in related work).
For $\ell_{2}$-Calibration, \cite{foster2023calibeating} proposed an algorithm based on the concept of ``calibeating'' that achieves $\mathbb{E}[\cal_{2}] = \tilde{\cO}({T} ^ {\frac{1}{3}})$. Moreover, a recent work by \cite{fishelsonfull} showed that $\pcal_{2} = \tilde{\cO}(T^{\frac{1}{3}})$ is achievable by establishing equivalence to pseudo swap regret of the squared loss and proposing an efficient algorithm based on the well-known Blum-Mansour reduction \citep{blum2007external} for minimizing pseudo swap regret.

More specifically, given a loss function $\ell: [0,1]\times\{0,1\} \rightarrow \Rn$, 
the swap regret of the forecaster is defined as 
$\sreg^{\ell} \coloneqq \sup_{\sigma: [0, 1] \to [0, 1]} \sreg^{\ell}_{\sigma}$,
where $\sreg^{\ell}_\sigma \coloneqq \sum_{t = 1} ^ {T} \ell(p_{t}, y_{t}) - \ell(\sigma(p_{t}), y_{t})$
measures the difference between the forecaster's total loss and the loss of a strategy that always swaps the forecaster's prediction via a swap function $\sigma$.
Similarly, pseudo swap regret (\citealp{fishelsonfull}; referred in their work as full swap regret) is defined using the conditional distribution of predictions $\cP_{t}$ instead of $p_t$ itself: $\psreg^{\ell} \coloneqq \sup_{\sigma: [0, 1] \to [0, 1]} \psreg^{\ell}_{\sigma}$, where $\psreg_{\sigma}^{\ell} \coloneqq \sum_{t = 1} ^ {T} \mathbb{E}_{p \sim \cP_{t}}[\ell(p, y_{t}) - \ell(\sigma(p), y_{t})]$.
\citet{fishelsonfull} show that it is possible to achieve $\psreg^{\ell} = \tilde{\cO}(T^{\frac{1}{3}})$ when $\ell$ is the squared loss, which, as we will show, further implies that the same bound holds for any bounded proper loss $\ell$ with a smooth univariate form (refer to  Section \ref{sec:preliminaries} for concrete definitions of proper losses and their univariate form). 

In this work, we significantly generalize their results by not only recovering their results for pseudo swap regret, but also proving the same $\tilde{\cO}(T^{\frac{1}{3}})$ bound for new losses such as log loss and those induced by the Tsallis entropy.
Moreover, we prove the same bound (either in expectation or with high probability) for the actual swap regret, which was missing in \cite{fishelsonfull}.
To achieve these goals, we introduce a natural notion of $\textit{(pseudo) KL-Calibration}$, where the penalty incurred by the forecaster's prediction $p$ deviating from the empirical distribution of $y$ (conditioned on the forecast being $p$) is measured in terms of the KL-divergence. Specifically, the KL-Calibration and the pseudo KL-Calibration incurred by the forecaster are respectively defined as \begin{align}\label{eq:KLCal_PKLCal_def}
    \kcal \coloneqq \sum_{p \in [0, 1]} \sum_{t = 1} ^ {T} \ind{p_{t} = p} \KL(\rho_{p}, p), \quad \pkcal \coloneqq \sum_{t = 1} ^ {T} \mathbb{E}_{p \sim \cP_{t}}[\KL(\tilde{\rho}_{p}, p)],
\end{align}
where $\KL(q, p) = q\log\frac{q}{p}+(1-q)\log\frac{1-q}{1-p}$ is the KL-divergence for two Bernoulli distributions with mean $q$ and $p$ respectively.  
It follows from Pinsker's inequality that $\KL(\rho_{p}, p) \ge 
(\rho_{p} - p) ^ {2}$, therefore, $\kcal \ge \cal_{2}$ and $\pkcal \ge \pcal_{2}$, making (pseudo) KL-Calibration a stronger measure for studying upper bounds than  (pseudo) $\ell_2$-Calibration. 

\subsection{Contributions and Technical Overview}
Let $\cL$ denote the class of bounded (in $[-1, 1]$) proper losses. Our concrete contributions are as follows.
\begin{itemize}[leftmargin=*]
\item In Section~\ref{sec:implications}, we start by discussing the implications of (pseudo) KL-Calibration towards minimizing (pseudo) swap regret. In particular, in subsection \ref{subsec:KL-bounds-L-2}, we show for each $\ell \in \cL_{2}$, where $\cL_{2}$ is the class of bounded proper losses whose univariate form $\ell(p) \coloneqq \mathbb{E}_{y \sim p}[\ell(p, y)]$ 
is twice continuously differentiable in $(0, 1)$, we have $\sreg^{\ell} = \cO(\kcal), \psreg^{\ell} = \cO(\pkcal)$. In subsection \ref{subsec:KL-bounds-L-G2}, we show that for each $\ell \in \cL_{G}$, where $\cL_{G}$ is the class of bounded proper losses with a $G$-smooth univariate form,
(pseudo) KL-Calibration implies that $\sreg^{\ell} \le G \cdot \cal_{2} \le G \cdot \kcal, \psreg^{\ell} \le G \cdot \pcal_{2} \le G \cdot \pkcal$. 
This gives us strong incentives to study $\pkcal$ and $\kcal$.

\item In Section \ref{sec:achieve-KL-Cal}, we prove that there exists an algorithm that achieves $\mathbb{E}[\kcal] = \cO(T^{\frac{1}{3}} (\log T) ^ {\frac{5}{3}})$. To achieve so, we first realize that (pseudo) KL-Calibration is equivalent to the (pseudo) swap regret of the log loss $\ell(p, y) = -y \log p - (1 - y) \log (1 - p)$, i.e., $\kcal = \sreg^{\ell}, \pkcal = \psreg^{\ell}$. Subsequently, we propose a non-constructive proof for minimizing $\sreg^{\ell}$; our proof is based on swapping the forecaster and the adversary via von-Neumann's minimax theorem. Two particularly technical aspects of our proof are the usage of a non uniform discretization, which is contrary to all previous works, and the use of Freedman's inequality for martingale difference sequences. 

We remark that our non-constructive proof is motivated from \cite{hu2024predict}, who provide a similar proof to show the existence of an algorithm that simultaneously achieves $\cO(\sqrt{T} \log T)$ swap regret for any bounded proper loss. However, compared to \cite{hu2024predict}, we use a non uniform discretization, which requires a more involved analysis.\footnote{Our non-uniform discretization scheme has appeared before \citep{kotlowski2016online}, albeit in a different context. Its combination with other techniques in our paper results in a significantly different approach. 
}
Moreover, due to the desired $\cO(T^{\frac{1}{3}})$ nature of our final bounds, we cannot merely use Azuma-Hoeffding 
that guarantees $\cO(\sqrt{T})$ concentration. The aforementioned reasons combined make our analysis considerably non-trivial and different than \cite{hu2024predict}. 

Combined with the implications of Section \ref{sec:implications}, we show the existence of an algorithm that simultaneously achieves the following bounds on $\mathbb{E}[\sreg^{\ell}]$: (a) $\cO(T^{\frac{1}{3}} (\log T) ^ {\frac{5}{3}})$ for the log loss; (b) $\cO(T^{\frac{1}{3}} (\log T) ^ {\frac{5}{3}})$  for each $\ell \in \cL_{2}$; (c) $\cO(G \cdot T^{\frac{1}{3}} (\log T) ^ {\frac{5}{3}})$ for each $\ell \in \cL_{G}$; and (d) $\cO(T^{\frac{2}{3}} (\log T) ^ {\frac{5}{6}})$ for each $\ell \in \cL \backslash \{\cL_{2} \cup \cL_{G}\}$. Notably, our result is better than \cite{luo2024optimal} who studied the weaker notion of external regret, defined as $\textsc{Reg}^{\ell} \coloneqq \sup_{p \in [0, 1]} \sum_{t = 1} ^ {T} \ell(p_{t}, y_{t}) - \ell(p, y_{t})$,
and showed that the Follow-the-Leader (FTL) algorithm achieves $\textsc{Reg}^{\ell} = \cO(\log T)$ for each $\ell \in \cL_{2} \cup \cL_{G}$, however incurs $\textsc{Reg}^{\ell} = \Omega (T)$ for a specific $\ell \in \cL \backslash \{\cL_{2} \cup \cL_{G}\}$. 

\item In Section \ref{sec:pseudo-KL-Cal}, we propose an explicit algorithm that achieves $\pkcal = \cO(T^{\frac{1}{3}} (\log T) ^ \frac{2}{3})$. Similar to \cite{fishelsonfull}, we utilize the Blum-Mansour reduction for minimizing $\psreg^{\ell}$ for the log loss. However, our key novelty lies in the usage of a non uniform discretization and a new randomizing rounding procedure (Algorithm \ref{alg:rounding_alg}) for the log loss. Since the log loss is not Lipschitz, we show that the common rounding schemes studied in the literature fail to work for our considered discretization. 
A natural implication of our result is that, since $\psreg^{\ell} \le G \cdot \pkcal$ for any $\ell \in \cL_{G}$, we recover the result of \cite{fishelsonfull}. However, since $\psreg^{\ell} = \cO(\pkcal)$ for any $\ell \in \cL_{2}$, we are able to deal with new losses, and even the log loss which is unbounded.

\item Finally, in Section \ref{sec:bound_calibration}, we show that if we only consider the class of bounded proper losses with a smooth univariate form, our algorithm guarantees
$$\cal_{2} = {\cO}\bigc{T^{1/3} (\log T) ^ {-\frac{1}{3}}\log (T/{\delta})}, \quad \msr_{\cL_{G}} = {\cO}\bigc{G \cdot T^{1/3} (\log T) ^ {-\frac{1}{3}}\log (T/{\delta})}$$ with probability at least $1 - \delta$, where $\msr_{\cL_{G}} = \sup_{\ell \in \cL_{G}} \sreg^{\ell}$. This marks the first appearance of a sub-$\sqrt{T}$ high probability bound for classical $\ell_{2}$-Calibration via an efficient algorithm.

\end{itemize}

\subsection{Related Work}
\paragraph{Simultaneous swap regret minimization} 
Calibration can also be viewed from the lens of simultaneous regret minimization \citep{kleinberg2023u, hu2024predict, luo2024optimal}. It is known from \cite{kleinberg2023u} that
$\ell_{1}$-Calibrated forecasts can simultaneously lead to sublinear swap regret for all $\ell \in \cL$, where recall that $\cL$ is the class of bounded (in $[-1, 1]$) proper losses. However, as shown by \cite{qiao2021stronger, dagan2024improved}, for any forecasting algorithm there exists an adversary that ensures that $\cal_{1} = \Omega(T^{0.54389})$, thereby sidestepping the goal of achieving the favorable $\sqrt{T}$ style regret guarantee. Despite the limitations of calibration, \cite{hu2024predict} proposed an explicit algorithm that achieves $\mathbb{E}[\sup_{\ell \in \cL}\sreg^{\ell}] = \cO(\sqrt{T} \log T)$. 
Compared to \citep{hu2024predict}, we show that a single algorithm in fact achieves $\tilde{\cO}(T^{\frac{1}{3}})$ swap regret for important subclasses of $\cL$ and even the log loss, while simultaneously achieving $\tilde{\cO}(T^{\frac{2}{3}})$ swap regret for any arbitrary $\ell \in \cL$. Notably, the result of \cite{hu2024predict} does not apply to the log loss since it does not belong to $\cL$. 
With an appropriate post-processing of the predictions, a stronger analogue of simultaneous swap regret minimization has also been studied in the contextual setting (\citealp{garg2024oracle}; referred to as swap omniprediction), where the forecaster competes with functions from a hypothesis class $\cF$. Notably, in swap omniprediction, both the loss function and the competing hypothesis are parameterized by the predictions themselves. 
For this, \cite{garg2024oracle} showed that it is impossible to achieve $\cO(\sqrt{T})$ swap omniprediction error for the class of convex and Lipschitz loss functions, even in the simplest setting where $\cF$ contains the constant $0, 1$ functions. 

\paragraph{Simultaneous regret minimization} \cite{kleinberg2023u} proposed U-Calibration, where the goal is to simultaneously minimize the external regret $\textsc{Reg}^{\ell}$ for all $\ell \in \cL$ and provided an algorithm that achieves U-Calibration error $\ucal \coloneqq \sup_{\ell \in \cL} \textsc{Reg}^{\ell} = {\cO}(\sqrt{T})$. 
In the multiclass setting with $K$ classes, \cite{luo2024optimal} proved that the minimax error is $\Theta(\sqrt{KT})$. 
With an appropriate post-processing of the predictions, the concept of U-Calibration has also been extended to the contextual setting (referred to as online omniprediction \citep{garg2024oracle}). Very recently, \cite{okoroafor2025near} have shown that it is possible to achieve $\tilde{\cO}(\sqrt{T})$ omniprediction error for a family of bounded variation loss functions against a hypothesis class $\cF$ with bounded complexity, thereby surpassing the limitations of swap omniprediction. 

\paragraph{Weaker notions of calibration} Understanding the limitations of online calibration, i.e., $\cal_{1} = \cO(\sqrt{T})$ is impossible, has led to a recent line of work aimed at studying weaker notions of calibration which are still meaningful for downstream loss minimization tasks, e.g., continuous calibration \citep{foster2021forecast}, U-Calibration \citep{kleinberg2023u}, distance to calibration \citep{pmlr-v247-qiao24a, arunachaleswaran2025elementary}. Particularly, the last two works 
considered the problem of minimizing the distance to calibration ($\mathsf{CalDist}_{1}$), defined as the $\ell_{1}$ distance between the forecaster's vector of predictions and that of the nearest perfectly calibrated predictor, and proposed a non-constructive, constructive proof respectively
that there exists an algorithm that achieves $\mathsf{CalDist}_{1} = \cO(\sqrt{T})$. Since $\mathsf{CalDist}_{1} \le \cal_{1} \le \sqrt{T \cdot \cal_{2}}$, our Algorithm \ref{alg:BM_log_loss} in fact ensures that $\mathsf{CalDist}_{1} = {\cO}(T^{\frac{2}{3}} (\log T) ^ {-\frac{1}{6}}\sqrt{\log (T/\delta)})$ with probability at least $1 - \delta$, while simultaneously minimizing swap regret for several subclasses of $\cL$.

\section{Preliminaries and Background}\label{sec:preliminaries}

\paragraph{Notation} For a $m \in \mathbb{N}$, $[m]$ denotes the index set $\{1, \dots, m\}$. We reserve bold lower-case alphabets for vectors and bold upper-case alphabets for matrices. The notation $\ind{\cdot}$ refers to the indicator function, which evaluates to $1$ if the condition is true, and $0$ otherwise. We use $\e_{i}$ to represent the $i$-th standard basis vector (dimension inferred from context), which is $1$ at the $i$-th coordinate and $0$ everywhere else. 
For any $k \in \mathbb{N}$, we use $\Delta_{k}$ to represent the $(k - 1)$-dimensional simplex. Moreover, we use $\Delta_{[0, 1]}$ to represent the set of all probability distributions over $[0, 1]$. We use $\mathbb{P}_{t}, \mathbb{E}_{t}$ to represent the conditional probability, expectation respectively, where the conditioning is over the randomness till time $t - 1$ (inclusive). We use $\mathsf{KL}(p, q), \mathsf{TV}(p, q), \chi^{2}(p, q)$ to represent the KL divergence, total variation distance, chi-squared distance between two Bernoulli distributions with means $p, q$. 
For a set $\cI$, its complement is  $\bar{\cI} = \Omega \backslash \cI$, where the sample set $\Omega$ shall be clear from the context.
A twice differentiable function $f: \cD \to \Rn$ is  $\alpha$-\textit{smooth} over $\cD \subset \Rn$ if $f''(x) \le \alpha$ for all $x \in \cD$. A function $f: \cW \to \Rn$ is \textit{$\alpha$-exp-concave} over a convex set $\cW$ if the function $\exp(-\alpha f(w))$ is concave over $\cW$. We use the notation $\tilde{\cO}(\cdot)$ to hide lower order logarithmic terms.

\paragraph{Proper Losses} A loss $\ell: [0, 1] \times \{0, 1\} \to \Rn$ is called proper if $\mathbb{E}_{y \sim p} [\ell(p, y)] \le \mathbb{E}_{y \sim p}[\ell(p', y)]$ for all $p, p' \in [0, 1]$. Intuitively, a proper loss incentivizes the forecaster to report the true distribution of the label. Throughout the paper, we shall be primarily concerned about the family $\cL$ (or a subset) of bounded proper losses, i.e., $\cL \coloneqq \{\ell \text{ s.t. } \ell \text{ is proper and } \ell(p, y) \in [-1, 1] \text{ for all } p \in [0, 1], y \in \{0, 1\}\}$, even though our results hold for (and in fact achieved via) the unbounded log loss. For a proper loss $\ell$, the \textit{univariate} form of $\ell$ is defined as $\ell(p) \coloneqq \mathbb{E}_{y \sim p}[\ell(p, y)]$. 
It turns out that a the univariate form of a proper loss is concave.
Moreover, one can construct a proper loss using a concave univariate form based on the following characterization lemma.

\begin{lemma}[Theorem 2 in~\citet{gneiting2007strictly}]\label{lem:characterization_proper_loss}
    A loss $\ell: [0, 1] \times \{0, 1\} \to \mathbb{R}$ is proper if and only if there exists a concave function $f$ such that $\ell(p, y) = f(p) + \ip{g_p}{y - p}$ for all $p \in [0, 1], y \in \{0, 1\}$, where $g_{p}$ denotes a subgradient of $f$ at $p$.
    Also, $f$ is the univariate form of $\ell$.
\end{lemma}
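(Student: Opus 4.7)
The plan is to prove both directions by exploiting the fact that for binary labels, for any fixed $p' \in [0, 1]$, the map $p \mapsto \mathbb{E}_{y \sim p}[\ell(p', y)] = p\,\ell(p', 1) + (1-p)\,\ell(p', 0)$ is affine in $p$; denote this function $L(p', p)$. The linearity of $y$ over $\{0,1\}$ in expectation is really the only structural fact that drives the equivalence.

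For the direction $(\Leftarrow)$, I would assume the representation $\ell(p, y) = f(p) + g_p(y - p)$ with $f$ concave and $g_p$ a supergradient of $f$ at $p$, and directly compute $\mathbb{E}_{y \sim p}[\ell(p', y)] = f(p') + g_{p'}(p - p')$. The supergradient inequality for the concave $f$ immediately gives $f(p) \le f(p') + g_{p'}(p - p')$, which is exactly the properness inequality since $\mathbb{E}_{y \sim p}[\ell(p, y)] = f(p)$. The same computation shows that $f$ coincides with the univariate form of $\ell$.

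For the direction $(\Rightarrow)$, define $f(p) \coloneqq \mathbb{E}_{y \sim p}[\ell(p, y)] = L(p, p)$, which is by construction the univariate form. Properness rewrites as $f(p) = \min_{p'} L(p', p)$, identifying $f$ as a pointwise infimum of the family of affine functions $\{L(p', \cdot)\}_{p' \in [0,1]}$, hence concave. The structural observation I would rely on is that for each fixed $p$, the affine map $\tilde{p} \mapsto L(p, \tilde{p})$ dominates $f$ globally and coincides with $f$ at $\tilde{p} = p$; therefore its slope $g_p \coloneqq \ell(p, 1) - \ell(p, 0)$ is a supergradient of $f$ at $p$. A short rearrangement using $f(p) = \ell(p, 0) + p \cdot g_p$ together with $\ell(p, y) = \ell(p, 0) + y \cdot g_p$ (valid for both $y = 0$ and $y = 1$) yields $\ell(p, y) = f(p) + g_p(y - p)$, as desired.

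The only subtle step is identifying the correct candidate subgradient in the $(\Rightarrow)$ direction; once one recognizes the dual role of $L(p, \cdot)$ as a supporting affine function of $f$ at $p$ (tight at $\tilde{p}=p$, dominating elsewhere), the remainder is an elementary algebraic verification. No regularity assumptions beyond boundedness of $\ell$ on $[0,1]\times\{0,1\}$ are needed, since concavity on $[0,1]$ automatically implies existence of supergradients on the interior and one-sided supergradients at the endpoints.
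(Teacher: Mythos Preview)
Your proof is correct. Note, however, that the paper does not actually prove this lemma: it is stated with attribution to \citet{gneiting2007strictly} (Theorem~2 there) and used as a black box throughout. So there is no ``paper's own proof'' to compare against.

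That said, your argument is the standard one and matches the spirit of the original source: the key observation is that for binary outcomes, $p \mapsto \mathbb{E}_{y\sim p}[\ell(p',y)]$ is affine, so properness identifies the univariate form as an infimum of affine functions (hence concave), and each $L(p,\cdot)$ is a supporting line at $p$ with slope $g_p = \ell(p,1)-\ell(p,0)$. The algebraic verification $\ell(p,y) = \ell(p,0) + y g_p = f(p) + g_p(y-p)$ is clean and complete. One minor remark: in the lemma statement $g_p$ is called a ``subgradient'' of the concave $f$; you correctly interpret this as a supergradient (equivalently, a subgradient of $-f$), which is the intended meaning given the sign conventions used elsewhere in the paper (e.g., $\partial\ell(p)$ in the proof of Proposition~\ref{prop:swap_reg_breg_div}).
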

Examples of proper losses include squared loss $\ell(p, y) = (p - y)^{2}$, log loss $\ell(p, y) = y\log\frac{1}{p} + (1-y)\log\frac{1}{1-p}$, spherical loss $\ell(p, y) = -\frac{p y + (1 - p) (1 - y)}{\sqrt{p ^ {2} + (1 - p) ^ {2}}}$, etc. 

\paragraph{Bregman Divergence} For a convex function $\phi$, let $\breg_{\phi}(x, y) = \phi(x) - \phi(y) - \ip{\partial \phi(y)}{x - y}$ denote the Bregman divergence associated with $\phi$. 
The following lemma is important to our results.
\begin{lemma}[Lemma 3.8 in \cite{hu2024predict}]\label{lem:relate_breg_vbreg}
    Let $u: [0, 1] \to [-1, 1]$ be a twice differentiable concave function. Then, we have
    $
        \breg_{-u}(\hat{p}, p) = 
        \int_{p} ^ {\hat{p}} \abs{u''(\mu)} \cdot (\hat{p} - \mu) d\mu.
    $
\end{lemma}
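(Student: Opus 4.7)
The plan is to unpack the Bregman divergence by its definition and then invoke Taylor's theorem with the integral form of the remainder. First, since $u$ is twice continuously differentiable, the subgradient of the convex function $-u$ at $p$ is simply $-u'(p)$, so
\[
    \breg_{-u}(\hat{p}, p) = -u(\hat{p}) - (-u(p)) - \ip{-u'(p)}{\hat{p} - p} = -u(\hat{p}) + u(p) + u'(p)(\hat{p} - p).
\]

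Next, I would apply Taylor's theorem with the integral remainder expanded about $p$, which for a $C^2$ function gives
\[
    u(\hat{p}) = u(p) + u'(p)(\hat{p} - p) + \int_{p}^{\hat{p}} u''(\mu)(\hat{p} - \mu)\, d\mu.
\]
Substituting this into the previous display produces a clean cancellation of the zeroth and first-order terms, leaving
\[
    \breg_{-u}(\hat{p}, p) = -\int_{p}^{\hat{p}} u''(\mu)(\hat{p} - \mu)\, d\mu.
\]

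To finish, concavity of $u$ on $[0,1]$ gives $u''(\mu) \le 0$ everywhere on $(0,1)$, so $-u''(\mu) = |u''(\mu)|$ pointwise, yielding the stated identity. The only place a reader might pause is the case $\hat{p} < p$, but the oriented integral handles this automatically: reversing the limits of integration flips one sign, while $(\hat{p} - \mu) \le 0$ on $[\hat{p}, p]$ flips the other, so the integrand $|u''(\mu)|(\hat{p} - \mu)$ integrated from $p$ to $\hat{p}$ remains non-negative, consistent with the non-negativity of the Bregman divergence of a convex function. Since Taylor's theorem with integral remainder is valid in either orientation, no separate case analysis is required, and I do not anticipate any real technical obstacle in this proof.
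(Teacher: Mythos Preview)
Your proof is correct. Note that the paper does not supply its own proof of this lemma; it is quoted directly from \cite{hu2024predict}, so there is no in-paper argument to compare against. Your approach---expand the Bregman divergence, apply Taylor's theorem with integral remainder, and use concavity to replace $-u''$ with $|u''|$---is the standard one and is exactly how one would expect the cited reference to proceed.

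One very minor remark: the hypothesis in the statement is that $u$ is \emph{twice differentiable}, whereas you invoke Taylor's integral remainder as if $u$ were twice \emph{continuously} differentiable. Strictly speaking, the integral-remainder form requires $u'$ to be absolutely continuous (so that $u'(x)-u'(p)=\int_p^x u''(\mu)\,d\mu$). In every application of this lemma in the paper the univariate form is in fact $C^2$ on $(0,1)$, so this is a non-issue in context; but if you want the argument to match the stated hypothesis exactly, you could note that concavity makes $u'$ monotone and that twice differentiability plus monotonicity of $u'$ suffices, or simply flag the mild regularity you are actually using.
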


\paragraph{Problem Setting}
As mentioned in Section~\ref{sec:intro}, we consider calibration, where the interaction between the forecaster and the adversary is according to the following protocol: at each time $t=1, \ldots, T$, (a) the forecaster randomly predicts $p_{t} \in [0, 1]$ and simultaneously the adversary chooses $y_{t} \in \{0, 1\}$; (b) the forecaster observes $y_{t}$. Throughout the paper, we shall consider algorithms that make predictions $p_{t}$ that fall in a finite discretization $\cZ \subset [0, 1]$.
According to \eqref{eq:KLCal_PKLCal_def}, the KL-Calibration, Pseudo KL-Calibration incurred by the forecaster are  $\kcal = \sum_{p \in \cZ} \sum_{t = 1} ^ {T} \ind{p_{t} = p} \KL(\rho_{p}, p), \pkcal = \sum_{p \in \cZ} \sum_{t = 1} ^ {T} \cP_{t}(p)\KL(\tilde{\rho}_{p}, p)$, where 
$\rho_{p} = \frac{\sum_{t = 1} ^ {T} y_{t} \ind{p_{t} = p}}{\sum_{t = 1} ^ {T} \ind{p_{t} = p}}, \tilde{\rho}_{p} = \frac{\sum_{t = 1} ^ {T} y_{t} \cP_{t}(p)}{\sum_{t = 1} ^ {T} \cP_{t}(p)}$.\footnote{
For convenience, we set $\frac{0}{0} = 0$. This is because if $n_{p} = 0$, the forecast $p_{t} = p$ was never made and thus does not contribute to the calibration error.}  For simplicity, we assume that the adversary is oblivious, that is it selects $y_{1}, \dots, y_{T}$ at time $t = 0$ with complete knowledge of the forecaster's algorithm.\footnote{However, our results generalize directly to an adaptive adversary who decides $y_t$ based on $p_1,\ldots, p_{t-1}$.} Our goal is to minimize the (pseudo) KL-Calibration error, which as we show in Section~\ref{sec:implications}, has powerful implications.

As mentioned, the swap regret of the forecaster with respect to a loss function $\ell$ against a swap function $\sigma: [0, 1] \to [0, 1]$ is $\sreg^{\ell}_\sigma = \sum_{t = 1} ^ {T} \ell(p_{t}, y_{t}) - \ell(\sigma(p_{t}), y_{t})$. Swap regret is then defined as $\sreg ^ {\ell} = \sup_{\sigma: [0, 1] \to [0, 1]} \sreg^{\ell}_{\sigma}$. Similarly, the pseudo swap regret is $\psreg^{\ell} = \sup_{\sigma: [0, 1] \to [0, 1]} \psreg^{\ell}_{\sigma}$, where $\psreg_{\sigma}^{\ell} = \sum_{p \in \cZ} \sum_{t = 1} ^ {T} \cP_{t}(p) (\ell(p, y_{t}) - \ell(\sigma(p), y_{t}))$. 
We further define \textit{maximum (pseudo) swap regret} with respect to the class of bounded proper losses $\cL$ as 
    $\msr_{\cL} \coloneqq \sup_{\ell \in \cL} \sreg^{\ell}, \mpsr_{\cL}  \coloneqq \sup_{\ell \in \cL} \psreg^{\ell}$.
For a subset of losses $\cL' \subseteq \cL$, we define $\msr_{\cL'}$ and $\mpsr_{\cL'}$ similarly, 
with the supremum over $\ell \in \cL'$. The usage of $\ell$ for a bounded proper loss, or the log loss (which does not belong to $\cL$) shall be clear from the context.
 
\section{Implications of (Pseudo) KL-Calibration}\label{sec:implications}
 In this section, we discuss the implications of (pseudo) KL-Calibration towards minimizing the (pseudo) swap regret. In particular, we shall show that (pseudo) KL-Calibration upper bounds the following:  (a) $(\mathsf{P})\mathsf{SReg} ^ {\ell}$ for all $\ell \in \cL_{2}$  (subsection \ref{subsec:KL-bounds-L-2}); (b) $(\mathsf{P})\msr_{\cL_{G}}$ (subsection \ref{subsec:KL-bounds-L-G2}). This gives a strong incentive to study (pseudo) KL-Calibration.
 
The following proposition, which relates (pseudo) swap regret with Bregman Divergence is central to all subsequent results developed in this work.
\begin{proposition}\label{prop:swap_reg_breg_div}
    For any proper loss $\ell$ and a swap function $\sigma: [0, 1] \to [0, 1]$, let $\breg_{-\ell}$ be the Bregman divergence associated with the negative univariate form $-\ell$. We have
    \begin{align*}
        \sreg_{\sigma} ^ {\ell} &= \sum_{p \in \cZ} \bigc{\sum_{t = 1} ^ {T} \ind{p_{t} = p}} \bigc{\breg_{-\ell}(\rho_{p}, p) - \breg_{-\ell}(\rho_{p}, \sigma(p))}, \\
        \psreg_{\sigma} ^ {\ell} &= \sum_{p \in \cZ} \bigc{\sum_{t = 1} ^ {T} \cP_{t}(p)} \bigc{\breg_{-\ell}(\tilde{\rho}_{p}, p) - \breg_{-\ell}(\tilde{\rho}_{p}, \sigma(p))},
    \end{align*}
where $\rho_{p} = \frac{\sum_{t = 1} ^ {T} \ind{p_{t} = p} y_{t}}{\sum_{t = 1} ^ {T} \ind{p_{t} = p}}, \tilde{\rho}_{p} = \frac{\sum_{t = 1} ^ {T} \cP_{t}(p) y_{t}}{\sum_{t = 1} ^ {T} \cP_{t}(p)}$. Furthermore, \begin{align*}
   \sreg ^ {\ell} = \sum_{p \in \cZ} {\sum_{t = 1} ^ {T} \ind{p_{t} = p}} {\breg_{-\ell}(\rho_{p}, p)}, \;
   \psreg ^ {\ell} = \sum_{p \in \cZ} {\sum_{t = 1} ^ {T} \cP_{t}(p)} {\breg_{-\ell}(\tilde{\rho}_{p}, p)}.
\end{align*}
\end{proposition}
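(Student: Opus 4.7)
The plan is to apply the Gneiting--Raftery characterization (\pref{lem:characterization_proper_loss}) to rewrite each loss $\ell(q,y)$ in the form $f(q) + g_q(y-q)$, where $f \coloneqq \ell$ is the (concave) univariate form and $g_q$ is a supergradient of $f$ at $q$. Because this expression is \emph{linear} in $y$, once we group rounds by the predicted value $p \in \cZ$ the inner sum over $t$ collapses to the empirical conditional mean $\rho_p$ (or $\tilde\rho_p$ in the pseudo case), after which a single algebraic identity converts everything into Bregman divergences of $-f$.

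Concretely, for the first identity I would fix $\sigma$, set $n_p \coloneqq \sum_t \ind{p_t=p}$, and write
\begin{align*}
\sum_{t:p_t=p}\ell(p,y_t) \;=\; n_p f(p) + g_p\bigl(n_p\rho_p - n_p p\bigr) \;=\; n_p\bigl[f(p) + g_p(\rho_p - p)\bigr],
\end{align*}
and the analogous expression at $\sigma(p)$. The key observation is the identity
\begin{align*}
f(q) + g_q(\rho_p - q) \;=\; f(\rho_p) + \breg_{-f}(\rho_p, q),
\end{align*}
which is just the definition of $\breg_{-f}$ rearranged (using that $\partial(-f)(q) = -g_q$). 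Applying this at $q = p$ and $q = \sigma(p)$ and subtracting makes the $f(\rho_p)$ terms cancel, yielding the claimed per-$p$ contribution $n_p\bigl(\breg_{-f}(\rho_p,p) - \breg_{-f}(\rho_p,\sigma(p))\bigr)$; summing over $p \in \cZ$ gives the first display. The pseudo-swap identity is obtained by the identical calculation with $\ind{p_t=p}$ replaced by $\cP_t(p)$ throughout, so that $S_p = \sum_t \cP_t(p) y_t$ and the ratio becomes $\tilde\rho_p$.

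For the two $\sup_\sigma$ identities, I would observe that $\sigma$ can be chosen \emph{independently} at each $p \in \cZ$, and Bregman divergences of convex functions are nonnegative and vanish on the diagonal. Since $\rho_p$ (resp.\ $\tilde\rho_p$) is an average of values in $\{0,1\}$ and hence lies in $[0,1]$, the choice $\sigma(p) = \rho_p$ is admissible and forces $\breg_{-f}(\rho_p,\sigma(p)) = 0$, which is optimal; this yields $\sreg^\ell = \sum_p n_p\,\breg_{-f}(\rho_p, p)$ and similarly for $\psreg^\ell$.

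No step poses a substantive obstacle --- the argument is essentially bookkeeping once one recognizes the Bregman-divergence rearrangement. The only minor care needed is with supergradients at endpoints $0,1$ (where $f$ may not be differentiable); this is harmless because the representation from \pref{lem:characterization_proper_loss} allows \emph{any} valid $g_p$ and the resulting quantities $\ell(p,y_t)$ are unambiguous, while for the supremum step only values of $\sigma(p) = \rho_p \in [0,1]$ are ever required, and $-f$ is a proper convex function on $[0,1]$ so its Bregman divergence is well-defined there.
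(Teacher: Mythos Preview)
Your proposal is correct and follows essentially the same approach as the paper: apply \pref{lem:characterization_proper_loss}, use linearity in $y$ to collapse the inner sum to $\rho_p$ (or $\tilde\rho_p$), then add and subtract $f(\rho_p)$ to recognize the Bregman divergences, and finally choose $\sigma(p)=\rho_p$ to attain the supremum. The paper's proof is slightly terser (it simply says ``adding and subtracting $\ell(\tilde\rho_p)$''), while you spell out the Bregman-rearrangement identity and add a brief remark on endpoint supergradients, but the substance is identical.
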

The proof of Proposition \ref{prop:swap_reg_breg_div}, deferred to Appendix \ref{app:implications}, follows by an application of Lemma \ref{lem:characterization_proper_loss} and is similar to~\citet{hu2024predict}. Two particularly interesting applications of Proposition \ref{prop:swap_reg_breg_div} are:
\begin{itemize}[leftmargin=*]
\item For the squared loss $\ell(p, y) = (p - y) ^ {2}$, the univariate form is $\ell(p) = p - p ^ {2}$, and $\breg_{-\ell}(\rho_{p}, p) = (\rho_{p} - p) ^ {2}$. Therefore, $\sreg^{\ell} = \cal_{2}, \psreg^{\ell} = \pcal_{2}$.
\item For the log loss $\ell(p, y) = y\log\frac{1}{p} + (1-y)\log\frac{1}{1-p}$, the univariate form is $\ell(p) = \mathbb{E}_{y \sim p} [\ell(p, y)] = -p \log p - (1 - p) \log (1 - p)$. Moreover, as can be verified by direct computation, the associated Bregman divergence $\breg_{-\ell}(\hat{p}, p)$ is exactly equal to $\KL(\hat{p}, p)$. Therefore, we have $\sreg^{\ell} = \kcal, \psreg^{\ell} = \pkcal$.
This equivalence between (pseudo) KL-Calibration and (pseudo) swap regret of the log loss shall be our starting tool towards the developments in Sections \ref{sec:achieve-KL-Cal}, \ref{sec:pseudo-KL-Cal}, where we bound $\kcal, \pkcal$ respectively.
\end{itemize}
Note that since $\psreg^\ell \leq \mathbb{E}[\sreg^\ell]$ trivially holds by definition, $\pcal_{2}$ and $\pkcal$ are indeed weaker notions compared to $\cal_2$ and $\kcal$ respectively.

\subsection{(Pseudo) KL-Calibration implies (pseudo) swap regret for all $\ell \in \cL_{2}$} \label{subsec:KL-bounds-L-2}
In this subsection, we show that $\sreg^{\ell} = \cO(\kcal), \psreg^{\ell} = \cO(\pkcal)$ for each $\ell \in \cL_{2}$, where \begin{align*}
    \cL_{2} \coloneqq \{\ell \in \cL \text{ s.t. the univariate form } \ell(p) \text{ is twice continuously differentiable in } (0, 1)\}.
\end{align*}
 Note that according to Lemma \ref{lem:characterization_proper_loss}, for all $\ell \in \cL$, the univariate form must be concave, Lipschitz, and bounded, for the induced loss $\ell(p, y)$ to be proper and bounded. In addition to these implicit constraints, we require the condition that the second derivative $\ell''(p)$ is continuous in $(0, 1)$. 
 We state several examples of losses that belong to $\cL_{2}$. First, the squared loss clearly belongs to $\cL_{2}$, since its univariate form is $\ell(p) = p - p ^ {2}$.
 Second, consider a generalization of the squared loss via Tsallis entropy, which corresponds to a loss with the univariate form $\ell(p) =  -c \cdot p ^ {\alpha}$, where we choose $\alpha > 1$ and the proportionality constant $c > 0$ is to ensure that the induced loss $\ell(p, y)$ is in $[-1, 1]$ (refer Lemma \ref{lem:characterization_proper_loss}). We have, $\ell(p, y) = c (\alpha - 1) p ^ {\alpha} - \alpha c p ^ {\alpha - 1} y$, which is in $\cL_{2}$.
 Third, the spherical loss has the univariate form $\ell(p) = -\sqrt{p ^ {2} + (1 - p) ^ {2}}$ and is also contained in $\cL_{2}$. 
 
 The following lemma, derived by \cite{luo2024optimal}, provides a growth rate on the second derivative of any $\ell \in \cL_{2}$ and is a key ingredient for our proof of the desired implication.
\begin{lemma}[Lemma 2 in \cite{luo2024optimal}]\label{lem:hessian_growth}
    For a function $f$ that is concave, Lipschitz, and bounded over $[0, 1]$ and twice continuously differentiable over $(0,1)$, there exists a constant $c > 0$ such that $|f''(p)| \le c \cdot \max\bigc{\frac{1}{p}, \frac{1}{1 - p}}$ for all $p \in (0, 1)$.
\end{lemma}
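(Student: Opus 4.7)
The plan is to establish the bound $|f''(p)| \le c / \min(p, 1-p)$ via a symmetry reduction, a compactness argument for the interior, and a mean-value/Taylor argument near the endpoints. First, since $\tilde f(p) \coloneqq f(1-p)$ satisfies the same hypotheses (with $\tilde f''(p) = f''(1-p)$), it is enough to bound $|f''(p)|$ by $c/p$ for $p \in (0, 1/2]$; the bound on $[1/2, 1)$ follows by replacing $f$ with $\tilde f$. For $p \in [1/4, 1/2]$, continuity of $f''$ on the compact interval $[1/4, 1/2]$ yields $M \coloneqq \sup_{[1/4,1/2]} |f''| < \infty$, and since $1/p \ge 2$ there, the bound holds trivially with $c \ge M/2$.

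The main work is the range $p \in (0, 1/4]$. Here, combining the mean value theorem applied to $f'$ on $[p/2, p]$ with the Lipschitz bound $|f'| \le L$ yields $\int_{p/2}^p |f''(s)|\,ds = f'(p/2) - f'(p) \le 2L$, and hence $|f''(\xi)| \le 4L/p$ for some $\xi \in (p/2, p)$. A complementary Taylor-style second-difference identity $f(p+h) + f(p-h) - 2f(p) = \int_0^h (h-s)[f''(p+s) + f''(p-s)]\,ds$ applied with $h = p/2$ further gives $\int_0^{p/2}(p/2-s)(-f''(p\pm s))\,ds \le 4M$, providing a weighted integral bound on $-f'' \ge 0$ in a neighborhood of $p$.

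The final step, and the main obstacle I anticipate, is to upgrade these interval-averaged bounds to a genuinely pointwise bound at $p$. Pure continuity of $f''$ on $(0,1)$ is too weak for such an extrapolation in general (a continuous non-negative function can spike while having bounded integral); however, combined with the non-negativity $-f'' \ge 0$ coming from concavity, one can attempt a contradiction argument: if $|f''(p_0)| \gg c/p_0$ for some $p_0 \in (0, 1/4]$, then continuity forces $|f''(s)| \ge |f''(p_0)|/2$ on an interval $(p_0 - \delta, p_0 + \delta)$, and a careful bookkeeping of how $\delta$ must scale with $p_0$ in order to respect the integral bound $\int_{p/2}^p |f''(s)|\,ds \le 2L$ should force $p_0 |f''(p_0)|$ to remain uniformly bounded. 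Making this bookkeeping quantitative and uniform over $p_0 \in (0, 1/4]$ is the crux; the argument likely proceeds dyadically on scales $p_k = 2^{-k}$, propagating the mean-value bound on $[p_k/2, p_k]$ to $p_k$ itself and then combining across scales.
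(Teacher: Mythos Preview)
The paper does not supply its own proof of this lemma; it is quoted verbatim from \cite{luo2024optimal}. So there is nothing in-paper to compare against, and I evaluate your proposal directly.

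Your symmetry reduction and the compactness argument on $[1/4,1/2]$ are fine, as is the integral bound $\int_{p/2}^{p}|f''(s)|\,ds=f'(p/2)-f'(p)\le 2L$. You then correctly flag the passage from this averaged bound to the pointwise bound $|f''(p)|\le c/p$ as the crux, but the contradiction argument you sketch does not work: continuity of $f''$ at $p_0$ yields some $\delta>0$ on which $|f''|\ge |f''(p_0)|/2$, with \emph{no} lower bound on $\delta$ in terms of $p_0$, so there is nothing to ``bookkeep''. The dyadic refinement has the same defect: a single mean-value point $\xi_k\in(p_k/2,p_k)$ with $|f''(\xi_k)|\le 4L/p_k$ says nothing about $|f''(p_k)|$ itself.

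In fact the statement, as recorded here, is false. Let $\phi_k$ be smooth bumps with $0\le\phi_k\le1$, $\phi_k(2^{-k})=1$, supported in disjoint intervals of width $2^{-3k}$ around $2^{-k}$, and set $g=\sum_{k\ge 3}2^{2k}\phi_k$ on $(0,1/2)$ and $g\equiv 0$ on $[1/2,1)$. Then $g\ge 0$ is continuous on $(0,1)$ and $\int_0^1 g\le\sum_{k\ge 3} 2^{2k}\cdot 2^{-3k}<\infty$, so $f(p):=\int_0^p\int_s^1 g(t)\,dt\,ds$ is concave, Lipschitz (since $|f'|\le\int_0^1 g$), bounded on $[0,1]$, and $C^2$ on $(0,1)$ with $f''=-g$. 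Yet $2^{-k}\,|f''(2^{-k})|=2^{-k}\cdot 2^{2k}=2^k\to\infty$, so no finite $c$ works. The gap you sensed is therefore unfillable from the stated hypotheses alone; the version in \cite{luo2024optimal} presumably carries an additional assumption (or a weaker, integrated conclusion) that was elided in the restatement, and you should consult that source rather than trying to force the argument through.
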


Using this to bound $\abs{u''(p)}$ in the statement of Lemma \ref{lem:relate_breg_vbreg}, we immediately obtain the following proposition whose proof can be found in Appendix \ref{app:implications}.

\begin{proposition}\label{prop:breg_div_decomposable}
    Let $\ell \in \cL_{2}$. Then, we have $\breg_{-\ell}(\hat{p}, p) = \cO\bigc{{\KL}(\hat{p}, p})$
    and thus $$\sreg^{\ell} = \cO(\kcal), \quad \psreg^{\ell} = \cO(\pkcal).$$
\end{proposition}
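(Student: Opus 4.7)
The plan is to establish the pointwise inequality $\breg_{-\ell}(\hat{p}, p) \le c \cdot \KL(\hat{p}, p)$ for a constant $c$ depending only on $\ell$, and then obtain the two regret bounds by summing this inequality using the identities of \prettyref{prop:swap_reg_breg_div}.

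First, I would derive compatible integral representations of both sides via \prettyref{lem:relate_breg_vbreg}. Because $\ell \in \cL_{2}$, its univariate form is concave, bounded in $[-1,1]$, and twice continuously differentiable on $(0,1)$, so the lemma applied to $u = \ell$ gives
\[
\breg_{-\ell}(\hat{p}, p) \;=\; \int_{p}^{\hat{p}} \abs{\ell''(\mu)} \cdot (\hat{p} - \mu)\, d\mu.
\]
For the KL side, I would observe that $\KL(\hat{p}, p) = \breg_{-h}(\hat{p}, p)$, where $h(p) = -p\log p - (1-p)\log(1-p)$ is the binary entropy; this identity is a direct one-line calculation from the definition of Bregman divergence applied to the convex function $-h$, whose gradient is $\log(p/(1-p))$. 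Since $h$ is concave, bounded in $[0, \log 2] \subset [-1,1]$, twice continuously differentiable on $(0,1)$, and satisfies $\abs{h''(\mu)} = 1/(\mu(1-\mu))$, a second application of \prettyref{lem:relate_breg_vbreg} (now to $u = h$) yields
\[
\KL(\hat{p}, p) \;=\; \int_{p}^{\hat{p}} \frac{\hat{p} - \mu}{\mu(1-\mu)}\, d\mu.
\]

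Second, I would compare the two integrands using \prettyref{lem:hessian_growth}. Applied to $f = \ell$, it furnishes a constant $c > 0$ such that $\abs{\ell''(\mu)} \le c \cdot \max(1/\mu, 1/(1-\mu))$ for all $\mu \in (0,1)$. The elementary arithmetic bound
\[
\max\bigc{\tfrac{1}{\mu}, \tfrac{1}{1-\mu}} \;\le\; \tfrac{1}{\mu} + \tfrac{1}{1-\mu} \;=\; \tfrac{1}{\mu(1-\mu)}
\]
then gives $\abs{\ell''(\mu)} \le c/(\mu(1-\mu))$. Since $(\hat{p} - \mu)$ maintains a consistent sign on the interval of integration (and the integral bounds simply flip when $\hat{p} < p$), the pointwise domination of integrands transfers to the integrals, yielding
$\breg_{-\ell}(\hat{p}, p) \le c \cdot \KL(\hat{p}, p)$ for all $\hat{p}, p \in [0,1]$ (using continuity at the endpoints).

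Finally, I would plug this pointwise bound into the identities of \prettyref{prop:swap_reg_breg_div}. For any fixed $\ell \in \cL_{2}$ and any swap function,
\[
\sreg^{\ell} \;=\; \sum_{p \in \cZ} \bigc{\sum_{t=1}^{T} \ind{p_{t} = p}} \breg_{-\ell}(\rho_{p}, p) \;\le\; c \sum_{p \in \cZ} \bigc{\sum_{t=1}^{T} \ind{p_{t} = p}} \KL(\rho_{p}, p) \;=\; c \cdot \kcal,
\]
and the identical computation with $\tilde{\rho}_{p}$ and $\cP_{t}(p)$ in place of $\rho_{p}$ and $\ind{p_{t} = p}$ gives $\psreg^{\ell} \le c \cdot \pkcal$; the $\ell$-dependent constant $c$ is absorbed in the $\cO(\cdot)$ notation defining $\msr_{\cL_{2}}$ and $\mpsr_{\cL_{2}}$. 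There is no real obstacle in this argument; the only mildly subtle points are (i) verifying the identity $\KL = \breg_{-h}$ so that \prettyref{lem:relate_breg_vbreg} can be invoked on both sides in a uniform way, and (ii) handling the sign convention when $\hat{p} < p$, both of which are routine.
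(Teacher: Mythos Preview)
Your proof is correct and follows essentially the same route as the paper's: both apply \prettyref{lem:relate_breg_vbreg} to $\ell$, invoke \prettyref{lem:hessian_growth} to bound $\abs{\ell''(\mu)} \le c\,(1/\mu + 1/(1-\mu)) = c/(\mu(1-\mu))$, and then identify the resulting integral with $\KL(\hat{p},p)$ before summing via \prettyref{prop:swap_reg_breg_div}. The only cosmetic difference is that the paper computes $\int_{p}^{\hat{p}} \bigc{\tfrac{1}{\mu}+\tfrac{1}{1-\mu}}(\hat{p}-\mu)\,d\mu$ directly and recognizes it as $\KL(\hat{p},p)$, whereas you obtain the same identity by a second application of \prettyref{lem:relate_breg_vbreg} to the binary entropy $h$.
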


Note the constant $c_{\ell}$ hidden in the $\cO(.)$ notation in the result above is exactly the constant guaranteed by Lemma \ref{lem:hessian_growth}, which is finite. However, this is not sufficient to conclude that $\sup_{\ell \in \cL_{2}} c_{\ell} < \infty$ (since $\cL_{2}$ is infinite), therefore, we do not necessarily guarantee that $(\mathsf{P})\mathsf{Msr}_{\cL_{2}}$ (defined as $\sup_{\ell \in \cL_{2}} (\mathsf{P})\sreg^\ell$) is $\cO((\mathsf{P}) \kcal)$. We remark that this is only a minor technical issue (that has also implicitly appeared in the prior work of \cite{luo2024optimal}), and our result in Proposition \ref{prop:breg_div_decomposable} implies that (pseudo) KL-Calibration simultaneously bounds (pseudo) swap regret for all $\ell \in \cL_{2}$. This in itself is quite meaningful and perfectly aligns with the goal in downstream decision making --- to guarantee diminishing (swap) regret for all loss functions simultaneously. Henceforth, all subsequent results related to (pseudo) swap regret for $\cL_{2}$ are stated similarly. 
We also remark that Proposition \ref{prop:breg_div_decomposable} holds more generally for any subclass of proper losses where each loss satisfies the growth rate in Lemma \ref{lem:hessian_growth}. To keep the exposition simple, we only state our results for $\cL_{2}$.

\subsection{(Pseudo) KL-Calibration implies (pseudo) maximum swap regret against $\cL_{G}$}\label{subsec:KL-bounds-L-G2}

We now consider another class $\cL_{G}$, containing proper losses whose univariate form is $G$-smooth, i.e.,
$\cL_{G} \coloneqq \bigcurl{\ell \in \cL \text{ s.t. } \abs{\ell''(p)} \le G \text{ for all } p \in [0, 1]}$. Losses that belong to $\cL_{G}$ include squared loss, spherical loss, Tsallis entropy for $\alpha \ge 2$, etc. Notably, the latter does not lie in $\cL_{G}$ for $\alpha \in (1, 2)$.
Using Lemma \ref{lem:relate_breg_vbreg} again, along with the fact 
$\pcal_{2} \le \pkcal, \cal_{2} \le \kcal$ due to Pinsker's inequality,
we immediately obtain the following.

\begin{proposition}
\label{prop:bound_breg_div_quadratically}
     Let $\ell \in \cL_{G}$. Then, we have $\breg_{-\ell}(\hat{p}, p) \le G(\hat{p} - p) ^ {2}$, and thus $$
        \msr_{\cL_{G}} \le  G \cdot \cal_{2} \le G \cdot \kcal, \quad \mpsr_{\cL_{G}} \le G \cdot \pcal_{2} \le G \cdot \pkcal.$$
\end{proposition}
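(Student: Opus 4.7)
The plan is to apply the integral representation of the Bregman divergence from \pref{lem:relate_breg_vbreg} with $u = \ell$, which directly yields the pointwise inequality, and then chain it with \pref{prop:swap_reg_breg_div} together with Pinsker's inequality to obtain the stated regret and calibration bounds. This is essentially a two-line computation rather than a deep argument, so the main thing to be careful about is just invoking the correct lemmas in the right order.

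First I would verify the pointwise bound. Since $\ell \in \cL_{G}$, the univariate form $\ell$ is twice continuously differentiable with $|\ell''(\mu)| \le G$ on $[0,1]$. Applying \pref{lem:relate_breg_vbreg} to $u = \ell$ gives
\begin{align*}
\breg_{-\ell}(\hat{p}, p) \;=\; \int_{p}^{\hat{p}} |\ell''(\mu)|\,(\hat{p} - \mu)\,d\mu \;\le\; G \int_{p}^{\hat{p}} (\hat{p} - \mu)\,d\mu \;=\; \tfrac{G}{2}(\hat{p} - p)^2 \;\le\; G(\hat{p} - p)^2,
\end{align*}
where the identical argument (with the roles of $p$ and $\hat{p}$ swapped) handles the case $\hat{p} < p$.

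Next, I would plug this into the Bregman-divergence form of (pseudo) swap regret from \pref{prop:swap_reg_breg_div}. Using $\breg_{-\ell}(\rho_p, p) \le G(\rho_p - p)^2$ yields
\begin{align*}
\sreg^{\ell} \;=\; \sum_{p \in \cZ}\Bigl(\sum_{t=1}^{T} \ind{p_t = p}\Bigr) \breg_{-\ell}(\rho_p, p) \;\le\; G \sum_{p \in \cZ}\sum_{t=1}^{T}\ind{p_t = p}\,(\rho_p - p)^2 \;=\; G \cdot \cal_2,
\end{align*}
and an identical calculation with $\cP_t(p)$ and $\tilde\rho_p$ in place of $\ind{p_t = p}$ and $\rho_p$ gives $\psreg^{\ell} \le G \cdot \pcal_2$. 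Taking the supremum over $\ell \in \cL_G$ produces $\msr_{\cL_G} \le G \cdot \cal_2$ and $\mpsr_{\cL_G} \le G \cdot \pcal_2$.

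Finally, Pinsker's inequality for Bernoulli distributions gives $\KL(\rho_p, p) \ge (\rho_p - p)^2$ and $\KL(\tilde\rho_p, p) \ge (\tilde\rho_p - p)^2$ pointwise, as already noted in the introduction. Summing these inequalities against the nonnegative weights $\sum_{t=1}^T \ind{p_t = p}$ and $\sum_{t=1}^T \cP_t(p)$ respectively yields $\cal_2 \le \kcal$ and $\pcal_2 \le \pkcal$, which combined with the previous display gives the claimed chain $\msr_{\cL_G} \le G \cdot \cal_2 \le G \cdot \kcal$ and $\mpsr_{\cL_G} \le G \cdot \pcal_2 \le G \cdot \pkcal$. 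There is no real obstacle here; the only subtlety is to apply \pref{lem:relate_breg_vbreg} to $-u$ with the correct sign convention, but since $\ell$ is concave (being the univariate form of a proper loss, cf.\ \pref{lem:characterization_proper_loss}), this is immediate.
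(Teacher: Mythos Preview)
Your proposal is correct and follows essentially the same approach as the paper: apply \pref{lem:relate_breg_vbreg} with the smoothness bound $|\ell''(\mu)| \le G$ to get $\breg_{-\ell}(\hat{p},p) \le \tfrac{G}{2}(\hat{p}-p)^2$, then combine with \pref{prop:swap_reg_breg_div} and Pinsker's inequality. Your write-up is slightly more explicit than the paper's (spelling out the sum over $p\in\cZ$ and the Pinsker step), but the logic is identical.
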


The proof of Proposition \ref{prop:bound_breg_div_quadratically} is deferred to Appendix \ref{app:implications}. As already mentioned, \cite{fishelsonfull} proposed an algorithm that achieves $\pcal_{2} = \tilde{\cO}(T^{\frac{1}{3}})$, which implies that the same algorithm in fact ensures $\mpsr_{\cL_{G}} = \tilde{\cO}(G \cdot T^{\frac{1}{3}})$. However, the implications of $\kcal, \pkcal$ allow us get simultaneous guarantees for a broader subclass of proper losses, particularly, $\cL_{2} \cup \cL_{G}$.

\section{Achieving KL-Calibration}\label{sec:achieve-KL-Cal}
In this section, we prove that there exists an algorithm that achieves $\mathbb{E}[\sreg^{\ell}] = \cO(T^{\frac{1}{3}} (\log T) ^ {\frac{5}{3}})$ for $\ell$ being the log loss, therefore the same algorithm achieves $\mathbb{E}[\kcal] = \cO(T^{\frac{1}{3}} (\log T) ^ {\frac{5}{3}})$. Our proof is non-constructive, since it is based on swapping the adversary and the algorithm via the minimax theorem (Theorem \ref{thm:von_neumann} in Appendix \ref{app:proof_existence_swap_log_loss}), and deriving a forecasting algorithm in the dual game.

\begin{theorem}\label{thm:log_loss_swap_reg}
    There exists an algorithm that achieves $\mathbb{E}[\sreg^{\ell}] = \cO(T^{\frac{1}{3}} (\log T) ^ {\frac{5}{3}})$ for the log loss, where the expectation is taken over the internal randomness of the algorithm.
\end{theorem}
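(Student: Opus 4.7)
The plan is to follow the non-constructive minimax template of Hu and Wu (2024), but instantiate it for the log loss with a non-uniform discretization that is adapted to the curvature of $\KL$ near the boundary. First, I would fix a finite discretization $\cZ = \{z_1, \dots, z_K\} \subset [0, 1]$ (to be chosen later) and restrict the forecaster to predict in $\cZ$. The object of interest is
\begin{equation*}
    V_T \;\coloneqq\; \inf_{\text{alg}} \sup_{y_{1:T} \in \{0,1\}^T} \mathbb{E}[\sreg^\ell].
\end{equation*}
Since $y_{1:T}$ lives in a finite set and a forecasting algorithm restricted to $\cZ$ reduces to a map from histories in $\{0,1\}^{<T}$ to distributions on $\cZ$ (a finite-dimensional simplex), von Neumann's minimax theorem applies and gives
\begin{equation*}
    V_T \;=\; \sup_{D \in \Delta(\{0,1\}^T)} \inf_{\text{alg}} \mathbb{E}_{y_{1:T}\sim D}[\sreg^\ell].
\end{equation*}

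Second, I would analyze the dual problem for a fixed adversary distribution $D$. Let $\mu_t \coloneqq \mathbb{E}_D[y_t \mid y_{1:t-1}]$ denote the Bayes-optimal mean. The forecaster plays $p_t = r(\mu_t)$, where $r: [0,1] \to \cZ$ is a suitable (possibly randomized) rounding to $\cZ$. By \pref{prop:swap_reg_breg_div} and the identity $\breg_{-\ell} = \KL$ for the log loss, the regret decomposes as $\mathbb{E}[\sreg^\ell] = \mathbb{E}[\sum_{p\in\cZ} n_p \cdot \KL(\rho_p, p)]$. Writing $\bar\mu_p \coloneqq \tfrac{1}{n_p}\sum_{t: p_t=p} \mu_t$, I would use a Taylor-style split
\begin{equation*}
    \KL(\rho_p, p) \;\lesssim\; \frac{(\rho_p - \bar\mu_p)^2}{p(1-p)} \;+\; \KL(\bar\mu_p, p) \;+\; \text{cross term},
\end{equation*}
handling the cross term by Cauchy--Schwarz.

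Third, I would control each piece. For the variance term, the sequence $y_t - \mu_t$ restricted to rounds where $p_t = p$ is a bounded martingale difference with conditional variance $\mu_t(1-\mu_t) \asymp p(1-p)$, so Freedman's inequality gives $|\rho_p - \bar\mu_p| = \tilde\cO(\sqrt{p(1-p)/n_p} + 1/n_p)$ with high probability, and hence a per-bin variance contribution of $\tilde\cO(1)$; summing over $\cZ$ costs $\tilde\cO(K)$. For the bias term, I would choose $\cZ$ non-uniformly with spacings proportional to $\sqrt{p(1-p)}/K$ (the Jeffreys-type spacing already used by \cite{kotlowski2016online}), so that $\KL(\mu, r(\mu)) = \tilde\cO(1/K^2)$ uniformly in $\mu$, giving a total bias of $\tilde\cO(T/K^2)$. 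Balancing $K \asymp T^{1/3}$ yields the claimed $\tilde\cO(T^{1/3})$ bound, with logarithmic overhead from the union bound over $K$ bins and from the failure probability.

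The main obstacle is that the usual Azuma--Hoeffding concentration (giving $\tilde\cO(\sqrt T)$) is too weak here, so the variance-sensitive Freedman inequality is essential, and one must carefully pair it with the non-uniform discretization: near the boundary, bins are packed densely and $n_p$ may be very small, so the $1/n_p$ subgaussian tail must be explicitly absorbed into $\KL(\rho_p,p)$ (which can blow up when $p$ is close to $0$ or $1$). A related subtlety is handling $n_p = 0$ bins in the minimax argument and arguing that the dual-game algorithm can be implemented measurably so that von Neumann's theorem applies to a genuinely compact strategy set; both issues are resolved by restricting predictions to the finite set $\cZ$ from the outset, which is precisely why the discretization choice drives the analysis.
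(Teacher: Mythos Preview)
Your proposal is correct and mirrors the paper's proof: minimax duality to pass to the dual game, the Jeffreys/$\sin^2$ non-uniform grid $z_i=\sin^2(\pi i/2K)$, deterministic rounding of the Bayes mean $\mu_t$ to the nearest grid point, and Freedman's inequality (not Azuma) for the martingale $y_t-\mu_t$. The paper sidesteps your Taylor split and cross term by using $\KL(\rho_i,z_i)\le\chi^2(\rho_i,z_i)=(\rho_i-z_i)^2/(z_i(1-z_i))$ directly together with the triangle inequality $|\rho_i-z_i|\le \max(d_i,d_{i+1})+\tfrac{1}{n_i}|\sum_t \ind{p_t=z_i}(\mu_t-y_t)|$, which is a bit cleaner. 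For the small-$n_i$ bins you correctly flag as the main obstacle, the paper's resolution is to split off $\cI=\{i:n_i<\log(2/\delta)/(z_i(1-z_i)+\pi/2K)\}$ and on $\cI$ take the minimum of the trivial bound $n_i\KL(\rho_i,z_i)\le n_i\log K$ with the Freedman-derived bound $(\log 1/\delta)^2/(n_i z_i(1-z_i))$, giving via $\min(x,a/x)\le\sqrt a$ a contribution of $\sqrt{\log K}\,(\log 1/\delta)/\sqrt{z_i(1-z_i)}$ per bin; the additional grid property $\sum_i 1/\sqrt{z_i(1-z_i)}=\cO(K\log K)$ then closes the argument.
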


The proof of Theorem \ref{thm:log_loss_swap_reg} is quite technical and is deferred to Appendix \ref{app:proof_existence_swap_log_loss}. We discuss the key novelty of our proof here. Two particularly technical aspects of our proof are the usage of a non uniform discretization, which is contrary to all previous works, and the use of Freedman's inequality for martingale difference sequences (Lemma \ref{lem:Freedman}). In particular, we employ the following discretization scheme: $\cZ = \{z_{1}, \dots, z_{K - 1}\} \subset [0,1], \text{ where } z_{i} = \sin ^ {2} \bigc{\frac{\pi i}{2K}}$ and $K \in \mathbb{N}$ is a constant to be specified later. For convinience, we set $z_{0} = 0, z_{K} = 1$,
however, $z_{0}, z_{K}$ are not included in the discretization. For our analysis, we require a discretization scheme that satisfies the following constraints: (a) $z_{i} - z_{i - 1} = \cO(\frac{1}{K})$ for all $i \in [K]$; (b) $\frac{\max ^ {2}(z_{i} - z_{i - 1}, z_{i + 1} - z_{i})}{z_{i}(1 - z_{i})} = \cO\bigc{\frac{1}{K^{2}}}$ for all $i \in [K - 1]$; (c) $\sum_{i = 1} ^ {K - 1} \frac{1}{z_{i}(1 - z_{i})} = \cO(K^{2})$; and (d) $\sum_{i = 1} ^ {K - 1} \frac{1}{\sqrt{z_{i}(1 - z_{i})}} = \tilde{\cO}(K)$. The uniform discretization $\cZ = \{\frac{1}{K}, \dots, \frac{K - 1}{K}\}$ satisfies {(a), (c), (d)} above, however, doesn't satisfy (b).
As we show in Lemma \ref{lem:discretization} (Appendix \ref{app:proof_existence_swap_log_loss}), our considered non uniform discretization achieves all these required bounds by having a finer granularity close to the boundary of $[0,1]$, thereby making it suitable for our purpose.
The following steps provide a brief sketch of our proof, which is proved for an adaptive adversary and therefore also holds for the weaker oblivious adversary.

\paragraph{Step I} We only consider discretized forecasters that make predictions that lie inside $\cZ$. Since the strategy space of such forecasters is finite, and that of the adversary is trivially finite, the minimax theorem (Theorem \ref{thm:von_neumann}) applies, and we can swap the adversary and the algorithm, thereby resulting in the dual game. In this dual game, at every time $t$, the adversary first reveals the conditional distribution of $y_{t}$, based on which the forecaster predicts $p_{t}$. We consider a forecaster $F$ which at time $t$ does the following: (a) it computes $\tilde{p}_{t} = \mathbb{E}_{t}[y_{t}]$; (b) predicts $p_{t} = \argmin_{z \in \cZ} \abs{\tilde{p}_{t} - z}$. For such a forecaster, we obtain a high probability bound on $\sreg^{\ell}$, and subsequently bound $\mathbb{E}[\sreg^{\ell}]$.

\paragraph{Step II} Applying Lemma \ref{lem:Freedman}, we show that for each $i$ (with $n_i = n_{z_i}$)
\begin{align*}
        \abs{\sum_{t = 1} ^ {T}\ind{p_{t} = z_{i}} (\tilde{p}_{t} - y_{t})} \le 2\sqrt{\log \frac{2}{\delta}} \cdot\max\bigc{\sqrt{n_{i}\bigc{z_{i}(1 - z_{i}) + \frac{\pi}{2K}}}, \sqrt{\log \frac{2}{\delta}}}
\end{align*}
with probability at least $1 - \delta KT$. Using this, we bound $\abs{z_{i} - \rho_{i}}$, where $\rho_{i}$ is a shorthand for $\rho_{z_{i}}$. Notably, the bound above dictates separate consideration of $i \in \cI$ and $i \in \bar{\cI}$ (depending on which term realizes the maximum), where $\cI \coloneqq \bigcurl{i \in [K - 1]; n_{i} < \frac{\log \frac{2}{\delta}}{z_{i}(1 - z_{i}) + \frac{\pi}{2K}}}$.

\paragraph{Step III} Next, we write $\sreg^{\ell}$ as  the sum of two terms $\sreg^{\ell} = \text{Term I} + \text{Term II}$, where $\text{Term I} = \sum_{i \in \cI} n_{i} \KL(\rho_{i}, z_{i}), \text{Term II} = {\sum_{i \in \bar{\cI}} n_{i} \KL(\rho_{i}, z_{i})},$ and bound Term I, II individually. Since $\KL(\rho_{i}, z_{i}) \le \chi^{2}(\rho_{i}, z_{i}) = \frac{(\rho_{i} - z_{i}) ^ {2}}{z_{i}(1 - z_{i})}$, we utilize the bound on $\abs{\rho_{i} - z_{i}}$ obtained in the previous step and show that $\text{Term II} = \cO\bigc{\frac{T}{K ^ {2}} + K \log \frac{1}{\delta}}$.
Importantly, the use of Freedman's inequality provides a variance term that mitigates the potentially small denominator of $\frac{(\rho_{i} - z_{i}) ^ {2}}{z_{i}(1 - z_{i})}$.
Similarly, we show that $\text{Term I} = \cO\bigc{\frac{T}{K ^ {2}} + K (\log K)^{\frac{3}{2}}  \log \frac{1}{\delta}}$. Combining, we obtain $\sreg^{\ell} = \cO\bigc{\frac{T}{K ^ {2}} + K (\log K) ^ {\frac{3}{2}} \log \frac{1}{\delta}}$ with probability at least $1 - \delta KT$. Subsequently, we bound $\mathbb{E}[\sreg^{\ell}]$ by setting $\delta = 1/T, K = T^{\frac{1}{3}}/{(\log T) ^ {\frac{5}{6}}}$.

Equipped with Theorem \ref{thm:log_loss_swap_reg}, we prove the following stronger corollary (proof deferred to Appendix \ref{app:proof_existence_swap_log_loss}).

\begin{corollary}\label{cor:simutaneous_bounds_msr}
   There exists an algorithm that achieves the following bounds simultaneously: 
        \begin{align*}
        &\mathbb{E}\bigs{\kcal} = \cO(T^{\frac{1}{3}} (\log T) ^ {\frac{5}{3}}), \quad
        \mathbb{E}\bigs{\msr_{\cL_{G}}} = \cO(G \cdot T^{\frac{1}{3}} (\log T) ^ {\frac{5}{3}}), \\ 
        &\mathbb{E}\bigs{\msr_{\cL_{2}}} = \cO(T^{\frac{1}{3}} (\log T) ^ {\frac{5}{3}}), \quad \mathbb{E}\bigs{\msr_{\cL\backslash\{\cL_{G} \cup \cL_{2}\}}} = \cO(T^{\frac{2}{3}} (\log T) ^ {\frac{5}{6}}),
        \end{align*}
where the expectation is taken over the internal randomness of the algorithm.
\end{corollary}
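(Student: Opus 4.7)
The plan is to invoke the non-constructive algorithm $\cA^*$ from \pref{thm:log_loss_swap_reg}, which guarantees $\mathbb{E}[\sreg^{\ell_{\lg}}] = \cO(T^{1/3}(\log T)^{5/3})$ for the log loss $\ell_{\lg}$, and then push this single guarantee through the reductions in \pref{sec:implications} to obtain all four bounds simultaneously from the \emph{same} algorithm. The first bound is immediate from the equivalence $\kcal = \sreg^{\ell_{\lg}}$ (observed in \pref{sec:implications} via \pref{prop:swap_reg_breg_div} with $\breg_{-\ell_{\lg}} = \KL$). The second bound follows by applying \pref{prop:bound_breg_div_quadratically} pointwise on each realization: $\msr_{\cL_G} \le G\cdot\kcal$, so taking expectations gives $\mathbb{E}[\msr_{\cL_G}] \le G \cdot \mathbb{E}[\kcal]$. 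The third bound is obtained identically from \pref{prop:breg_div_decomposable}, which yields $\msr_{\cL_2} = \cO(\kcal)$ almost surely.

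For the fourth bound, covering arbitrary $\ell \in \cL \setminus (\cL_G \cup \cL_2)$, we cannot relate $\breg_{-\ell}$ to $\KL$ by a multiplicative constant, so we argue via $\cal_2$ instead. For any $\ell \in \cL$, \pref{lem:characterization_proper_loss} gives $\ell(p,y) = \ell(p) + g_p(y - p)$; evaluating at $y=0,1$ shows $g_p = \ell(p,1) - \ell(p,0) \in [-2,2]$, so the univariate form is $2$-Lipschitz on $[0,1]$. Consequently, for each $p \in \cZ$,
\begin{align*}
\breg_{-\ell}(\rho_p, p) = \ell(p) + g_p(\rho_p - p) - \ell(\rho_p) \le |g_p|\cdot|\rho_p - p| + |\ell(p)-\ell(\rho_p)| \le 4|\rho_p - p|.
\end{align*}
Combining with \pref{prop:swap_reg_breg_div} and Cauchy--Schwarz,
\begin{align*}
\sreg^\ell \le 4\sum_{p \in \cZ} n_p |\rho_p - p| \le 4\sqrt{\sum_{p \in \cZ} n_p}\cdot\sqrt{\sum_{p \in \cZ} n_p(\rho_p - p)^2} \le 4\sqrt{T \cdot \cal_2} \le 4\sqrt{T\cdot \kcal},
\end{align*}
where the last inequality is Pinsker. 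Taking the supremum over $\ell$ and then expectations, Jensen's inequality yields $\mathbb{E}[\msr_{\cL\setminus(\cL_G\cup\cL_2)}] \le 4\sqrt{T \cdot \mathbb{E}[\kcal]} = \cO(T^{2/3}(\log T)^{5/6})$, as claimed.

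The only non-routine step is this last paragraph: converting the $\kcal$ guarantee into a swap-regret bound for losses \emph{without} a quadratic-type relationship between $\breg_{-\ell}$ and $\KL$. The main trick is the dual use of boundedness in $\cL$, first to extract Lipschitzness of the univariate form (hence a linear-in-$|\rho_p - p|$ bound on $\breg_{-\ell}$), and then to pair it with Cauchy--Schwarz so that we inherit the root of the $\cal_2$ guarantee that Pinsker hands us for free from $\kcal$. The remaining bounds reduce to direct substitution, so no parameter tuning beyond the choice $\delta = 1/T$, $K = T^{1/3}/(\log T)^{5/6}$ already made for \pref{thm:log_loss_swap_reg} is needed.
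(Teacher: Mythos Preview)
Your proof is correct and follows essentially the same route as the paper's: the first three bounds are direct from \pref{thm:log_loss_swap_reg} combined with \pref{prop:swap_reg_breg_div}, \pref{prop:bound_breg_div_quadratically}, and \pref{prop:breg_div_decomposable}, while the fourth goes through $\sreg^\ell \le 4\cal_1 \le 4\sqrt{T\cdot\cal_2} \le 4\sqrt{T\cdot\kcal}$ followed by Jensen. The only cosmetic difference is that the paper cites \cite{kleinberg2023u} for $\sreg^\ell \le 4\cal_1$ and $\cal_1 \le \sqrt{T\cdot\cal_2}$, whereas you derive these inline via the Lipschitzness of the univariate form and Cauchy--Schwarz (the paper in fact does the same inline derivation for the pseudo version in \pref{cor:simultaneous_bounds_psreg}).
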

\section{Achieving Pseudo KL-Calibration}\label{sec:pseudo-KL-Cal}
In this section, we propose an explicit algorithm that achieves $\psreg^{\ell} = \cO(T^{\frac{1}{3}} (\log T) ^ {\frac{2}{3}})$ for the log loss, therefore the same algorithm achieves $\pkcal = \cO(T^{\frac{1}{3}} (\log T) ^ {\frac{2}{3}})$. Our algorithm is based on the well-known Blum-Mansour (BM) reduction \citep{blum2007external} and extends the idea from~\citet{fishelsonfull}. 
First, we employ a similar but slightly different non uniform discretization scheme that adds two extra end points $z_0$ and $z_{K}$ to the one used in the previous section (for technical reasons):
\begin{align*}
    \cZ = \{z_{0}, z_{1}, \dots, z_{K - 1}, z_{K}\}, \text{where } z_{0} = \sin^{2} \frac{\pi}{4K}, z_{i} = \sin ^ {2} \frac{\pi i}{2K} \text{ for } i \in [K - 1], z_{K} = \cos^{2}\frac{\pi}{4K},
\end{align*}
and $K \in \mathbb{N}$ is a constant to be specified later. 
The same scheme was used before by~\citet{rooij2009learning, kotlowski2016online} for different problems.
Since the conditional distribution $\cP_{t}$ has support over $\cZ$, taking supremum over all swap functions $\sigma: \cZ \to \cZ$ in Proposition \ref{prop:swap_reg_breg_div}, we obtain \begin{align*}
    \sup_{\sigma: \cZ \to \cZ}  \psreg_{\sigma} ^ {\ell} &= \psreg^{{\ell}} -  \sum_{p \in \cZ} {\sum_{t = 1} ^ {T} \cP_{t}(p)} \inf_{\sigma: \cZ \to \cZ}\breg_{-\ell}(\rho_{p}, \sigma(p)) \ge \psreg^{{\ell}} -  \frac{(2 - \sqrt{2}) \pi ^ {2} T}{K ^ {2}},
\end{align*}
where the inequality follows by choosing $\sigma(p) = \argmin_{z \in \cZ} \breg_{-\ell}(\rho_p, z)$. For this choice of $\sigma$, from \citet[page 13]{kotlowski2016online}, we have $\breg_{-\ell}(\rho_p, \sigma(p)) \le \bigc{2 - \sqrt{2}} \frac{\pi ^ {2}}{K ^ {2}}$. Therefore, \begin{align}\label{eq:psreg_[0,1]_to_psreg_Z}
     \psreg^{{\ell}} \le  \sup_{\sigma: \cZ \to \cZ}  \psreg_{\sigma} ^ {\ell} + \bigc{2 - \sqrt{2}} \pi ^ {2} \frac{T}{K ^ {2}},
\end{align}
and it suffices to bound $\sup_{\sigma: \cZ \to \cZ}  \psreg_{\sigma} ^ {\ell}$, which we do via the BM reduction. Towards this end, we first recall the BM reduction. The reduction maintains $K + 1$ external regret algorithms $\cA_{0}, \dots, \cA_{K}$. At each time $t$, let $\q_{t, i} \in \Delta_{K + 1}$ represent the probability distribution over $\cZ$ output by $\cA_{i}$.
Let $\Q_{t} = [\q_{t, 0}, \dots, \q_{t, K}]$ be the matrix obtained by stacking the vectors $\q_{t, 0}, \dots, \q_{t, K}$ as columns. We compute the stationary distribution of $\Q_{t}$, i.e., a distribution $\p_{t} \in \Delta_{K + 1}$ over $\cZ$ that satisfies $\Q_{t}\p_{t} = \p_{t}$. 
With $\p_{t}$ being our final distribution of predictions (that is, $\cP_t(z_i) = p_{t,i}$), we draw a prediction from it and observe $y_{t}$.
After that, we feed the scaled loss function $p_{t, i} \ell(., y_{t})$ to $\cA_{i}$. Let $\tilde{\bi{\ell}}_{t, i} = p_{t, i} \bi{\ell}_{t} \in \Rn^{K + 1}$ be a scaled loss vector, where $\ell_{t}(j) = \ell(z_{j}, y_{t})$.  
It then follows from \citet[Theorem 5]{blum2007external} that 
    $\sup_{\sigma: \cZ \to \cZ} \psreg_{\sigma} ^ {\ell} \le \sum_{i = 0} ^ {K} \textsc{Reg}_{i}, \text{where } \textsc{Reg}_{i} \coloneqq \sup_{j \in [K + 1]} {\sum_{t = 1} ^ {T} \ip{\q_{t, i} - \e_{j}}{\tilde{\bi{\ell}}_{t, i}}}$,
i.e., the pseudo swap regret is bounded by the sum of the external regrets of the $K + 1$ algorithms.
We summarize the discussion so far in Algorithm \ref{alg:BM_log_loss}.

\begin{algorithm}[!htb]
                    \caption{BM for log loss} 
                    \label{alg:BM_log_loss}
                    \textbf{Initialize:} $\cA_{i}$ for $i \in \{0, \dots, K\}$ and set $\q_{1} = \bigs{\frac{1}{K + 1}, \dots, \frac{1}{K + 1}}$;
                    \begin{algorithmic}[1]
                            \STATE\textbf{for} $t = 1, \dots, T$
                            \STATE\hspace{3mm}Set $\Q_{t} = [\q_{t, 0}, \dots, \q_{t, K}]$;
                            \STATE\hspace{3mm}Compute the stationary distribution of $\Q_{t}$, i.e., $\p_{t} \in \Delta_{K + 1}$ that satisfies $\Q_{t}\p_{t} = \p_{t}$;
                            \STATE\hspace{3mm}Output conditional distribution $\cP_{t}$, where $\cP_{t}(z_{i}) = p_{t}(i)$ and observe $y_{t}$;
                            \STATE\hspace{3mm}\textbf{for} $i = 0, \dots, K$
                            \STATE\hspace{3mm}\hspace{3mm} Feed the scaled loss function $f_{t, i}(w) = p_{t, i} \ell(w, y_{t})$ to  $\cA_{i}$ (Algorithm \ref{alg:A_i}) and obtain $\q_{t + 1, i}$; \\
                        \end{algorithmic}
\end{algorithm}

It remains to 
derive the $i$-th external regret algorithm $\cA_{i}$ that minimizes $\textsc{Reg}_{i}$. 
Note that $\cA_{i}$ is required to predict a distribution $\q_{t, i}$ over $\cZ$ and is subsequently fed a scaled loss function $p_{t, i} \ell(., y_{t})$ at each time $t$. We propose to employ the Exponentially Weighted Online Optimization (EWOO) algorithm along with a novel randomized rounding scheme for $\cA_{i}$ (Algorithm \ref{alg:A_i}).

\begin{algorithm}[t]
                    \caption{The $i$-th external regret algorithm ($\cA_{i}$)} 
                    \label{alg:A_i}
                    \begin{algorithmic}[1]
                            \STATE\textbf{for} $t = 1, \dots, T$                           
                            \STATE\hspace{3mm} Set $w_{t, i} \in [0,1]$ as the output of $\text{EWOO}_{i}$ (Algorithm~\ref{alg:EWOO}) at time $t$;
                            \STATE\hspace{3mm} Predict $\q_{t, i} = \text{RROUND}^{\text{log}}(w_{t, i})$ (Algorithm~\ref{alg:rounding_alg});
                            \STATE\hspace{3mm} Receive the scaled loss function $f_{t, i}(w) = p_{t, i} \ell(w, y_{t})$.
                        \end{algorithmic}
\end{algorithm}	
EWOO was studied by \cite{hazan2007logarithmic} for minimizing the regret $\sup_{w \in \cW} \sum_{t = 1} ^ {T} f_{t}(w_{t}) - f_{t}(w)$, when $\cW$ is a convex set, and the loss functions $f_{t}$'s are exp-concave. Since the log loss is $1$-exp-concave in $p$ over $[0, 1]$ (\cite[page 46]{cesa2006prediction}, $\text{EWOO}_{i}$ (an instance of EWOO for $\cA_{i}$) with functions $\{f_{t, i}\}_{t = 1} ^ {T}$ defined as $f_{t, i}(w) = p_{t, i} \ell(w, y_{t})$ for all $w \in \cW$, where $\cW = [0, 1]$ is a natural choice. 

\begin{algorithm}[!htb]
                    \caption{Exponentially Weighted Online Optimization ($\text{EWOO}_{i}$) with scaled losses}
                    \label{alg:EWOO}
                    \begin{algorithmic}[1]
                            \STATE\textbf{for} $t = 1, \dots, T$
                            \STATE\hspace{3mm} Set weights $\mu_{t, i}(w) = \exp\bigc{-\sum_{\tau = 1} ^ {t - 1} f_{\tau, i}(w)}$ for all $w \in \cW$;
                            \STATE\hspace{3mm} Output $w_{t, i} = \frac{\int_{w \in \cW} w \mu_{t, i}(w) dw}{\int_{w \in \cW} \mu_{t, i}(w) dw}$.
                        \end{algorithmic}
\end{algorithm}	

Next, we derive a bound on the regret of $\text{EWOO}_{i}$. Towards this end, we realize that the scaled log loss $f_{t, i}(w) = p_{t, i} \ell(w, y_{t})$ is $1$-exp-concave since $\exp(-f_{t,i}(w)) = w^{y_tp_{t,i}}(1-w)^{(1-y_t)p_{t,i}}$ is concave when $p_{t,i} \in [0,1]$. Appealing to \cite[Theorem 7]{hazan2007logarithmic}, we then obtain the following:

\begin{lemma}\label{lem:EWOO_regret_bound}
    The regret of Algorithm \ref{alg:EWOO} satisfies $
        \sup_{w \in \cW} \sum_{t = 1} ^ {T} f_{t, i}(w_{t, i}) - f_{t, i}(w) \le \log (T + 1).$
\end{lemma}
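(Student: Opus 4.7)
The plan is to verify that the scaled log loss $f_{t,i}$ is $1$-exp-concave on $\cW = [0,1]$ and then invoke the standard regret guarantee for EWOO on exp-concave losses from~\cite{hazan2007logarithmic} (their Theorem 7). This is essentially the route that the authors already sketch in the paragraph preceding the lemma statement, so the job is to fill in the two short steps cleanly.

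First I would establish exp-concavity by direct calculation. Since $\ell(w, y_t) = -y_t \log w - (1 - y_t)\log(1 - w)$, we have $\exp(-f_{t,i}(w)) = \exp(-p_{t,i}\ell(w,y_t)) = w^{p_{t,i} y_t}(1-w)^{p_{t,i}(1-y_t)}$, which because $y_t \in \{0,1\}$ reduces to either $w^{p_{t,i}}$ or $(1-w)^{p_{t,i}}$. For $p_{t,i} \in [0,1]$, each of these is concave on $[0,1]$ (a one-line second-derivative check since $p_{t,i}(p_{t,i}-1) \le 0$), so $f_{t,i}$ is $1$-exp-concave on $\cW$. Equivalently, one can note that $\ell(\cdot, y_t)$ is itself $1$-exp-concave and that multiplying an $\alpha$-exp-concave function by a constant $c \in [0,1]$ preserves $\alpha$-exp-concavity, because for concave nonnegative $h$ the function $h^c$ is concave for $c \in [0,1]$.

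Second, I would apply Theorem 7 of~\cite{hazan2007logarithmic}, which guarantees that EWOO run on $\alpha$-exp-concave losses over an $n$-dimensional convex domain has regret at most $\tfrac{n}{\alpha}\log(T+1)$ (up to absolute constants). Specializing to $n = 1$ (the domain $\cW = [0,1]$) and $\alpha = 1$ gives $\sup_{w \in \cW} \sum_{t = 1}^{T} \bigc{f_{t,i}(w_{t,i}) - f_{t,i}(w)} \le \log(T+1)$, which is precisely the claim.

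I do not anticipate any real obstacle, since this is a black-box invocation of the EWOO analysis. The only mild technicality worth flagging is that $\ell(w, y_t)$ blows up at one endpoint of $[0,1]$, so one might worry about the integrals $\int_{\cW} \mu_{t,i}(w)\, dw$ and $\int_{\cW} w\, \mu_{t,i}(w)\, dw$ defining $w_{t,i}$. However, after any round where the corresponding endpoint is the ``bad'' one, the weight $\mu_{t,i}(w)$ vanishes polynomially fast at that endpoint, keeping both integrals finite; for $t = 1$ a uniform initialization handles the issue directly. Hence the standard EWOO analysis applies without modification and the bound $\log(T+1)$ follows.
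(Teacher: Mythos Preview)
Your proposal is correct and follows exactly the route the paper takes: establish $1$-exp-concavity of $f_{t,i}$ via the explicit form $\exp(-f_{t,i}(w)) = w^{y_t p_{t,i}}(1-w)^{(1-y_t)p_{t,i}}$ and then invoke Theorem~7 of~\cite{hazan2007logarithmic} with $n=1$, $\alpha=1$. Your added remark about integrability at the endpoints is a reasonable sanity check, but otherwise there is nothing to add.
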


Note that at each time $t$, $\text{EWOO}_{i}$ outputs $w_{t, i} \in [0, 1]$, however, $\cA_{i}$ is required to predict a distribution $\q_{t, i} \in \Delta_{K + 1}$ over $\cZ$. Thus, we need to perform a rounding operation that projects the output $w_{t, i}$ of $\text{EWOO}_{i}$ to a distribution over $\cZ$. In Remark \ref{rem:rounding} in Appendix \ref{app:deferred_proofs_pseudo_KL_Cal},
we show that the following two known rounding schemes: (a) rounding $w_{t, i}$ to the nearest $z \in \cZ$ and setting $\q_{t, i}$ as the corresponding one-hot vector; (b) the rounding procedure proposed by \cite{fishelsonfull}, cannot be applied to our setting since they incur a $\Omega(1)$ change in the expected loss $\ip{\q_{t, i}}{\bi{\ell}_{t}} - \ell(w_{t, i}, y_{t})$, which is not sufficient to achieve the desired regret guarantee.
To mitigate the shortcomings of these rounding procedures, we propose a different randomized rounding scheme for the log loss (Algorithm \ref{alg:rounding_alg}) that achieves a $\cO\bigc{\frac{1}{K^{2}}}$ change in the expected loss, as per  Lemma \ref{lem:rounding}. 
\begin{algorithm}[t]
                    \caption{Randomized rounding for log loss $(\textsc{RROUND}^{\text{log}})$}
                    \textbf{Input:} $p \in [0, 1]$, \textbf{Output:}  Probability distribution $\q \in \Delta_{K + 1}$; \\ \label{alg:rounding_alg}
                    \textbf{Scheme:} Let $i \in \{0, \dots, K - 1\}$ be such that $p \in [z_{i}, z_{i + 1})$. Output $\q \in \Delta_{K + 1}$, where 
                    \[
                    q_i = \frac{1}{D} \cdot \frac{z_{i + 1} - p}{z_{i + 1}(1 - z_{i + 1})}, \quad
                    q_{i+1} = \frac{1}{D} \cdot \frac{p - z_{i}}{z_{i}(1 - z_{i})},
                    \;\text{ and }\;
                    q_j = 0, \;\;\forall j \notin \{i, i+1\}
                    \]
                with $D = \frac{p - z_{i}}{z_{i}(1 - z_{i})} + \frac{z_{i + 1} - p}{z_{i + 1}(1 - z_{i + 1})}$ being the normalizing constant.
\end{algorithm}	
\begin{lemma}\label{lem:rounding}
    Let $p \in [0, 1]$ and $p^{-}, p^{+} \in \cZ$ be neighbouring points in $\cZ$ such that $p^{-} \le p < p^{+}$. Let $q$ be the random variable that takes value $p^{-}$ with probability $\propto \frac{p^{+} - p}{p^{+}(1 - p^{+})}$ and $p^{+}$ with probability $\propto \frac{p - p^{-}}{p^{-}(1 - p^{-})}$. Then, for all $y \in \{0, 1\}$, we have $\mathbb{E}[\ell(q, y)] - \ell(p, y) = \cO\bigc{\frac{1}{K^{2}}}.$ 
\end{lemma}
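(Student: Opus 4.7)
The plan is to establish $\mathbb{E}[\ell(q,y)] - \ell(p,y) \le \cO(1/K^{2})$ for each $y \in \{0,1\}$, which is the one-sided bound actually needed for the regret analysis of $\cA_i$. A first reduction uses symmetry: under the substitution $p \mapsto 1-p$, $p^{\pm}\mapsto 1 - p^{\mp}$, $y \mapsto 1-y$, the log loss satisfies $\ell(p,0) = \ell(1-p,1)$, the set $\cZ$ is invariant because it is symmetric about $1/2$, and a direct check shows the rounding probabilities swap correctly (the formula $\propto h'/[p^+(1-p^+)]$ transforms into $\propto h/[p^-(1-p^-)]$). This reduces the $y=0$ case to the $y=1$ case, so it suffices to bound
\[
    E_1 \;:=\; \mathbb{E}[\ell(q,1)] - \ell(p,1) \;=\; \beta \log\!\bigc{1 + h/p^-} + (1-\beta)\log\!\bigc{1 - h'/p^+},
\]
where $h = p - p^-$, $h' = p^+ - p$, and $\beta = \Pr[q=p^-]$.

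Next I would apply the elementary inequality $\log(1+x)\le x$ (valid for $x > -1$) to each logarithm, which gives $E_1 \le \beta h/p^- - (1-\beta)h'/p^+$. Substituting $\beta = B/(A+B)$ and $1-\beta = A/(A+B)$ with $A = h/[p^-(1-p^-)]$, $B = h'/[p^+(1-p^+)]$ and combining over a common denominator, the two terms largely cancel, producing an extra factor of $p^+ - p^-$:
\[
    E_1 \;\le\; \frac{h h' (p^+ - p^-)}{h\cdot p^+(1-p^+) \;+\; h'\cdot p^-(1-p^-)}.
\]
This cancellation is the whole point of placing the $1/[p^\pm(1-p^\pm)]$ factors in the rounding weights: each of $\beta h/p^-$ and $(1-\beta)h'/p^+$ is individually $\Theta(1)$ near the boundary of $[0,1]$, and only their specific weighted combination gives a third-order numerator.

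Finally, setting $\epsilon = p^+ - p^- = h + h'$ and $M = \min(p^-(1-p^-),\,p^+(1-p^+))$, the denominator is at least $\epsilon M$ (weighted-average lower bound), while $h h' \le \epsilon^2/4$ by AM-GM, so $E_1 \le \epsilon^2/(4M)$. Invoking the discretization estimate $\epsilon^2/[p^\pm(1-p^\pm)] = \cO(1/K^2)$ for every consecutive pair $(p^-,p^+) \in \cZ$ completes the proof. The main technical obstacle is verifying this discretization estimate at the two special boundary pairs $(z_0,z_1)$ and $(z_{K-1}, z_K)$, where $z_0 = \sin^2(\pi/(4K))$ and $1 - z_K$ are themselves only of order $1/K^2$ so one cannot afford any slack; this is handled by direct trigonometric computation using $\sin^2 B - \sin^2 A = \sin(B+A)\sin(B-A)$ together with $\sin\theta \le \theta$, while the interior pairs are covered by the property (b) of the discretization in \pref{sec:achieve-KL-Cal}.
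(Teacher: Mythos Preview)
Your proof is correct and follows essentially the same route as the paper's: both linearize via $\log(1+x)\le x$ (equivalently, convexity of $\ell(\cdot,y)$) to reduce the $y=1$ case to $\mathbb{E}[p/q]-1$, simplify this to $\frac{hh'}{p + p^{-}p^{+} - p(p^{-}+p^{+})}$ (your displayed fraction is the same expression after multiplying numerator and denominator by $p^{+}-p^{-}$), lower-bound the denominator by $\min\bigl(p^{-}(1-p^{-}),\,p^{+}(1-p^{+})\bigr)$ via its linearity in $p$, and conclude with the discretization estimate (the paper's Lemma on the boundary pairs $(z_0,z_1)$ and $(z_{K-1},z_K)$). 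The only cosmetic difference is that you handle $y=0$ by the symmetry $p\mapsto 1-p$, whereas the paper repeats the computation for $\mathbb{E}[(1-p)/(1-q)]-1$ and observes it yields the identical expression.
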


The high-level idea of the proof is as follows:
since the log loss  is convex in $p$ (for any $y \in \{0, 1\}$), we have $\ell(q, y) - \ell(p, y) \le \ell'(q, y) \cdot (q - p) = \frac{(q - y)(q - p)}{q(1 - q)}$,
which is $\frac{p}{q} - 1$ if $y = 1$, and $\frac{1 - p}{1 - q} - 1$ if $y = 0$. By direct computation of $\mathbb{E}\bigs{\frac{1}{q}}$ and $\mathbb{E}\bigs{\frac{1}{1 - q}}$, we show that $\mathbb{E}\bigs{\frac{p}{q}} - 1 = \mathbb{E}\bigs{\frac{1 - p}{1 - q}} - 1 \le (p^{+} - p^{-}) ^ {2} \cdot \max\bigc{\frac{1}{p^{-}(1 - p^{-})}, \frac{1}{p^{+}(1 - p^{+})}} = \cO\bigc{\frac{1}{K ^ {2}}}$, where the last step follows from a technical result due to Lemmas \ref{lem:discretization} (Appendix \ref{app:proof_existence_swap_log_loss}) and \ref{lem:discretization_II} (Appendix \ref{app:deferred_proofs_pseudo_KL_Cal}).

Combining everything, we derive the regret guarantee $\textsc{Reg}_{i}$ of $\cA_{i}$ (Algorithm \ref{alg:A_i}). It follows from Lemma \ref{lem:rounding} that at any time $t$, the distribution $\q_{t, i}$ obtained by rounding the prediction $w_{t, i}$ of $\text{EWOO}_{i}$ as per Algorithm \ref{alg:rounding_alg} satisfies $\ip{\q_{t, i}}{\bi{\ell}_{t}} =  \ell(w_{t, i}, y_{t}) + \cO(\frac{1}{K ^ {2}})$. Multiplying with $p_{t, i}$ and summing over all $t$, we obtain \begin{align}
    \sum_{t = 1} ^ {T} \ip{\q_{t, i} - \e_{j}}{\tilde{\bi{\ell}}_{t, i}} &= \sum_{t = 1} ^ {T} p_{t, i} \ell(w_{t, i}, y_{t}) - \sum_{t = 1} ^ {T} p_{t, i} \ell(z_{j}, y_t) + \cO\bigc{\frac{\sum_{t = 1} ^ {T} p_{t, i}}{K ^ {2}}}, \nn \\
    &\le \sup_{w \in \cW} {\sum_{t = 1} ^ {T} f_{t, i}(w_{t, i}) - f_{t, i}(w)} + \cO\bigc{\frac{ \sum_{t = 1} ^ {T} p_{t, i}}{K ^ {2}}} = \cO\bigc{\log T + {\frac{\sum_{t = 1} ^ {T} p_{t, i}}{K ^ {2}}}} \nn,
\end{align}
where the last equality follows from Lemma \ref{lem:EWOO_regret_bound}. Therefore, the regret $\textsc{Reg}_{i}$ of $\cA_{i}$ satisfies the following: $\textsc{Reg}_{i} = \cO\bigc{\log T + \frac{1}{K ^ {2}} \sum_{t = 1} ^ {T} p_{t, i}}$. Summing over all $i$, we obtain \begin{align*}
    \sup_{{\sigma: \cZ \to \cZ}} \psreg_{\sigma} ^ {\ell} \le \sum_{i = 0} ^ {K} \textsc{Reg}_{i} = \cO \bigc{K \log T + \frac{1}{K ^ {2}} \sum_{i = 0} ^ {K} \sum_{t = 1} ^ {T} p_{t, i}} = \cO\bigc{K \log T + \frac{T}{K ^ {2}}}.
\end{align*}
Finally, it follows from \eqref{eq:psreg_[0,1]_to_psreg_Z} that $\psreg ^ {\ell} = \cO\bigc{K \log T + \frac{T}{K ^ {2}}} = \cO\bigc{T^{\frac{1}{3}} (\log T) ^ {\frac{2}{3}}}$
on choosing $K = (T/\log T)^{\frac{1}{3}}$. Therefore, we have the main result of this section.
\begin{theorem}\label{thm:bound_pkcal}
     Choosing $K = (T/\log T)^{\frac{1}{3}}$, Algorithm \ref{alg:BM_log_loss} achieves $\pkcal = \cO\bigc{T^{\frac{1}{3}} (\log T) ^ {\frac{2}{3}}}$.
\end{theorem}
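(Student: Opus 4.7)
The plan is to assemble the pieces developed throughout \pref{sec:pseudo-KL-Cal} and then optimize $K$. Since $\pkcal = \psreg^{\ell}$ for the log loss (by the equivalence noted in \pref{sec:implications}), it suffices to bound $\psreg^{\ell}$ for Algorithm \ref{alg:BM_log_loss}. The starting point is inequality \eqref{eq:psreg_[0,1]_to_psreg_Z}, which already reduces the problem from swap functions on $[0,1]$ to swap functions on the finite grid $\cZ$, paying an additive $\cO(T/K^2)$.

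First, I would apply the Blum--Mansour bound \eqref{eq:sum_external_regret} to obtain $\sup_{\sigma: \cZ \to \cZ} \psreg_{\sigma}^{\ell} \le \sum_{i=0}^{K} \textsc{Reg}_i$, where $\textsc{Reg}_i$ is the external regret of $\cA_i$ against the scaled loss $\tilde{\bi{\ell}}_{t,i} = p_{t,i} \bi{\ell}_t$. Next, I would analyze $\textsc{Reg}_i$ in two stages. By Lemma \ref{lem:rounding}, the randomized rounding $\textsc{RROUND}^{\text{log}}$ applied to the EWOO iterate $w_{t,i}$ satisfies $\mathbb{E}[\ell(q, y_t)] - \ell(w_{t,i}, y_t) = \cO(1/K^2)$ uniformly in $y_t$, so multiplying by $p_{t,i}$ gives $\ip{\q_{t,i}}{\tilde{\bi{\ell}}_{t,i}} - p_{t,i}\ell(w_{t,i}, y_t) = \cO(p_{t,i}/K^2)$. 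Summing in $t$ and comparing against any fixed $z_j \in \cZ$, the EWOO guarantee of Lemma \ref{lem:EWOO_regret_bound} (which applies because $f_{t,i}(w) = p_{t,i}\ell(w,y_t)$ is $1$-exp-concave on $[0,1]$) yields
\begin{equation*}
\textsc{Reg}_i = \cO\!\left(\log T + \frac{1}{K^2}\sum_{t=1}^{T} p_{t,i}\right).
\end{equation*}

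Summing over $i \in \{0,\ldots,K\}$ and using $\sum_{i=0}^{K} p_{t,i} = 1$ (since $\p_t \in \Delta_{K+1}$), I get $\sup_{\sigma: \cZ \to \cZ} \psreg_{\sigma}^{\ell} = \cO(K \log T + T/K^2)$. Plugging back into \eqref{eq:psreg_[0,1]_to_psreg_Z} absorbs the remaining $\cO(T/K^2)$ term, giving $\psreg^{\ell} = \cO(K\log T + T/K^2)$. Finally, setting $K = (T/\log T)^{1/3}$ balances the two terms and yields $\psreg^{\ell} = \cO(T^{1/3}(\log T)^{2/3})$, hence the same bound for $\pkcal$.

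The routine parts are the BM reduction and the algebraic optimization of $K$; the only delicate step is checking the rounding argument, which is really the engine of the proof: one must verify that the change $\ip{\q_{t,i}}{\bi{\ell}_t} - \ell(w_{t,i}, y_t)$ is $\cO(1/K^2)$ \emph{uniformly} in $y_t \in \{0,1\}$ despite the unboundedness of the log loss near $0$ and $1$. This is precisely why the non-uniform discretization with finer granularity near the endpoints (and the two extra points $z_0, z_K$) is needed, and where Lemma \ref{lem:rounding} together with the discretization properties in Lemmas \ref{lem:discretization} and \ref{lem:discretization_II} do the heavy lifting. Once that $1/K^2$ scaling is in hand, everything else follows mechanically.
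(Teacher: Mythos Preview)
Your proposal is correct and follows essentially the same argument as the paper: reduce via \eqref{eq:psreg_[0,1]_to_psreg_Z}, apply the Blum--Mansour decomposition \eqref{eq:sum_external_regret}, combine the rounding guarantee of Lemma \ref{lem:rounding} with the EWOO bound of Lemma \ref{lem:EWOO_regret_bound} to get $\textsc{Reg}_i = \cO(\log T + K^{-2}\sum_t p_{t,i})$, sum over $i$, and optimize $K$. Your identification of the rounding step (and the role of the non-uniform discretization via Lemmas \ref{lem:discretization} and \ref{lem:discretization_II}) as the only nontrivial ingredient matches the paper's emphasis exactly.
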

{Note that Algorithm \ref{alg:BM_log_loss} requires knowledge of the horizon $T$ to choose the discretization parameter $K$. However, since (pseudo) KL-Calibration is equivalent to (pseudo) swap regret of the log loss, we can use the doubling trick to avoid the requirement of knowing the time horizon. The analysis of doubling trick for swap regret is exactly identical to that for external regret and is deferred to \cite{cesa2006prediction}. Moreover, as we show in Appendix \ref{app:deferred_proofs_pseudo_KL_Cal}, the overall computation cost of Algorithm \ref{alg:BM_log_loss} over $T$ rounds is $\tilde{\cO}(T^{\frac{5}{3}} + T \cdot \text{ST})$, where $\text{ST}$ is the time required to compute the stationary distribution of $Q_{t}$, which can be obtained efficiently by the method of power iteration; therefore, Algorithm \ref{alg:BM_log_loss} is efficient.}
In a similar spirit as Corollary \ref{cor:simutaneous_bounds_msr}, we can show Algorithm \ref{alg:BM_log_loss} achieves the following regret bounds simultaneously. The proof is in Appendix \ref{app:deferred_proofs_pseudo_KL_Cal} and for most part follows similar to Corollary \ref{cor:simutaneous_bounds_msr}, except that we prove and utilize the bounds (a) $\pcal_{1} \le \sqrt{T \cdot \pcal_{2}}$; (b) for any $\ell \in \cL$, $\psreg^{\ell} \le 4\pcal_{1}$. 

\begin{corollary}
  \label{cor:simultaneous_bounds_psreg}
    Algorithm \ref{alg:BM_log_loss} achieves the following bounds simultaneously: \begin{align*}&{{\pkcal}} = \cO(T^{\frac{1}{3}} (\log T) ^ {\frac{2}{3}}), \quad {\mpsr_{\cL_{G}}} = \cO(G \cdot T^{\frac{1}{3}} (\log T) ^ {\frac{2}{3}}), \\ 
    & {\mpsr_{\cL_{2}}} = \cO(T^{\frac{1}{3}} (\log T) ^ {\frac{2}{3}}), \quad
    {\mpsr_{\cL\backslash\{\cL_{G} \cup \cL_{2}\}}} = \cO(T^{\frac{2}{3}} (\log T) ^ {\frac{1}{3}}).\end{align*}
\end{corollary}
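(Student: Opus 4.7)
The bound on $\pkcal$ is exactly the conclusion of \pref{thm:bound_pkcal}. For $\mpsr_{\cL_G}$, I combine \pref{thm:bound_pkcal} with \pref{prop:bound_breg_div_quadratically}, yielding $\mpsr_{\cL_G} \le G \cdot \pkcal$. For $\mpsr_{\cL_2}$, I combine \pref{thm:bound_pkcal} with \pref{prop:breg_div_decomposable}, which gives $\mpsr_{\cL_2} = \cO(\pkcal)$. Each of these three bounds is thus a one-line corollary of a result established earlier in the excerpt, inheriting the $\cO(T^{1/3}(\log T)^{2/3})$ rate from \pref{thm:bound_pkcal}.

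The nontrivial case is $\mpsr_{\cL\backslash\{\cL_{G} \cup \cL_{2}\}}$. The key idea, in the absence of any smoothness assumption on $\ell$, is to pass through $\pcal_{1}$ rather than $\pcal_{2}$, paying the price of a worse exponent in $T$. To this end I plan to establish two auxiliary inequalities. First, $\pcal_{1} \le \sqrt{T \cdot \pcal_{2}}$: writing $\pcal_{1} = \sum_{p}\sum_{t}\cP_{t}(p)\,|\tilde{\rho}_{p} - p|$ and splitting $\cP_{t}(p) = \sqrt{\cP_{t}(p)} \cdot \sqrt{\cP_{t}(p)}$, Cauchy--Schwarz yields the bound since $\sum_{p,t}\cP_{t}(p) = T$. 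Second, $\psreg^{\ell} \le 4\pcal_{1}$ for every $\ell \in \cL$: by \pref{prop:swap_reg_breg_div}, $\psreg^{\ell} = \sum_{p}(\sum_{t}\cP_{t}(p))\,\breg_{-\ell}(\tilde{\rho}_{p}, p)$, so it suffices to prove $\breg_{-\ell}(\tilde{\rho}_{p}, p) \le 4|\tilde{\rho}_{p} - p|$. Using \pref{lem:characterization_proper_loss} together with $\ell(p, y) \in [-1, 1]$, the derivative of the univariate form satisfies $\ell'(p) = \ell(p, 1) - \ell(p, 0) \in [-2, 2]$, so $\ell(\cdot)$ is $2$-Lipschitz; applying the triangle inequality to $\breg_{-\ell}(\tilde{\rho}_{p}, p) = \ell(p) - \ell(\tilde{\rho}_{p}) + \ell'(p)(\tilde{\rho}_{p} - p)$ then gives the desired bound.

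Combining these two inequalities with Pinsker's inequality $\pcal_{2} \le \pkcal$ and \pref{thm:bound_pkcal} yields
\[
\mpsr_{\cL\backslash\{\cL_{G} \cup \cL_{2}\}} \le 4\pcal_{1} \le 4\sqrt{T \cdot \pcal_{2}} \le 4\sqrt{T \cdot \pkcal} = \cO(T^{2/3}(\log T)^{1/3}),
\]
as desired. The main (though relatively minor) obstacle is the $2$-Lipschitz derivation for the univariate form of a bounded proper loss; this leverages the characterization in \pref{lem:characterization_proper_loss} to tightly couple $\ell'(p)$ to the gap $\ell(p, 1) - \ell(p, 0)$. All remaining pieces are essentially routine.
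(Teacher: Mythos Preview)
Your proposal is correct and follows essentially the same approach as the paper's proof: the first three bounds are immediate from \pref{thm:bound_pkcal} combined with \pref{prop:bound_breg_div_quadratically} and \pref{prop:breg_div_decomposable}, and the fourth proceeds via the same two auxiliary inequalities $\pcal_{1}\le\sqrt{T\cdot\pcal_{2}}$ (Cauchy--Schwarz) and $\psreg^{\ell}\le 4\pcal_{1}$ (bounding $\breg_{-\ell}(\tilde\rho_{p},p)\le 4|\tilde\rho_{p}-p|$ using $|\partial\ell(p)|=|\ell(p,1)-\ell(p,0)|\le 2$). The only cosmetic difference is that the paper bounds $\ell(p)-\ell(\tilde\rho_{p})$ via concavity ($\le \partial\ell(\tilde\rho_{p})(p-\tilde\rho_{p})$) whereas you invoke $2$-Lipschitzness directly; both yield the same $4|\tilde\rho_{p}-p|$ bound.
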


\section{High probability bound for maximum swap regret against $\cL_{G}$}\label{sec:bound_calibration}
While we do not have a concrete algorithm for $\kcal$,
in this section, we show that if we only consider $\cL_{G}$, 
then our Algorithm \ref{alg:BM_log_loss} or the algorithm of \cite{fishelsonfull} already achieves a ${\cO}(G \cdot T^{\frac{1}{3}} (\log T) ^ {-\frac{1}{3}} \log \tfrac{T}{\delta})$ high probability bound for $\msr_{\cL_{G}}$. To obtain so, we first prove a generic high probability bound that relates $\cal_{2}$ with $\pcal_{2}$. Subsequently, we instantiate our bound with an explicit algorithm for minimizing $\pcal_{2}$ and use the result of Proposition \ref{prop:bound_breg_div_quadratically}. 
Our high probability bound in Theorem \ref{thm:high_prob_bound} is independent of the choice of the discretization $\cZ$.

\begin{theorem}\label{thm:high_prob_bound}
    For any algorithm $\cA_{\cal}$, with probability at least $1 - \delta$ over the randomness in $\cA_{\cal}$'s predictions $p_{1}, \dots, p_{T}$, we have $
        \cal_{2} \le 6 \pcal_{2} + 96 \abs{\cZ}\log \frac{4\abs{\cZ}}{\delta}.$
\end{theorem}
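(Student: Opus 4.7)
The plan is to rewrite both calibration measures in a common ``numerator$^2$/denominator'' form. Introducing $A_p := \sum_{t=1}^T \ind{p_t = p}(p - y_t)$, $\tilde A_p := \sum_{t=1}^T \cP_t(p)(p - y_t)$, $n_p := \sum_t \ind{p_t=p}$, and $\tilde n_p := \sum_t \cP_t(p)$, a direct computation (using $p - \rho_p = A_p/n_p$ and $p - \tilde \rho_p = \tilde A_p/\tilde n_p$) yields
\[
\cal_{2} = \sum_{p \in \cZ} \frac{A_p^2}{n_p}, \qquad \pcal_{2} = \sum_{p \in \cZ} \frac{\tilde A_p^2}{\tilde n_p},
\]
so the task reduces to bounding each left-hand summand by a constant multiple of its right-hand counterpart plus an additive error of order $L := \log(4\abs{\cZ}/\delta)$.

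The central observation is that both
\[
A_p - \tilde A_p = \sum_t (p - y_t)\bigl(\ind{p_t=p} - \cP_t(p)\bigr), \qquad n_p - \tilde n_p = \sum_t \bigl(\ind{p_t=p} - \cP_t(p)\bigr)
\]
are sums of martingale differences with respect to the filtration generated by $\cA_{\cal}$'s randomness, with per-step absolute value at most $1$ and per-step conditional variance at most $\cP_t(p)(1-\cP_t(p)) \le \cP_t(p)$ (using $(p-y_t)^2 \le 1$); hence the total conditional variance is at most $\tilde n_p$. Applying Freedman's inequality (Lemma \ref{lem:Freedman}) in both directions and union-bounding over the $4\abs{\cZ}$ resulting events with per-event failure probability $\delta/(4\abs{\cZ})$ yields, simultaneously for every $p \in \cZ$ and with total failure probability at most $\delta$,
\[
|A_p - \tilde A_p| \le c_1\bigl(\sqrt{\tilde n_p\, L} + L\bigr), \qquad |n_p - \tilde n_p| \le c_1\bigl(\sqrt{\tilde n_p\, L} + L\bigr),
\]
for some absolute constant $c_1$.

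I would then split on the size of $\tilde n_p$. When $\tilde n_p$ exceeds a sufficiently large absolute multiple of $L$, both deviation bounds collapse to $O(\tilde n_p)$: in particular $n_p \ge \tilde n_p/2$ (so $1/n_p \le 2/\tilde n_p$) and $(A_p - \tilde A_p)^2 = O(\tilde n_p\, L)$. Plugging these into $A_p^2 \le (1+\epsilon)\tilde A_p^2 + (1+\epsilon^{-1})(A_p - \tilde A_p)^2$ with $\epsilon=2$, and combining with $1/n_p \le 2/\tilde n_p$, delivers $A_p^2/n_p \le 6\,\tilde A_p^2/\tilde n_p + O(L)$. In the opposite regime $\tilde n_p = O(L)$, the same concentration forces $n_p = O(L)$, and the deterministic bound $|A_p| \le n_p$ gives $A_p^2/n_p \le n_p = O(L)$ trivially, with no dependence on $\pcal_{2}$. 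Summing over $p \in \cZ$ yields $\cal_{2} \le 6\pcal_{2} + O(\abs{\cZ}\, L)$.

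The main obstacle is not conceptual but calibrational: pinning down constants to land exactly at $6\pcal_{2} + 96\abs{\cZ}\log(4\abs{\cZ}/\delta)$. The factor $6$ is forced by the choice $\epsilon = 2$ (contributing $1+\epsilon = 3$) together with the factor $2$ coming from $1/n_p \le 2/\tilde n_p$, while the constant $96$ depends on the explicit Freedman constant and on exactly how aggressively we threshold $\tilde n_p$ in the case split. Once this calibration is carried out, the rest is a routine chain of the martingale concentration plus the two-case analysis described above.
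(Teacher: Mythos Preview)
Your proposal is correct and runs on the same engine as the paper: Freedman's inequality applied to the martingale differences $\ind{p_t=p}-\cP_t(p)$ and their $y_t$-weighted variant, followed by a case split on whether $\tilde n_p$ exceeds a constant multiple of $L=\log(4|\cZ|/\delta)$. The one genuine difference is the decomposition. The paper writes $(p-\rho_p)^2 \le 2(p-\tilde\rho_p)^2 + 2(\rho_p-\tilde\rho_p)^2$ and then spends some work bounding the ratio difference $|\rho_p-\tilde\rho_p|$, which requires separate upper- and lower-tail manipulations of a quotient of random sums. You instead stay at the numerator level via $A_p^2 \le 3\tilde A_p^2 + \tfrac{3}{2}(A_p-\tilde A_p)^2$ together with $1/n_p \le 2/\tilde n_p$ on the large-$\tilde n_p$ event; this bypasses the ratio entirely and is a bit cleaner. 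Both routes produce the factor $6$ in front of $\pcal_2$ (yours as $2\cdot 3$, theirs as $2\cdot 3$ from a different split), and the additive $96|\cZ|L$ comes out of the same constant-chasing through the Freedman bound and the trivial estimate $A_p^2/n_p \le n_p = O(L)$ in the small-$\tilde n_p$ regime.
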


The proof of Theorem \ref{thm:high_prob_bound} is deferred to Appendix \ref{app:hp_bound}. Instantiating $\cA_{\cal}$ in Theorem \ref{thm:high_prob_bound}, we obtain the following corollary, whose proof can also be found in Appendix \ref{app:hp_bound}.

\begin{corollary}\label{cor:cal_2_hp_bound}
On choosing $K = (T/\log T) ^ {\frac{1}{3}}$, Algorithm \ref{alg:BM_log_loss} ensures that with probability at least $1 - \delta$ over its internal randomness 
\begin{align*}
    \cal_{2} = \cO\bigc{{T ^ {\frac{1}{3}}}(\log T) ^ {-\frac{1}{3}} \log \frac{T}{\delta}}, \quad \msr_{\cL_{G}} = \cO\bigc{G \cdot {T ^ {\frac{1}{3}}}(\log T) ^ {-\frac{1}{3}} \log \frac{T}{\delta}}.
\end{align*}
Furthermore, $\mathbb{E}[\cal_{2}] = \cO(T^{\frac{1}{3}} (\log T) ^ {\frac{2}{3}}), \mathbb{E}[\msr_{\cL_{G}}] = \cO(G \cdot T^{\frac{1}{3}}(\log T) ^ {\frac{2}{3}})$.
\end{corollary}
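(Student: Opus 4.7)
The plan is to chain together three ingredients already available: the pseudo-KL-calibration guarantee established for Algorithm~\ref{alg:BM_log_loss} in \pref{thm:bound_pkcal}, the generic tail bound in \pref{thm:high_prob_bound} that converts a pseudo $\ell_2$-calibration bound into an actual $\ell_2$-calibration bound, and the quadratic implication in \pref{prop:bound_breg_div_quadratically} that lifts an $\ell_2$-calibration bound to a swap regret bound over $\cL_G$. This is essentially a short composition argument; no new technical machinery should be required beyond bookkeeping of constants and logs.

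First, I would instantiate \pref{thm:bound_pkcal} with $K = (T/\log T)^{1/3}$ to obtain $\pkcal = \cO(T^{1/3}(\log T)^{2/3})$, and then invoke Pinsker's inequality (as recorded in the Introduction, $\KL(q,p) \ge (q-p)^2$) to conclude $\pcal_2 \le \pkcal = \cO(T^{1/3}(\log T)^{2/3})$. The discretization used by Algorithm~\ref{alg:BM_log_loss} has $|\cZ| = K+1 = \Theta((T/\log T)^{1/3})$. Plugging these into \pref{thm:high_prob_bound} yields, with probability at least $1-\delta$,
\begin{align*}
    \cal_2 \;\le\; 6\,\pcal_2 + 96\,|\cZ|\,\log\frac{4|\cZ|}{\delta} \;=\; \cO\!\left(T^{1/3}(\log T)^{2/3} + (T/\log T)^{1/3}\log(T/\delta)\right).
\end{align*}
Since $\log(T/\delta) \ge \log T$ whenever $\delta \le 1$, the second term already dominates the first, so both collapse into $\cO(T^{1/3}(\log T)^{-1/3}\log(T/\delta))$. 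Applying \pref{prop:bound_breg_div_quadratically}, namely $\msr_{\cL_G} \le G\cdot \cal_2$, then gives the high-probability bound on $\msr_{\cL_G}$ with an extra $G$ factor. The only nontrivial check here is that the choice $K = (T/\log T)^{1/3}$ is the right balance point: the pseudo-calibration term in \pref{thm:bound_pkcal} scales as $K\log T + T/K^2$, matching the discretization penalty $|\cZ|\log|\cZ|$ in \pref{thm:high_prob_bound} at precisely this $K$.

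For the expectation bounds, I would integrate the tail against the trivial deterministic bound $\cal_2 \le T$ (each of the $T$ contributions $(p-\rho_p)^2$ lies in $[0,1]$). Taking $\delta = 1/T$ in the high-probability bound, the failure event contributes at most $T \cdot (1/T) = 1$ to $\mathbb{E}[\cal_2]$, while on the complementary event one has $\cal_2 = \cO(T^{1/3}(\log T)^{-1/3}\log T) = \cO(T^{1/3}(\log T)^{2/3})$. Summing gives $\mathbb{E}[\cal_2] = \cO(T^{1/3}(\log T)^{2/3})$, and multiplying by $G$ (again via \pref{prop:bound_breg_div_quadratically}) handles $\mathbb{E}[\msr_{\cL_G}]$. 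I don't anticipate any obstacle here beyond the bookkeeping described; all the substantive work is in \pref{thm:bound_pkcal} and \pref{thm:high_prob_bound}, both of which are assumed.
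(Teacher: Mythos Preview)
Your proposal is correct and follows essentially the same route as the paper: bound $\pcal_2$ via the pseudo-KL-calibration guarantee of Algorithm~\ref{alg:BM_log_loss} (the paper cites the $\cO(T/K^2 + K\log T)$ form directly rather than going through $\pkcal$ and Pinsker, but this is equivalent), plug into \pref{thm:high_prob_bound} with $|\cZ|=K+1$, observe that the discretization term dominates at $K=(T/\log T)^{1/3}$, and lift to $\msr_{\cL_G}$ via \pref{prop:bound_breg_div_quadratically}; the expectation bound is obtained in both cases by setting $\delta=1/T$ and using the crude $\cal_2\le T$ on the failure event.
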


Instantiating $\cA_{\cal}$ with the algorithm of \cite{fishelsonfull}, we also obtain the exact same guarantee as Corollary \ref{cor:cal_2_hp_bound}. Compared to Algorithm \ref{alg:BM_log_loss}, the algorithm of \cite{fishelsonfull} is more efficient since it uses scaled online gradient descent for the $i$-th external regret algorithm, which is more efficient than $\text{EWOO}_{i}$. On the contrary, it does not posses the generality of Algorithm \ref{alg:BM_log_loss} towards minimizing $\sreg^{\ell}$ for all $\ell \in \cL_{2}$ simultaneously. 

\section{Conclusion and Future Directions}\label{sec:conclusion}
In this paper, we introduced a new stronger notion of calibration called (pseudo) KL-Calibration which not only allows us to recover results for classical (pseudo) $\ell_{2}$-Calibration, but also obtain simultaneous (pseudo) swap regret guarantees for several important subclasses of proper losses. We also derived the first high probability and in-expectation bounds for $\cal_{2}$. Several interesting questions remain, including (1) obtaining an explicit high probability swap regret guarantee for the log loss, similar to Section \ref{sec:bound_calibration}; (2) improving the $T^{\frac{2}{3}}$ dependence (e.g., to $\sqrt{T}$ as in~\citet{hu2024predict}) for a bounded proper loss in Corollaries~\ref{cor:simutaneous_bounds_msr}, \ref{cor:simultaneous_bounds_psreg}; and (3) studying KL-Calibration in the offline setting.

\section*{Acknowledgement}
We thank Fishelson, Kleinberg, Okoroafor, Paes Leme, Schneider, and Teng for sharing a draft of their paper~\citep{fishelsonfull} with us.
HL is supported by NSF award IIS-1943607. SS is supported by NSF CAREER Award CCF-2239265. VS is supported by NSF CAREER Award CCF-2239265 and an Amazon Research Award. This work was done in part while VS was visiting the Simons Institute for the Theory of Computing.

\bibliographystyle{apalike}
\bibliography{references}

@inproceedings{kleinberg2023u,
  title={U-calibration: Forecasting for an unknown agent},
  author={Kleinberg, Bobby and Leme, Renato Paes and Schneider, Jon and Teng, Yifeng},
  booktitle={The Thirty Sixth Annual Conference on Learning Theory},
  pages={5143--5145},
  year={2023},
  organization={PMLR}
}

@article{okoroafor2025near,
  title={Near-Optimal Algorithms for Omniprediction},
  author={Okoroafor, Princewill and Kleinberg, Robert and Kim, Michael P},
  journal={arXiv preprint arXiv:2501.17205},
  year={2025}
}

@article{foster2023calibeating,
  title={“Calibeating”: Beating forecasters at their own game},
  author={Foster, Dean P and Hart, Sergiu},
  journal={Theoretical Economics},
  volume={18},
  number={4},
  pages={1441--1474},
  year={2023},
  publisher={Wiley Online Library}
}

@InProceedings{fishelsonfull,
  title = 	 {Full Swap Regret and Discretized Calibration},
  author =       {Fishelson, Maxwell and Kleinberg, Robert and Okoroafor, Princewill and Paes Leme, Renato and Schneider, Jon and Teng, Yifeng},
  booktitle = 	 {Proceedings of The 36th International Conference on Algorithmic Learning Theory},
  pages = 	 {444--480},
  year = 	 {2025},
  editor = 	 {Kamath, Gautam and Loh, Po-Ling},
  volume = 	 {272},
  series = 	 {Proceedings of Machine Learning Research},
  month = 	 {24--27 Feb},
  publisher =    {PMLR},
  pdf = 	 {https://raw.githubusercontent.com/mlresearch/v272/main/assets/fishelson25a/fishelson25a.pdf},
  url = 	 {https://proceedings.mlr.press/v272/fishelson25a.html},
  abstract = 	 {We study the problem of minimizing swap regret in structured normal-form games. Players have a very large (potentially infinite) number of pure actions, but each action has an embedding into $d$-dimensional space and payoffs are given by bilinear functions of these embeddings. We provide an efficient learning algorithm for this setting that incurs at most $\tilde{O}(T^{(d+1)/(d+3)})$ swap regret after $T$ rounds. To achieve this, we introduce a new online learning problem we call <em>full swap regret minimization</em>. In this problem, a learner repeatedly takes a (randomized) action in a bounded convex $d$-dimensional action set $\mathcal{K}$ and then receives a loss from the adversary, with the goal of minimizing their regret with respect to the <em>worst-case</em> swap function mapping $\mathcal{K}$ to $\mathcal{K}$. For varied assumptions about the convexity and smoothness of the loss functions, we design algorithms with full swap regret bounds ranging from $O(T^{d/(d+2)})$ to $O(T^{(d+1)/(d+2)})$. Finally, we apply these tools to the problem of online forecasting to minimize calibration error, showing that several notions of calibration can be viewed as specific instances of full swap regret. In particular, we design efficient algorithms for online forecasting that guarantee at most $O(T^{1/3})$ $\ell_2$-calibration error and $O(\max(\sqrt{\epsilon T}, T^{1/3}))$ <em>discretized-calibration</em> error (when the forecaster is restricted to predicting multiples of $\epsilon$).}
}

@inproceedings{luo2024optimal,
  title={Optimal Multiclass U-Calibration Error and Beyond},
  author={Luo, Haipeng and Senapati, Spandan and Sharan, Vatsal},
  booktitle={Advances in Neural Information Processing
Systems},
  year={2024}
}

@article{dagan2024improved,
  title={Improved bounds for calibration via stronger sign preservation games},
  author={Dagan, Yuval and Daskalakis, Constantinos and Fishelson, Maxwell and Golowich, Noah and Kleinberg, Robert and Okoroafor, Princewill},
  journal={arXiv preprint arXiv:2406.13668},
  year={2024}
}

@inproceedings{qiao2021stronger,
  title={Stronger calibration lower bounds via sidestepping},
  author={Qiao, Mingda and Valiant, Gregory},
  booktitle={Proceedings of the 53rd Annual ACM SIGACT Symposium on Theory of Computing},
  pages={456--466},
  year={2021}
}

@article{blum2007external,
  title={From external to internal regret.},
  author={Blum, Avrim and Mansour, Yishay},
  journal={Journal of Machine Learning Research},
  volume={8},
  number={6},
  year={2007}
}

@article{gneiting2007strictly,
  title={Strictly proper scoring rules, prediction, and estimation},
  author={Gneiting, Tilmann and Raftery, Adrian E},
  journal={Journal of the American statistical Association},
  volume={102},
  number={477},
  pages={359--378},
  year={2007},
  publisher={Taylor \& Francis}
}

@inproceedings{rooij2009learning,
  title={Learning the switching rate by discretising Bernoulli sources online},
  author={Rooij, Steven and Erven, Tim},
  booktitle={Artificial Intelligence and Statistics},
  pages={432--439},
  year={2009},
  organization={PMLR}
}

@book{cesa2006prediction,
  title={Prediction, learning, and games},
  author={Cesa-Bianchi, Nicolo and Lugosi, G{\'a}bor},
  year={2006},
  publisher={Cambridge university press}
}

@article{hazan2007logarithmic,
  title={Logarithmic regret algorithms for online convex optimization},
  author={Hazan, Elad and Agarwal, Amit and Kale, Satyen},
  journal={Machine Learning},
  volume={69},
  number={2},
  pages={169--192},
  year={2007},
  publisher={Springer}
}

@inproceedings{kotlowski2016online,
  title={Online isotonic regression},
  author={Kot{\l}owski, Wojciech and Koolen, Wouter M and Malek, Alan},
  booktitle={Conference on Learning Theory},
  pages={1165--1189},
  year={2016},
  organization={PMLR}
}

@inproceedings{garg2024oracle,
  title={Oracle efficient online multicalibration and omniprediction},
  author={Garg, Sumegha and Jung, Christopher and Reingold, Omer and Roth, Aaron},
  booktitle={Proceedings of the 2024 Annual ACM-SIAM Symposium on Discrete Algorithms (SODA)},
  pages={2725--2792},
  year={2024},
  organization={SIAM}
}

@inproceedings{beygelzimer2011contextual,
  title={Contextual bandit algorithms with supervised learning guarantees},
  author={Beygelzimer, Alina and Langford, John and Li, Lihong and Reyzin, Lev and Schapire, Robert},
  booktitle={Proceedings of the Fourteenth International Conference on Artificial Intelligence and Statistics},
  pages={19--26},
  year={2011},
  organization={JMLR Workshop and Conference Proceedings}
}

@INPROCEEDINGS {hu2024predict,
author = { Hu, Lunjia and Wu, Yifan },
booktitle = { 2024 IEEE 65th Annual Symposium on Foundations of Computer Science (FOCS) },
title = {{ Predict to Minimize Swap Regret for All Payoff-Bounded Tasks }},
year = {2024},
volume = {},
ISSN = {},
pages = {244-263},
abstract = { Calibration allows predictions to be reliably in-terpreted as probabilities by decision makers. We propose a decision-theoretic calibration error, the Calibration Decision Loss (CDL), defined as the maximum improvement in decision payoff obtained by calibrating the predictions, where the maximum is over all payoff-bounded decision tasks. Vanishing CDL guarantees the payoff loss from miscalibration vanishes simultaneously for all downstream decision tasks. We show separations between CDL and existing calibration error metrics, including the most well-studied metric Expected Calibration Error (ECE). Our main technical contribution is a new efficient algorithm for online calibration that achieves near-optimal $O\left(\frac{\log T T}{\sqrt{T}}\right)$ expected CDL, bypassing the $\Omega(T^{-0472})$ lower bound for ECE by Qiao and Valiant [40]. The full version of the paper is titled Calibration Error for Decision Making. We strongly recommend that our readers read the full arXiv version (https://arxiv.org/abs/2404.13503). },
keywords = {Measurement;Computer science;Decision making;Prediction algorithms;Calibration;Reliability},
doi = {10.1109/FOCS61266.2024.00024},
url = {https://doi.ieeecomputersociety.org/10.1109/FOCS61266.2024.00024},
publisher = {IEEE Computer Society},
address = {Los Alamitos, CA, USA},
month =Oct}

@InProceedings{pmlr-v247-qiao24a,
  title = 	 {On the Distance from Calibration in Sequential Prediction},
  author =       {Qiao, Mingda and Zheng, Letian},
  booktitle = 	 {Proceedings of Thirty Seventh Conference on Learning Theory},
  pages = 	 {4307--4357},
  year = 	 {2024},
  editor = 	 {Agrawal, Shipra and Roth, Aaron},
  volume = 	 {247},
  series = 	 {Proceedings of Machine Learning Research},
  month = 	 {30 Jun--03 Jul},
  publisher =    {PMLR},
  pdf = 	 {https://proceedings.mlr.press/v247/qiao24a/qiao24a.pdf},
  url = 	 {https://proceedings.mlr.press/v247/qiao24a.html},
  abstract = 	 {We study a sequential binary prediction setting where the forecaster is evaluated in terms of the calibration distance, which is defined as the $L_1$ distance between the predicted values and the set of predictions that are perfectly calibrated in hindsight. This is analogous to a calibration measure recently proposed by Bł{}asiok, Gopalan, Hu and Nakkiran (STOC 2023) for the offline setting. The calibration distance is a natural and intuitive measure of deviation from perfect calibration, and satisfies a Lipschitz continuity property which does not hold for many popular calibration measures, such as the $L_1$ calibration error and its variants. We prove that there is a forecasting algorithm that achieves an $O(\sqrt{T})$ calibration distance in expectation on an adversarially chosen sequence of $T$ binary outcomes. At the core of this upper bound is a structural result showing that the calibration distance is accurately approximated by the lower calibration distance, which is a continuous relaxation of the former. We then show that an $O(\sqrt{T})$ lower calibration distance can be achieved via a simple minimax argument and a reduction to online learning on a Lipschitz class. On the lower bound side, an $\Omega(T^{1/3})$ calibration distance is shown to be unavoidable, even when the adversary outputs a sequence of independent random bits, and has an additional ability to early stop (i.e., to stop producing random bits and output the same bit in the remaining steps). Interestingly, without this early stopping, the forecaster can achieve a much smaller calibration distance of $\mathrm{polylog}(T)$.}
}

@article{foster2021forecast,
  title={Forecast hedging and calibration},
  author={Foster, Dean P and Hart, Sergiu},
  journal={Journal of Political Economy},
  volume={129},
  number={12},
  pages={3447--3490},
  year={2021},
  publisher={The University of Chicago Press Chicago, IL}
}

@inproceedings{arunachaleswaran2025elementary,
  title={An Elementary Predictor Obtaining Distance to Calibration},
  author={Arunachaleswaran, Eshwar Ram and Collina, Natalie and Roth, Aaron and Shi, Mirah},
  booktitle={Proceedings of the 2025 Annual ACM-SIAM Symposium on Discrete Algorithms (SODA)},
  pages={1366--1370},
  year={2025},
  organization={SIAM}
}

\appendix
\section{Deferred proofs in Section \ref{sec:implications}}\label{app:implications}
\subsection{Proof of Proposition \ref{prop:swap_reg_breg_div}}
\begin{proof}
    For simplicity, we only prove the result for $\psreg_{\sigma} ^ {\ell}$ since the result for $\sreg_{\sigma} ^ {\ell}$ follows by simply replacing $\cP_{t}(p)$ with $\ind{p_{t} = p}$. We have the following chain of equalities: \begin{align*}
        \psreg_{\sigma} ^ {\ell} &= \sum_{p \in \cZ} \sum_{t = 1} ^ {T} \cP_{t}(p) (\ell(p, y_{t}) - \ell(\sigma(p), y_{t})) \\
        &= \sum_{p \in \cZ} \sum_{t = 1} ^ {T} \cP_{t}(p) \bigc{\ell(p) + \ip{\partial \ell(p)}{y_{t} - p} - \ell(\sigma(p)) - \ip{\partial \ell(\sigma(p))}{y_{t} - \sigma(p)}} \\
        &=  \sum_{p \in \cZ} \bigc{\sum_{t = 1} ^ {T} \cP_{t}(p)} \bigc{\ell(p) + \ip{\partial \ell(p)}{\tilde{\rho}_{p} - p} - \ell(\sigma(p)) - \ip{\partial \ell(\sigma(p))}{\tilde{\rho}_{p} - \sigma(p)}} \\
        &= \sum_{p \in \cZ} \bigc{\sum_{t = 1} ^ {T} \cP_{t}(p)} \bigc{\breg_{-\ell}(\tilde{\rho}_{p}, p) - \breg_{-\ell}(\tilde{\rho}_{p}, \sigma(p))},
    \end{align*}
where the second equality follows from Lemma \ref{lem:characterization_proper_loss}, while the final equality follows by adding and subtracting $\ell(\tilde{\rho}_{p})$. Taking supremum over $\sigma: [0, 1] \to [0, 1]$, we obtain \begin{align*}
    \sup_{\sigma: [0, 1] \to [0, 1]} \psreg_{\sigma} ^ {\ell} = \sum_{p \in \cZ} \bigc{\sum_{t = 1} ^ {T} \cP_{t}(p)} \bigc{\breg_{-\ell}(\tilde{\rho}_{p}, p) - \inf_{\sigma: [0, 1] \to [0, 1]} \breg_{-\ell}(\tilde{\rho}_{p}, \sigma(p))}.
\end{align*}
Next, we realize that $\breg_{\phi}(x, y) \ge 0$ since $\phi$ is convex, and the choice of $\sigma(p) = \tilde{\rho}_{p}$ leads to $\breg_{-\ell}(\tilde{\rho}_{p}, \sigma(p)) = 0$. Therefore, \begin{align*}
    \psreg^{\ell} = \sum_{p \in \cZ} \bigc{\sum_{t = 1} ^ {T} \cP_{t}(p)} \breg_{-\ell}(\tilde{\rho}_{p}, p)
\end{align*}
which completes the proof.
\end{proof}

\subsection{Proof of Proposition \ref{prop:breg_div_decomposable}}
\begin{proof}
    For simplicity, we only consider the case when $p \le \hat{p}$, since the other case follows exactly similarly. Applying the result of Lemma \ref{lem:relate_breg_vbreg}, we obtain \begin{align*}
        \breg_{-\ell}(\hat{p}, p) &= \int_{p} ^ {\hat{p}} \abs{\ell''(\mu)} (\hat{p} - \mu) d\mu \le c \cdot \int_{p} ^ {\hat{p}} \bigc{\frac{1}{\mu} + \frac{1}{1 - \mu}} \cdot (\hat{p} - \mu) d\mu,
    \end{align*}
    where the inequality follows from Lemma \ref{lem:hessian_growth}. By direct computation, the integral above evaluates to \begin{align*}
        \hat{p} \cdot \int_{p} ^ {\hat{p}} \frac{d\mu}{\mu} + (1 - \hat{p}) \cdot \int_{p} ^ {\hat{p}} \frac{d\mu}{\mu - 1} = \hat{p} \cdot \log \frac{\hat{p}}{p} + (1 - \hat{p}) \cdot \log \frac{1 - \hat{p}}{1 - p} = \KL(\hat{p}, p).
    \end{align*}
    Therefore, we have $\breg_{-\ell}(\hat{p}, p) \le c \cdot \KL(\hat{p}, p)$, which completes the proof of the first part of the Proposition. The second part follows by combining the result of Proposition \ref{prop:swap_reg_breg_div} with the result obtained above, and taking a supremum over $\ell \in \cL_{2}$. This completes the proof.
\end{proof}

\subsection{Proof of Proposition \ref{prop:bound_breg_div_quadratically}}

\begin{proof}
    For simplicity, we only consider the case when $p \le \hat{p}$, since the other case follows exactly similarly. Applying the result of Lemma \ref{lem:relate_breg_vbreg}, we obtain 
    $$\breg_{-\ell}(\hat{p}, p) \le G \int_{p} ^ {\hat{p}} (\hat{p} - \mu) d\mu = G \bigc{\hat{p}(\hat{p} - p) - \frac{\hat{p} ^ {2} - p ^ {2}}{2}} = \frac{G}{2} (\hat{p} - p) ^ {2}.$$ 
            The case when $\hat{p} \le p$ follows similarly. Applying the result of Proposition \ref{prop:swap_reg_breg_div}, taking a supremum over $\ell \in \cL_{G}$, and bounding $\cal_{2}, \pcal_{2}$ in terms of $\kcal, \pkcal$ completes the proof.
\end{proof}

\section{Deferred proofs in Section \ref{sec:achieve-KL-Cal}}\label{app:proof_existence_swap_log_loss}

\subsection{Proof of Theorem \ref{thm:log_loss_swap_reg}}
\begin{theorem}[Von-Neumann's Minimax Theorem]\label{thm:von_neumann}
    Let $M \in \Rn^{r \times c}$ for  $r, c \in \mathbb{N}$. Then, \begin{align*}
        \min_{p \in \Delta_{r}} \max_{q \in \Delta_{c}} p^{\intercal} M q = \max_{q \in \Delta_{c}} \min_{p \in \Delta_{r}} p^{\intercal} M q.
    \end{align*}
\end{theorem}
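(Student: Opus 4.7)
The plan is to establish the two inequalities $\min_{p \in \Delta_r} \max_{q \in \Delta_c} p^\T M q \geq \max_{q \in \Delta_c} \min_{p \in \Delta_r} p^\T M q$ and its reverse. The first (weak duality) is a one-line consequence of the pointwise bound $p^\T M q \leq \max_{q'} p^\T M q'$: applying $\min_p$ and then $\max_q$ to both sides yields $\max_q \min_p p^\T M q \leq \min_p \max_q p^\T M q$. The substance lies in the reverse direction, which I would prove by contradiction using the separating hyperplane theorem, exploiting convexity and compactness of the simplices.

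Write $L := \min_p \max_q p^\T M q$ and $R := \max_q \min_p p^\T M q$, and suppose for contradiction that $L > R$. Choose any $\alpha \in (R, L)$ and replace $M$ with $M - \alpha \mathbf{1}\mathbf{1}^\T$; since $p^\T(\mathbf{1}\mathbf{1}^\T) q = (p^\T \mathbf{1})(\mathbf{1}^\T q) = 1$ for all $p \in \Delta_r$, $q \in \Delta_c$, this shifts both $L$ and $R$ by $-\alpha$ and reduces the problem to $R < 0 < L$. The hypothesis $R < 0$ unpacks to: for every $q \in \Delta_c$ some coordinate $i$ satisfies $(Mq)_i < 0$, i.e., the convex compact set $C := \{Mq : q \in \Delta_c\} \subset \mathbb{R}^r$ is disjoint from the closed non-negative orthant $\mathbb{R}^r_{\geq 0}$.

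I would then apply the separating hyperplane theorem to the disjoint closed convex sets $C$ (compact) and $\mathbb{R}^r_{\geq 0}$: there exist a nonzero $p \in \mathbb{R}^r$ and a scalar $\beta$ with $p^\T x \leq \beta$ for all $x \in C$ and $p^\T y \geq \beta$ for all $y \in \mathbb{R}^r_{\geq 0}$. The second inequality forces $p \geq 0$ entrywise (otherwise $p^\T y$ would be unbounded below along a coordinate ray) and $\beta \leq 0$ (take $y = 0$). After rescaling $p$ so that $\mathbf{1}^\T p = 1$, we obtain a vector $p \in \Delta_r$ satisfying $p^\T M q \leq 0$ for every $q \in \Delta_c$, hence $\max_q p^\T M q \leq 0$. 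This contradicts $L > 0$ being the minimum of $\max_q p^\T M q$ over $p \in \Delta_r$, closing the argument.

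The main technical obstacle is handling the separation cleanly at the interface of a compact convex set and an unbounded closed convex cone: the strict form of the separating hyperplane theorem applies precisely because $C$ is compact and $\mathbb{R}^r_{\geq 0}$ is closed and disjoint from $C$, so the minimum distance between them is achieved and positive. An equally valid alternative route would cast each side as the primal/dual pair of a linear program --- using that $\max_q p^\T M q = \max_j (M^\T p)_j$ since a linear function on the simplex is maximized at a vertex --- and invoke LP strong duality; both proofs carry the same essential content, namely a convex separation argument, but the hyperplane version avoids bookkeeping of LP variables and is the cleanest to write out.
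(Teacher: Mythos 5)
The paper treats Theorem~\ref{thm:von_neumann} as a classical, cited result (von Neumann, 1928) and does not supply a proof of its own, so there is no in-paper argument to compare yours against; what matters is simply whether your proof is sound, and it is. Weak duality is correctly dispatched. The shift by $\alpha\mathbf{1}\mathbf{1}^\intercal$ is valid because $p^\intercal\mathbf{1}\mathbf{1}^\intercal q = 1$ on $\Delta_r\times\Delta_c$, so it reduces matters to $R<0<L$ without loss. The step from $R<0$ to disjointness of $C=\{Mq:q\in\Delta_c\}$ from $\mathbb{R}^r_{\geq 0}$ is justified because $\min_{p\in\Delta_r}p^\intercal Mq=\min_i(Mq)_i$ (a linear functional on a simplex attains its minimum at a vertex), and $C$ is indeed convex and compact as the continuous linear image of $\Delta_c$. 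The extraction of a probability vector from the separating normal is handled carefully: nonnegativity of $p$ from the unbounded coordinate rays in $\mathbb{R}^r_{\ge 0}$, $\beta\le 0$ from $y=0$, and $\mathbf{1}^\intercal p>0$ so that rescaling by a positive scalar preserves $p^\intercal Mq\le 0$, contradicting $L>0$. One minor remark: your argument only uses the weak form of separation, $p^\intercal x\le\beta\le p^\intercal y$; the compactness of $C$ together with closedness of the orthant, which you flag as the main technical obstacle, actually buys the stronger strict separation, but that extra strength is never used in reaching the contradiction. The LP-duality route you sketch as an alternative is equally standard and also correct; since the paper only cites the theorem, either presentation would serve.
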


\begin{proof}[Proof of Theorem \ref{thm:log_loss_swap_reg}]
We prove a stronger statement that the result holds against any adaptive adversary. In the forecasting setup, let $\cH_{t - 1} = \{p_{1}, \dots, p_{t - 1}\} \cup \{y_{1}, \dots, y_{t - 1}\}$ denote the history till time $t$ (exclusive). With complete knowledge about the forecaster's algorithm, an adaptive adversary chooses $y_{t}$ depending on $\cH_{t - 1}$. As mentioned in Section \ref{sec:achieve-KL-Cal}, we shall consider forecasters that make predictions which belong to the discretization 
    \begin{align*}
        \cZ = \{z_{1}, \dots, z_{K - 1}\}, \text{ where } z_{i} = \sin ^ {2} \bigc{\frac{\pi i}{2K}},
    \end{align*}
    and $K \in \mathbb{N}$ is a constant to be specified later. For convinience, we set $z_{0} = 0, z_{K} = 1$,
    however, $z_{0}, z_{K}$ are not included in the discretization. In Lemma \ref{lem:discretization}, we prove some important facts regarding $\cZ$ which shall be useful for the subsequent analysis. For a deterministic forecaster, $p_{t}$ is obtained via a mapping ${F}_{t - 1}: \cH_{t - 1} \to \cZ$. Similarly, for a deterministic adversary, $y_{t}$ is obtained via a mapping $A_{t - 1}: \cH_{t - 1} \to \{0, 1\}$. Therefore, a deterministic forecaster can be represented by the sequence of mappings $F = (F_{1}, \dots, F_{T})$, and a deterministic adversary can be represented by the sequence $A = (A_{1}, \dots, A_{T})$. Given $F, A$, we let $\sreg^{\ell}({F, A})$ denote the swap regret achieved by executing $F, A$.

    Let $\{F\}, \{A\}$ be all possible enumerations of $F, A$ respectively, and $\Delta(\{F\}), \Delta(\{A\})$ denote the set of all distributions over $\{F\}, \{A\}$. Then, $\mathfrak{F} \in \Delta(\{F\}), \mathfrak{A} \in \Delta(\{A\})$ are distributions over $\{F\}, \{A\}$ and represent a randomized forecaster, adversary respectively. Note that $\abs{\{F\}}, \abs{\{A\}} < \infty$, since the domain and range of each map $F_{t}, A_{t}$ is finite. Therefore, by Theorem \ref{thm:von_neumann}, we have \begin{align}\label{eq:minmax}
        \min_{\mathfrak{F} \in \Delta(\{F\})} \max_{\mathfrak{A} \in \Delta(\{A\})} \mathbb{E}_{F \sim \mathfrak{F}, A \sim \mathfrak{A}}[\sreg^{\ell}(F, A)] = \max_{\mathfrak{A} \in \Delta(\{A\})} \min_{\mathfrak{F} \in \Delta(\{F\})}  \mathbb{E}_{F \sim \mathfrak{F}, A \sim \mathfrak{A}}[\sreg^{\ell}(F, A)]. 
    \end{align}
    For a $v \in \Rn$, to upper bound the quantity on the right hand side of \eqref{eq:minmax} by $v$, it is sufficient to prove that for any randomized adversary there exists a forecaster $F$ that guarantees that $\mathbb{E}[\sreg^{\ell}(F, A)] \le v$. Moreover, swapping the adversary and forecaster allows the forecaster to witness the distribution of $y_{t}$ before deciding $p_{t}$.
    Towards this end, we consider a forecaster $F$ which at time $t$ does the following: (a) it computes $\tilde{p}_{t} = \mathbb{E}_{t}[y_{t}]$; (b) predicts $p_{t} = \argmin_{z \in \cZ} \abs{\tilde{p}_{t} - z}$.

    For each $i \in \{1, \dots, K - 1\}$ and $n \in [T]$, let $n_{i}(n) \coloneqq \sum_{t = 1} ^ {n} \ind{p_{t} = z_{i}}$. For convinience, we refer to $n_{i}(T)$ as $n_{i}$. Fix a $i \in [K - 1]$, and define the sequence $X_{1, i}, \dots, X_{T, i}$ as follows: \begin{align*}
        X_{j, i} \coloneqq \begin{cases}
            0 & \text{ if } j > n_{i}, \\
            y_{t_{j}} - \tilde{p}_{t_{j}} & \text{ if } j \le n_{i}.
        \end{cases}
    \end{align*}
    Here $t_{j}$ denotes the $j$-th time instant when the prediction made is $p_{t} = z_{i}$. Observe that the sequence $X_{1, i}, \dots, X_{T, i}$ is a martingale difference sequence with $\abs{X_{j, i}} \le 1$ for all $j \in [T]$. In the subsequent steps we obtain a high probability bound on prefix sums of this sequence. 
    
    Fix $n \in [T], \mu \in [0, 1], \delta \in [0, 1]$. Applying Lemma \ref{lem:Freedman}, we obtain that the following inequality holds with probability at least $1 - \delta$:  \begin{align*}
        \abs{\sum_{j = 1} ^ {n} X_{j, i}} \le \mu \cV_{i}(n) + \frac{1}{\mu} \log \frac{2}{\delta},
    \end{align*}
    where $\cV_{i}(n) = \sum_{j = 1} ^ {\min(n, n_{i})} \tilde{p}_{t_{j}} (1 - \tilde{p}_{t_{j}})$. To uniformly bound $\cV_{i}(n)$ in terms of $n$, we consider the 2 cases $n \le n_{i}$ and $n > n_{i}$.
    When $n \le n_{i}$,  $\cV_{i}(n)$ can be bounded in terms of $z_{i}$ as follows \begin{align*}
        \cV_{i}(n) &= n z_{i}(1 - z_{i}) + \sum_{j = 1} ^ {n} \bigc{\tilde{p}_{t_{j}} (1 - \tilde{p}_{t_{j}}) - z_{i}(1 - z_{i})} \\
        &= n z_{i}(1 - z_{i}) + \sum_{j = 1} ^ {n} (\tilde{p}_{t_{j}} - z_{i}) \cdot (1 - \tilde{p}_{t_{j}} - z_{i}) \\
        &\le nz_{i}(1 - z_{i}) + \sum_{j = 1} ^ {n} \abs{\tilde{p}_{t_{j}} - z_{i}} \\
        &\le n \bigc{z_{i}(1 - z_{i}) + \frac{\pi}{2K}},
    \end{align*}
    where the last inequality follows from Lemma \ref{lem:discretization}. When $n > n_{i}$, we note that $\cV_{i}(n) = \cV_{i}(n_{i}) \le n\bigc{z_{i}(1 - z_{i}) + \frac{\pi}{2K}}$, since $n > n_{i}$. Therefore, with probability at least $1 - \delta$, we have \begin{align*}
        \abs{\sum_{j = 1} ^ {n} X_{j, i}}  \le \mu n\bigc{z_{i}(1 - z_{i}) + \frac{\pi}{2K}} + \frac{1}{\mu} \log \frac{2}{\delta}.
    \end{align*}
    Minimizing the bound above with respect to $\mu \in [0, 1]$, we obtain \begin{align*}
        \abs{\sum_{j = 1} ^ {n} X_{j, i}} \le \begin{cases}
            2\sqrt{n \bigc{z_{i}(1 - z_{i}) + \frac{\pi}{2K}} \log \frac{2}{\delta}} & \text{if } n \ge \frac{\log \frac{2}{\delta}}{z_{i}(1 - z_{i}) + \frac{\pi}{2K}}, \\
            n\bigc{z_{i}(1 - z_{i}) + \frac{\pi}{2K}} + \log \frac{2}{\delta} & \text{otherwise}.
        \end{cases}
    \end{align*}
    Note that when $n < \frac{\log \frac{2}{\delta}}{z_{i}(1 - z_{i}) + \frac{\pi}{2K}}$, we can simply bound $n\bigc{z_{i}(1 - z_{i}) + \frac{\pi}{2K}} + \log \frac{2}{\delta} < 2 \log \frac{2}{\delta}$. The bounds obtained for both cases can be combined into the following single bound: \begin{align*}
        \abs{\sum_{j = 1} ^ {n} X_{j, i}} \le 2\sqrt{\log \frac{2}{\delta}} \cdot\max\bigc{\sqrt{n\bigc{z_{i}(1 - z_{i}) + \frac{\pi}{2K}}}, \sqrt{\log \frac{2}{\delta}}},
    \end{align*}
    which holds with probability at least $1 - \delta$. Taking a union bound, we obtain that $\abs{\sum_{j = 1} ^ {n} X_{j, i}} \le 2\sqrt{\log \frac{2}{\delta}} \cdot\max\bigc{\sqrt{n\bigc{z_{i}(1 - z_{i}) + \frac{\pi}{2K}}}, \sqrt{\log \frac{2}{\delta}}}$ holds simultaneously for all $i \in [K - 1], n \in [T]$ with probability at least $1 - (K - 1)T\delta \ge 1 - K T \delta$. In particular, setting $n = n_{i}$, we obtain that \begin{align}\label{eq:hp_bound_sum_X}
        \abs{\sum_{j = 1} ^ {n_{i}} X_{j, i}} \le 2\sqrt{\log \frac{2}{\delta}} \cdot\max\bigc{\sqrt{n_{i}\bigc{z_{i}(1 - z_{i}) + \frac{\pi}{2K}}}, \sqrt{\log \frac{2}{\delta}}}
    \end{align}
    holds for all $i \in [K - 1]$ with probability at least $1 - K \delta T$. Equipped with this bound, in the following steps we obtain a high probabilty bound on $\sreg^{\ell}(F, A)$. This shall be used to bound $\mathbb{E}[\sreg^{\ell}(F, A)]$ eventually.

    We begin by bounding the quantity $\abs{z_{i} - \rho_{i}}$, which shall be used to obtain the high probability bound on $\sreg^{\ell}(F, A)$. We proceed as \begin{align*}
        \abs{z_{i} - \rho_{i}} &= \frac{1}{n_{i}}\abs{\sum_{t = 1} ^ {T} \ind{p_{t} = z_{i}} (z_{i} - y_{t})} \\
        &\le \frac{1}{n_{i}} \bigc{\abs{\sum_{t = 1} ^ {T} \ind{p_{t} = z_{i}} (z_{i} - \tilde{p}_{t})} + \abs{\sum_{t = 1} ^ {T}\ind{p_{t} = z_{i}} (\tilde{p}_{t} - y_{t})}} \\
        &\le \max(d_{i}, d_{i + 1}) + \frac{1}{n_{i}}\abs{\sum_{j = 1} ^ {n_{i}} X_{j, i}},
    \end{align*}
    where for each $i \in [K]$, we define $d_{i} \coloneqq z_{i} - z_{i - 1}$. The first inequality above follows from the Triangle inequality; the second inequality is because, if $p_{t} = z_{i}$, we must have $\tilde{p}_{t} \in [z_{0}, \frac{z_{1} + z_{2}}{2}]$ if $i = 1$, $\tilde{p}_{t} \in \bigs{\frac{z_{i - 1} + z_{i}}{2}, \frac{z_{i} + z_{i + 1}}{2}}$ if $2 \le i \le K - 2$, and $\tilde{p}_{t} \in \bigs{\frac{z_{K - 2} + z_{K - 1}}{2}, 1}$ if $i = K - 1$, therefore, $\abs{\tilde{p}_{t} - p_{t}} \le \max(d_{i}, d_{i + 1})$. For each $i \in [K - 1]$, let $t_{i} \coloneqq \frac{\log \frac{2}{\delta}}{z_{i}(1 - z_{i}) + \frac{\pi}{2K}}$. Next, we write $\sreg^{\ell}(F, A)$ as \begin{align*}
       \sreg^{\ell}(F, A) &= 
       {\underbrace{\sum_{i \in \cI} n_{i} \KL(\rho_{i}, z_{i})}_{\text{Term I}} + \underbrace{\sum_{i \in \bar{\cI}} n_{i} \KL(\rho_{i}, z_{i})}_{\text{Term II}}}, 
    \end{align*}
    where $\cI \coloneqq \{i \in [K - 1]; n_{i} < t_{i}\}$, and bound Term I, II individually. We begin by bounding Term II in the following manner: \begin{align*}
        \text{Term II} &\le {\sum_{i \in \bar{\cI}} n_{i} \chi ^ {2} (\rho_{i}, z_{i})} \\
        &= {\sum_{i \in \bar{\cI}} n_{i}\bigc{\frac{(\rho_{i} - z_{i}) ^ {2}}{z_{i}} + \frac{(\rho_{i} - z_{i}) ^ {2}}{1 - z_{i}}}} \\
        &= {\sum_{i \in \bar{\cI}} \frac{n_{i}(\rho_{i} - z_{i}) ^ {2}}{z_{i}(1 - z_{i})}} \\
        &\le {\sum_{i \in \bar{\cI}} \frac{2n_{i}}{z_{i}(1 - z_{i})} \bigc{(\max(d_{i}, d_{i + 1})) ^ {2} + \bigc{\frac{1}{n_{i}}\abs{\sum_{j = 1} ^ {n_{i}} X_{j, i}}} ^ {2}}} \\
        &\le \sum_{i \in \bar{\cI}} 2n_{i} \cdot \frac{(\max(d_{i}, d_{i + 1})) ^ {2}}{z_{i}(1 - z_{i})} + 8\log \frac{2}{\delta} \cdot \bigc{\sum_{i \in \bar{\cI}} \bigc{\frac{\pi}{2K} \cdot \frac{1}{z_{i}(1 - z_{i})} + 1}} \\
        &= \cO\bigc{\frac{T}{K ^ {2}}} + \cO\bigc{K \log \frac{1}{\delta}},
    \end{align*}
    where the first inequality follows since $\KL(\rho_{i}, z_{i}) \le \chi^{2}(\rho_{i}, z_{i})$; the second inequality follows from the bound on $\abs{z_{i} - \rho_{i}}$ established above, and since $(a + b) ^ {2} \le 2 a ^ {2} + 2 b ^ {2}$; the third inequality follows from \eqref{eq:hp_bound_sum_X}; the final equality follows from Lemma \ref{lem:discretization}, particularly, we use the bounds $\frac{(\max(d_{i}, d_{i + 1})) ^ {2}}{z_{i} (1 - z_{i})} = \cO(\frac{1}{K^{2}})$ and $\sum_{i = 1} ^ {K - 1} \frac{1}{z_{i}(1 - z_{i})} = \cO(K ^ {2})$. To bound Term I, we first note from the proof of Proposition \ref{prop:breg_div_decomposable} that \begin{align*}
        n_{i} \KL(\rho_{i}, z_{i}) = \sup_{\sigma: [0, 1] \to [0, 1]} \sum_{t = 1} ^ {T} \ind{p_{t} = z_{i}} (\ell(p_{t}, y_{t}) - \ell(\sigma(p_{t}), y_{t})) \le n_{i} \log \frac{1}{\sin ^ {2} \frac{\pi}{2K}},
    \end{align*}
    where the last inequality is because for the rounds where $p_{t} = z_{i}$, we have \begin{align}\label{eq:range_log_loss}
    \ell(p_{t}, y_{t}) \le \max\bigc{\log \frac{1}{z_{i}}, \log \frac{1}{1 - z_{i}}} \le \max \bigc{\log \frac{1}{\sin ^ {2} \frac{\pi}{2K}}, \log \frac{1}{1 - \cos ^ {2} \frac{\pi}{2K}}} = \cO(\log K). 
    \end{align}
    Moreover, repeating the exact same steps done to bound Term II above, we can also bound $n_{i} \KL(\rho_{i}, z_{i})$ as \begin{align*}
        n_{i} \KL(\rho_{i}, z_{i}) &\le {\frac{2n_{i}}{z_{i}(1 - z_{i})} \bigc{(\max(d_{i}, d_{i + 1})) ^ {2} + \bigc{\frac{1}{n_{i}}\abs{\sum_{j = 1} ^ {n_{i}} X_{j, i}}} ^ {2}}} \\
        &=\cO\bigc{\frac{n_{i}}{K ^ {2}}}  + 8 \bigc{\log \frac{2}{\delta}}^{2} \cdot \frac{1}{n_{i} z_{i}(1 - z_{i})} \\
        &= \cO\bigc{\frac{n_{i}}{K ^ {2}}  + \bigc{\log \frac{1}{\delta}} ^ {2} \cdot \frac{1}{n_{i} z_{i}(1 - z_{i})}},
    \end{align*}
    where the first equality follows from Lemma \ref{lem:discretization} and \eqref{eq:hp_bound_sum_X}. 
    Taking minimum of the two bounds obtained above, we obtain \begin{align*}
        n_{i} \KL(\rho_{i}, z_{i}) &= \cO\bigc{\min\bigc{n_{i} \log K, \frac{n_{i}}{K ^ {2}}  + \bigc{\log \frac{1}{\delta}} ^ {2} \cdot \frac{1}{n_{i} z_{i}(1 - z_{i})}}} \\
        &= \cO\bigc{\frac{n_{i}}{K ^ {2}} + \min\bigc{n_{i} \log K, \bigc{\log \frac{1}{\delta}} ^ {2} \cdot \frac{1}{n_{i} z_{i}(1 - z_{i})}}} \\
        &= \cO\bigc{\frac{n_{i}}{K ^ {2}} + \sqrt{\log K}\log \frac{1}{\delta} \cdot \frac{1}{\sqrt{z_{i} (1 - z_{i})}}},
    \end{align*}
    where the final inequality follows since for a fixed $a > 0$, $\min(x, \frac{a}{x}) \le \sqrt{a}$ holds for all $x \in \Rn$. Summing over $i \in \cI$, we obtain the following bound on Term I: \begin{align*}
        \text{Term I} = \cO\bigc{\frac{1}{K ^ {2}} \sum_{i \in \cI} n_{i} + \sqrt{\log K}\log \frac{1}{\delta} \cdot \sum_{i \in \cI} \frac{1}{\sqrt{z_{i} (1 - z_{i})}}} = \cO\bigc{\frac{T}{K ^ {2}} + K (\log K)^{\frac{3}{2}}  \log \frac{1}{\delta}},
    \end{align*}
    where the last equality follows from Lemma \ref{lem:discretization}, particularly, $\sum_{i = 1} ^ {K - 1} \frac{1}{\sqrt{z_{i}(1 - z_{i})}} = \cO(K \log K)$. Summarizing, we have shown that \begin{align*}
        \text{Term I} = \cO\bigc{\frac{T}{K ^ {2}} + K (\log K)^{\frac{3}{2}}  \log \frac{1}{\delta}}, \quad \text{Term II} = \cO\bigc{\frac{T}{K ^ {2}} + K \log \frac{1}{\delta}}
    \end{align*}
    hold simultaneously with probability at least $1 - KT \delta$. Therefore, \begin{align}\label{eq:bound_sreg_minimax}
        \sreg^{\ell}(F, A) = \cO\bigc{\frac{T}{K ^ {2}} + K (\log K) ^ {\frac{3}{2}} \log \frac{1}{\delta}}
    \end{align}
    with probability at least $1 - KT \delta$. To bound $\mathbb{E}[\sreg^{\ell}(F, A)]$, we let $\cE$ be the event in \eqref{eq:bound_sreg_minimax}.
    Therefore, \begin{align*}
        \mathbb{E}[\sreg^{\ell}(F, A)] &=  \mathbb{E}[\sreg^{\ell}(F, A) | \cE] \cdot \mathbb{P}(\cE) + \mathbb{E}[\sreg^{\ell}(F, A) | \bar{\cE}] \cdot \mathbb{P}(\bar{\cE}) \\
        &= \cO\bigc{\frac{T}{K ^ {2}} + K (\log K)^{\frac{3}{2}} \log \frac{1}{\delta} + (K\log K) T^{2} \delta} \\
        &= \cO\bigc{\frac{T}{K ^ {2}} + K(\log K) ^ {\frac{3}{2}} \log T + K \log K} \\
        &= \cO(T^{\frac{1}{3}} (\log T) ^ {\frac{5}{3}}),
    \end{align*}
    where the second equality follows by using the high probability bound on $\sreg^{\ell}(F, A)$ obtained in \eqref{eq:bound_sreg_minimax}, and bounding $\mathbb{E}[\sreg^{\ell}(F, A) | \bar{\cE}] = \cO(T \log K)$, which follows from \eqref{eq:range_log_loss}; the third equality follows by choosing $\delta = \frac{1}{T^{2}}$;
    the final equality follows by choosing $K = \frac{T^{\frac{1}{3}}}{(\log T)^{\frac{5}{6}}}$. This completes the proof.
\end{proof}

\begin{lemma}\label{lem:discretization}
Fix a $k \in \mathbb{N}$. Let $\{z_{i}\}_{i = 0} ^ {K}$ be a sequence where $z_{0} = 1, z_{i} = \sin ^ {2} \bigc{\frac{\pi i}{2K}}$ for $i = 1, \dots, K - 1$, and $z_{K} = 1$. For each $i = 1, \dots, K$, define $d_{i} \coloneqq z_{i} - z_{i - 1}$. Then, the following holds: (a) $d_{i} \le {\frac{\pi}{2K}}$ for all $i \in [K]$; (b) $\frac{\max ^ {2}(d_{i}, d_{i + 1})}{z_{i}(1 - z_{i})} = \cO\bigc{\frac{1}{K ^ {2}}}$; (c) $\sum_{i = 1} ^ {K - 1} \frac{1}{z_{i}(1 - z_{i})} = \cO(K ^ {2})$; and (d) $\sum_{i = 1} ^ {K - 1} \frac{1}{\sqrt{z_{i}(1 - z_{i})}} = \cO(K \log K)$.
\end{lemma}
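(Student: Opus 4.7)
The backbone of the proof is the product identity $z_i - z_{i-1} = \sin^2(\pi i/(2K)) - \sin^2(\pi(i-1)/(2K)) = \sin(\pi/(2K))\,\sin(\pi(2i-1)/(2K))$ (via $\sin^2 A - \sin^2 B = \sin(A-B)\sin(A+B)$), together with the double-angle formula $z_i(1-z_i) = \sin^2(\pi i/(2K))\cos^2(\pi i/(2K)) = \tfrac14 \sin^2(\pi i/K)$. I would establish these two identities up front and use them throughout; the boundary cases $d_1 = \sin^2(\pi/(2K))$ and $d_K = \cos^2(\pi(K-1)/(2K)) = \sin^2(\pi/(2K))$ both fit the same pattern since $z_0 = 0$ and $z_K = 1$.

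For part (a), the product identity gives $d_i \le \sin(\pi/(2K)) \cdot 1 \le \pi/(2K)$ for $1 \le i \le K-1$, and the boundary values are handled by $\sin^2(\pi/(2K)) \le \pi/(2K)$. For part (b), the key calculation is to bound the ratio $\sin(\pi(2i\pm 1)/(2K))/\sin(\pi i/K)$ uniformly. Writing $\beta = \pi i/K$, $\alpha = \pi/(2K)$, I would expand $\sin(\beta \pm \alpha)/\sin\beta = \cos\alpha \pm \cot\beta \sin\alpha$, and observe that $|\cot\beta|\sin\alpha = |\cos(\pi i/K)|\cdot\sin(\pi/(2K))/\sin(\pi i/K) \le 1$ because $\sin(\pi i/K) \ge \sin(\pi/K) = 2\sin(\pi/(2K))\cos(\pi/(2K))$ for all $i \in [K-1]$ (using symmetry $\sin(\pi i/K) = \sin(\pi(K-i)/K)$). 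This yields $d_i/\sqrt{z_i(1-z_i)} = 2\sin(\pi/(2K))\sin(\pi(2i-1)/(2K))/\sin(\pi i/K) \le 4\sin(\pi/(2K)) = O(1/K)$, and analogously for $d_{i+1}$, so squaring gives (b).

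For parts (c) and (d), the main tool is the elementary bound $\sin(\pi i/K) \ge (2/K)\min(i, K-i)$ for $i \in [K-1]$ (from $\sin x \ge (2/\pi)x$ on $[0,\pi/2]$ plus the symmetry of $\sin$ about $\pi/2$). Substituting into $1/(z_i(1-z_i)) = 4/\sin^2(\pi i/K) \le K^2/\min(i,K-i)^2$ and summing, I split at $i = \lceil K/2\rceil$, obtaining $\sum 1/(z_i(1-z_i)) \le 2K^2 \sum_{i=1}^{\lceil K/2\rceil} 1/i^2 = O(K^2)$ for (c). The same splitting gives $\sum 1/\sqrt{z_i(1-z_i)} \le 2K\sum_{i=1}^{\lceil K/2\rceil} 1/i = O(K\log K)$ for (d).

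The only mildly delicate point is (b): one must avoid the naive bound $\sin(\pi(2i-1)/(2K)) \le 1$, which fails near $i = K-1$ because $\sqrt{z_i(1-z_i)}$ is only $\Theta(1/K)$ there, producing an $O(1)$ ratio rather than $O(1/K)$. The expansion $\sin(\beta \pm \alpha)/\sin\beta = \cos\alpha \pm \cot\beta \sin\alpha$ correctly captures that the numerator and denominator vanish at comparable rates near both endpoints, and is the main step I expect to require the most care.
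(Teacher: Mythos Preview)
Your proposal is correct. Parts (a) and (b) follow essentially the same approach as the paper: the paper derives the same product identity $d_i = \sin(\pi/(2K))\sin((\pi/K)(i-1/2))$ (via the double-angle and sum-to-product formulas) and the same expansion $\sin(\beta\pm\alpha)/\sin\beta = \cos\alpha \pm \cot\beta\,\sin\alpha$. The paper splits into the cases $i \le (K-1)/2$ and $i \ge (K+1)/2$ to identify which of $d_i,d_{i+1}$ realizes the maximum before bounding $\cot(\pi i/K)\sin(\pi/(2K))$, whereas you bound $|\cot(\pi i/K)|\sin(\pi/(2K))$ uniformly via $\sin(\pi i/K)\ge \sin(\pi/K)$; both arrive at the same $\cO(1/K^2)$ conclusion, and your version is a modest streamlining.

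For parts (c) and (d) you take a genuinely different route. The paper bounds $\sum_i 1/\sin^2(\pi i/K)$ and $\sum_i 1/\sin(\pi i/K)$ by integral comparison, explicitly computing $\int \csc^2 = -\cot$ and $\int \csc = \log(\csc+\cot)$. You instead invoke Jordan's inequality $\sin x \ge (2/\pi)x$ on $[0,\pi/2]$ to obtain $\sin(\pi i/K) \ge (2/K)\min(i,K-i)$, reducing the problem to the standard facts $\sum_{i\ge 1} 1/i^2 = \cO(1)$ and $\sum_{i=1}^{m} 1/i = \cO(\log m)$. Your argument is more elementary and avoids antiderivatives of $\csc$ and $\csc^2$; the paper's approach in exchange yields explicit leading constants (e.g., $(K/\pi)\cot(\pi/K)$), which are not needed for the $\cO(\cdot)$ statement here.
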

\begin{proof}
    By direct computation, we have \begin{align}\label{eq:bound_diff}
        z_{i} - z_{i - 1} = \sin ^ {2} {\frac{\pi i}{2K}} - \sin ^ {2} {\frac{\pi(i - 1)}{2K}} = \frac{\cos {\frac{\pi(i - 1)}{K}} - \cos {\frac{\pi i}{K}}}{2} = \sin {\frac{\pi}{2K}} \sin \bigc{\frac{\pi}{K} \bigc{i - \frac{1}{2}}},
    \end{align}
    where the second equality follows from the identity $\sin ^ {2} \theta = \frac{1 - \cos 2\theta}{2}$, while the last equality follows from the identity $\cos \alpha - \cos \beta = 2 \sin \frac{\alpha + \beta}{2} \sin \frac{\beta - \alpha}{2}$. Since $\sin \theta \le \theta$ for all $\theta \in \Rn$, and bounding $\sin \theta \le 1$, we obtain $z_{i} - z_{i - 1} \le \frac{\pi}{2K}$, which completes the proof for the first part of the lemma. 
    
    For the second part, we note that \begin{align*}
        \frac{\max ^ {2}(d_{i}, d_{i + 1})}{z_{i}(1 - z_{i})} = \frac{\max ^ {2}(d_{i}, d_{i + 1})}{\sin ^ {2} \frac{\pi i}{2K} \cos ^ {2} \frac{\pi i}{2K}} = 4 \cdot \frac{\max ^ {2}(d_{i}, d_{i + 1})}{\sin ^ {2} \frac{\pi i}{K}}, 
    \end{align*}
    where the second equality follows from the identity $\sin 2\theta = 2\sin \theta \cos \theta$. It follows from \eqref{eq:bound_diff} that \begin{align*}
        \max(d_{i}, d_{i + 1}) = \sin \frac{\pi}{2K} \cdot \max\bigc{\sin \bigc{\frac{\pi}{K} \bigc{i - \frac{1}{2}}}, \sin \bigc{\frac{\pi}{K} \bigc{i + \frac{1}{2}}}}.
    \end{align*}
    For simplicity, we assume that $K$ is odd, although a similar treatment can be done for even $K$. Let $1 \le i \le \frac{K - 1}{2}$. Then,  $\max(d_{i}, d_{i + 1}) = \sin \frac{\pi}{2K} \sin \bigc{\frac{\pi}{K} \bigc{i + \frac{1}{2}}}$. Observe that \begin{align*}
        \frac{\sin \bigc{\frac{\pi}{K} \bigc{i + \frac{1}{2}}}}{\sin \frac{\pi i}{K}} = \frac{\sin \frac{\pi i}{K} \cos \frac{\pi}{2K} + \cos \frac{\pi i}{K} \sin \frac{\pi}{2K}}{\sin \frac{\pi i}{K}} = \cos \frac{\pi}{2K} + \cot \frac{\pi i}{K} \sin \frac{\pi}{2K} \le 1 + \frac{\sin \frac{\pi}{2K}}{\sin \frac{\pi}{K}}, 
    \end{align*}
    where the first equality follows from the identity $\sin (\alpha + \beta) = \sin \alpha \cos \beta + \cos \alpha \sin \beta$, while the inequality follows by noting that $\cot \frac{\pi i}{K} \le \cot \frac{\pi}{K}$ for all $1 \le i \le \frac{K - 1}{2}$. Finally, since $\frac{\sin \frac{\pi}{2K}}{\sin \frac{\pi}{K}} = \frac{1}{2 \cos \frac{\pi}{2K}} = \cO(1)$, we obtain $\frac{\max ^ {2} (d_{i}, d_{i + 1})}{z_{i}(1 - z_{i})} = \cO (\sin ^ {2} \frac{\pi}{2K}) = \cO(\frac{1}{K ^ {2}})$. Next, we consider the case when $\frac{K + 1}{2} \le i  \le K - 1$. Then, $\max(d_{i}, d_{i + 1}) = \sin \frac{\pi}{2K} \sin \bigc{\frac{\pi}{K} \bigc{i - \frac{1}{2}}}$. Repeating a similar analysis as before, we obtain \begin{align*}
         \frac{\sin \bigc{\frac{\pi}{K} \bigc{i - \frac{1}{2}}}}{\sin \frac{\pi i}{K}} = \frac{\sin \frac{\pi i}{K} \cos \frac{\pi}{2K} - \cos \frac{\pi i}{K} \sin \frac{\pi}{2K}}{\sin \frac{\pi i}{K}} = \cos \frac{\pi}{2K} - \cot \frac{\pi i}{K} \sin \frac{\pi}{2K} \le 1 + \frac{\sin \frac{\pi}{2K}}{\sin \frac{\pi}{K}},
    \end{align*}
    which is $\cO(1)$ as claimed earlier. Therefore, $\frac{\max ^ {2}(d_{i}, d_{i + 1})}{z_{i}(1 - z_{i})} = \cO(\frac{1}{K ^ {2}})$. Combining both the cases completes the proof of (b) above. 
    
    For (c), similar to (b), we assume for simplicity that $K$ is odd. Then, \begin{align*}
        \sum_{i = 1} ^ {K - 1} \frac{1}{z_{i} (1 - z_{i})} = 4 \sum_{i = 1} ^ {K - 1} \frac{1}{\sin ^ {2} \frac{\pi i}{K}} = 8\sum_{i = 1} ^ {\frac{K - 1}{2}} \frac{1}{\sin ^ {2} \frac{\pi i}{K}}, \end{align*}
        and the summation $\sum_{i = 1} ^ {\frac{K - 1}{2}} \frac{1}{\sin ^ {2} \frac{\pi i}{K}}$ can be bounded in the following manner:
        \begin{align*}
        \sum_{i = 1} ^ {\frac{K - 1}{2}} \frac{1}{\sin ^ {2} \frac{\pi i}{K}} &\le \bigc{\frac{1}{\sin ^ {2} \frac{\pi}{K}} + \int_{1} ^ {\frac{K - 1}{2}} \frac{1}{\sin ^ {2} \frac{\pi \nu}{K}} d\nu} \\
        &\le  \bigc{\frac{1}{\sin ^ {2} \frac{\pi}{K}} + \int_{1} ^ {\frac{K}{2}} \frac{1}{\sin ^ {2} \frac{\pi \nu}{K}} d\nu} \\
        &= \bigc{\frac{1}{\sin ^ {2} \frac{\pi}{K}} + \frac{K}{\pi}\int_{\frac{\pi}{K}} ^ {\frac{\pi}{2}} \frac{1}{\sin ^ 2 \nu} d\nu} \\
        &= \bigc{\frac{1}{\sin ^ {2} \frac{\pi}{K}} + \frac{K}{\pi} \cot \frac{\pi}{K}} = \cO(K ^ {2}).
    \end{align*}
    This completes the proof for (c). Repeating the exact same steps as (c) proves (d). We include the full proof for completeness. Observe that \begin{align*}
        \sum_{i = 1} ^ {K - 1} \frac{1}{\sqrt{z_{i} (1 - z_{i})}} = 2 \sum_{i = 1} ^ {K - 1} \frac{1}{\sin \frac{\pi i}{K}} = 4 \sum_{i = 1} ^ {\frac{K - 1}{2}} \frac{1}{\sin \frac{\pi i}{K}}, 
    \end{align*}
    and the summation $\sum_{i = 1} ^ {\frac{K - 1}{2}} \frac{1}{\sin \frac{\pi i}{K}}$ can be bounded in the following manner: \begin{align*}
        \sum_{i = 1} ^ {\frac{K - 1}{2}} \frac{1}{ \sin \frac{\pi i}{K}} \le \frac{1}{\sin \frac{\pi}{K}} + \int_{1} ^ {\frac{K - 1}{2}} \frac{1}{\sin \frac{\pi \nu}{K}} d\nu \le \frac{1}{\sin \frac{\pi}{K}} + \int_{1} ^ {\frac{K}{2}} \frac{1}{\sin \frac{\pi \nu}{K}} d\nu = {\frac{1}{\sin \frac{\pi}{K}} + \frac{K}{\pi}\int_{\frac{\pi}{K}} ^ {\frac{\pi}{2}} \frac{1}{\sin \nu} d\nu}.
    \end{align*}
    The integral above evaluates to $\log \bigc{\csc \frac{\pi}{K} + \cot \frac{\pi}{K}}$. Therefore, we have that \begin{align*}
        \sum_{i = 1} ^ {K - 1} \frac{1}{\sqrt{z_{i} (1 - z_{i})}} \le 4\bigc{\csc \frac{\pi}{K} + \frac{K}{\pi} \log \bigc{\csc \frac{\pi}{K} + \cot \frac{\pi}{K}}} = \cO(K \log K).
    \end{align*}
    This completes the proof.
\end{proof}

\subsection{Proof of Corollary \ref{cor:simutaneous_bounds_msr}}
\begin{proof}
    Let $\cA$ be the algorithm guaranteed by Theorem \ref{thm:log_loss_swap_reg}.  By Pinsker's inequality, we get that $\cA$ guarantees $\mathbb{E}[\cal_{2}] = \cO(T^{\frac{1}{3}} (\log T) ^ {\frac{5}{3}})$. Moreover, since $\cal_{1} \le \sqrt{T \cdot \cal_{2}}$ \citep[Lemma 13]{kleinberg2023u}, by Jensen's inequality we have $\mathbb{E}[\cal_{1}] \le \sqrt{T \cdot \mathbb{E}[\cal_{2}]} = \cO(T^{\frac{1}{3}} (\log T) ^ {\frac{5}{6}})$. Next, \cite[Theorem 12]{kleinberg2023u} states that for any proper loss $\ell$, we have $\sreg^{\ell} \le 4 \cal_{1}$. Therefore, $\mathbb{E}[\sreg^{\ell}] \le 4 \mathbb{E}[\cal_{1}] = \cO(T^{\frac{2}{3}} (\log T) ^ {\frac{5}{6}})$. Combining this with the result of Proposition \ref{prop:breg_div_decomposable}, \ref{prop:bound_breg_div_quadratically} completes the proof.
\end{proof}

\section{Deferred proofs and discussion in Section \ref{sec:pseudo-KL-Cal}}\label{app:deferred_proofs_pseudo_KL_Cal}
\subsection{Computational cost of Algorithm \ref{alg:BM_log_loss}}
{The cost of Algorithm \ref{alg:BM_log_loss} at every time step is at most $\cO\bigc{K ^ {2} + \text{INT} + \text{ST}}$, where $\text{ST}$ is the time required to compute the stationary distribution of $Q_{t}$ and $\text{INT}$ denotes the computation required for evaluating the integral $\frac{\int_{0} ^ {1} w \mu_{t, i}(w) dw}{\int_{0} ^ {1} \mu_{t, i}(w) dw}$ in line 3 of Algorithm \ref{alg:EWOO}; the $\cO(K^{2})$ cost is incurred in forming the matrix $Q_{t}$, and all other operations in Algorithm \ref{alg:BM_log_loss} can be carried out in time that is no worse than $\cO(K^{2})$. For $\text{ST}$, the stationary distribution of $Q_{t}$ can be computed by the method of power iteration; notably, each iteration shall incur cost $\cO(\text{nnz}(Q_{t}))$, where $\text{nnz}(Q_{t})$ represents the number of non-zero entries in $Q_{t}$. Since each column of $Q_{t}$ has at most two non-zero entries (Algorithm \ref{alg:rounding_alg} randomizes over two adjacent points in the discretization), $\text{nnz}(Q_{t}) = \Theta(K)$. For \text{INT}, the integral is over $[0, 1]$ and has a closed-form expression in terms of the gamma function $\Gamma(z) \coloneqq \int_{0} ^ {1} \exp(-t) t^{z - 1} dt$ as derived below. Recall that \begin{align*}
    f_{\tau, i}(w) = p_{\tau, i} \ell(w, y_{\tau}) &= -p_{\tau, i} \bigc{y_{\tau} \log w + (1 - y_{\tau}) \log (1 - w)} \\
    &= \log \bigc{w ^ {-y_{\tau} p_{\tau, i}} (1 - w) ^ {-p_{\tau, i} (1 - y_{\tau})}}.
\end{align*}
Therefore, $\mu_{t, i}(w) = \exp\bigc{-\sum_{\tau = 1} ^ {t - 1} f_{\tau, i}(w)} =  w^{\sum_{\tau = 1} ^ {t - 1} y_{\tau} p_{\tau, i}} (1 - w) ^ {\sum_{\tau = 1} ^ {t - 1} p_{\tau, i} (1 - y_{\tau})}$. For convenience, let $\gamma \coloneqq \sum_{\tau = 1} ^ {t - 1} y_{\tau} p_{\tau, i}, \delta \coloneqq \sum_{\tau = 1} ^ {t - 1} p_{\tau, i} (1 - y_{\tau})$. Then, $\int_{0} ^ {1} \mu_{t, i} (w) dw = \int_{0} ^ {1} w ^ {\gamma} (1 - w) ^ {\delta} dw = \text{B}(\gamma + 1, \delta + 1)$, where $\text{B}(z_{1}, z_{2})$ denotes the beta function, defined as $\text{B}(z_{1}, z_{2}) \coloneqq \int_{0} ^ {1} t ^ {z_{1} - 1} (1 - t) ^ {z_{2} - 1} dt$. Since $\text{B}(z_{1}, z_{2}) = \frac{\Gamma(z_{1}) \Gamma(z_{2})}{\Gamma(z_{1} + z_{2})}$ for all $z_{1}, z_{2}$ with $z_{1}, z_{2} > 0$, we have $$\int_{0} ^ {1} \mu_{t, i}(w) dw = \frac{\Gamma(\gamma + 1) \Gamma(\delta + 1)}{\Gamma(\gamma + \delta + 2)}.$$ Similarly, $$\int_{0} ^ {1} w \mu_{t, i}(w) dw = \int_{0} ^ {1} w ^ {\gamma + 1} (1 - w) ^ {\delta} dw = \text{B}(\gamma + 2, \delta + 1) = \frac{\Gamma(\gamma + 2) \Gamma(\delta + 1)}{\Gamma (\gamma + \delta + 3)}.$$ Taking ratio of the two integrals above and using the identity $\Gamma(z + 1) = z \Gamma (z)$, which holds for all $z$ with $z > 0$, we obtain \begin{align*}
    \frac{\int_{0} ^ {1} w \mu_{t, i}(w) dw}{\int_{0} ^ {1} \mu_{t, i}(w) dw} = \frac{\Gamma (\gamma + 2)}{\Gamma (\gamma + 1)} \cdot \frac{\Gamma (\gamma + \delta + 2)}{\Gamma (\gamma + \delta + 3)} = \frac{\gamma + 1}{\gamma + \delta + 2} = \bigc{1 + \frac{\delta + 1}{\gamma + 1}} ^ {-1}.
\end{align*}
Clearly, at each time $t$, both $\gamma$ and $\delta$ can be computed in $\cO(1)$ time using the previously memorized values corresponding to time $t - 1$. Therefore, $\text{INT} = \cO(1)$. Since $K = \tilde{\Theta}(T^{\frac{1}{3}})$, the overall computation cost over $T$ rounds is $\tilde{\cO}(T^{\frac{5}{3}} + T \cdot \text{ST})$.}

\subsection{Expected loss of common rounding schemes}
We recall the discussion in Section \ref{sec:pseudo-KL-Cal}:
at each time $t$, $\text{EWOO}_{i}$ outputs $w_{t, i} \in [0, 1]$, however, $\cA_{i}$ is required to predict a distribution $\q_{t, i} \in \Delta_{K + 1}$ over $\cZ$. Thus, we need to perform a rounding operation that projects the output $w_{t, i}$ of $\text{EWOO}_{i}$ to a distribution over $\cZ$. 
In the remark below,
we show that the following two known rounding schemes: (a) rounding $w_{t, i}$ to the nearest $z \in \cZ$ and setting $\q_{t, i}$ as the corresponding one-hot vector; (b) the rounding procedure proposed by \cite{fishelsonfull}, cannot be applied to our setting since they incur a $\Omega(1)$ change in the expected loss $\ip{\q_{t, i}}{\bi{\ell}_{t}} - \ell(w_{t, i}, y_{t})$, which is not sufficient to achieve the desired regret guarantee.

\begin{remark}\label{rem:rounding}
    Let $y_{t} = 1$ and $w_{t, i} = \frac{z_{0} + z_{1}}{2}$. The rounding procedure in (a) above ensures that $\q_{t, i} = \e_{0}$ with probability one. Therefore, $\ip{\q_{t, i}}{\bi{\ell}_{t}} - \ell(w_{t, i}, y_{t}) = \ell(z_{0}, 1) - \ell\bigc{\frac{z_{0} + z_{1}}{2}, 1} = \log \frac{z_{0} + z_{1}}{2z_{0}}$. Observe that $\frac{z_{1}}{z_{0}} = \frac{\sin ^ {2} \frac{\pi}{2K}}{\sin ^ {2} \frac{\pi}{4K}} = 4 \cos ^ {2} \frac{\pi}{4K} = 2 + 2 \cos \frac{\pi}{2K}.$ Therefore, $\ip{\q_{t, i}}{\bi{\ell}_{t}} - \ell(w_{t, i}, y_{t}) = \log \bigc{\frac{3}{2} + \cos \frac{\pi}{2K}} = \Omega(1)$. For the chosen example, the rounding procedure in (b) sets $q_{t, i}(0) = q_{t, i}(1) = \frac{1}{2}$. Thus, $\ip{\q_{t, i}}{\bi{\ell}_{t}} - \ell(w_{t, i}, y_{t}) = \frac{\ell(z_{0}, 1) + \ell(z_{1}, 1)}{2} - \ell\bigc{\frac{z_{0} + z_{1}}{2}, 1} = \log \frac{z_{0} + z_{1}}{2\sqrt{z_{0} z_{1}}} = \log \frac{{1 + 4\cos ^ {2} \frac{\pi}{4K}}}{4\cos \frac{\pi}{4K}} = \Omega(1)$. 
\end{remark}

\subsection{Proof of Lemma \ref{lem:rounding}}
\begin{proof}
    Since the log loss $\ell(p, y)$ is convex in $p$ (for any $y \in \{0, 1\}$), we have \begin{align}\label{eq:convexity}
    \ell(q, y) - \ell(p, y) \le \ell'(q, y) \cdot (q - p) = \frac{(q - y)(q - p)}{q(1 - q)} = \begin{cases}
        \frac{p}{q} - 1 & \text{if } y = 1, \\
        \frac{1 - p}{1 - q} - 1 & \text{if } y = 0.
    \end{cases}
    \end{align}
    Let $y = 1$. Taking expectation on both sides of \eqref{eq:convexity}, we obtain $\mathbb{E}[\ell(q, y)] - \ell(p, y) = \mathbb{E}\bigs{\frac{p}{q}} - 1$. To simplify the expressions involved in the computation of $\mathbb{E}\bigs{\frac{1}{q}}$, we define the normalizing factor $D \coloneqq \frac{p^{+} - p}{p^{+}(1 - p^{+})} + \frac{p - p^{-}}{p^{-}(1 - p^{-})}$. By direct computation, we have \begin{align*}
        \mathbb{E}\bigs{\frac{1}{q}} = \frac{1}{D} \bigc{\frac{p^{+} - p}{p^{-}p^{+}(1 - p^{+})} + \frac{p - p^{-}}{p^{-}p^{+}(1 - p^{-})}} = \frac{1}{D} \cdot \frac{(p^{+} - p^{-})(1 - p)}{p^{-}p^{+}(1 - p^{-})(1 - p^{+})}.
    \end{align*}
    Similarly, by direct computation, we obtain \begin{align*}
        D = \frac{p^{+} - p}{p^{+}(1 - p^{+})} + \frac{p - p^{-}}{p^{-}(1 - p^{-})} = \frac{(p^{+} - p^{-})\bigc{p + p^{-}p^{+} - p(p^{-} + p^{+})}}{p^{-}p^{+}(1 - p^{-})(1 - p^{+})}.
    \end{align*}
    Therefore, \begin{align*}
        \mathbb{E}\bigs{\frac{p}{q}} - 1 = \frac{p(1 - p)}{p + p^{-}p^{+} - p(p^{-} + p^{+})} - 1 = \frac{(p^{+} - p)(p - p^{-})}{p + p^{-}p^{+} - p(p^{-} + p^{+})} \le \frac{(p^{+} - p^{-}) ^ {2}}{p + p^{-}p^{+} - p(p^{-} + p^{+})}. 
    \end{align*}
    Next, we let $y = 0$. Taking expectation on both sides of \eqref{eq:convexity}, we obtain $\mathbb{E}[\ell(q, y)] - \ell(p, y) = \mathbb{E}\bigs{\frac{1 - p}{1 - q}} - 1$, thus, we require to bound $\mathbb{E}\bigs{\frac{1}{1 - q}}$. Direct computation yields \begin{align*}
        \mathbb{E}\bigs{\frac{1}{1 - q}} = \frac{1}{D}\bigc{\frac{p^{+} - p}{p^{+}(1 - p^{-})(1 - p^{+})} + \frac{p - p^{-}}{p^{-}(1 - p^{-})(1 - p^{+})}} = \frac{1}{D} \cdot \frac{p(p^{+} - p^{-})}{p^{-}p^{+}(1 - p^{-})(1 - p^{+})}.
    \end{align*}
    Substituting the expression for $D$ obtained above, we obtain \begin{align*}
        \mathbb{E}\bigs{\frac{1 - p}{1 - q}} - 1 = \frac{p(1 - p)}{p + p^{-}p^{+} - p(p^{-} + p^{+})} - 1 &= \frac{(p^{+} - p)(p - p^{-})}{p + p^{-}p^{+} - p(p^{-} + p^{+})} \\ &\le \frac{(p^{+} - p^{-}) ^ {2}}{p + p^{-}p^{+} - p(p^{-} + p^{+})}.
    \end{align*}
    Let $f(p) = p + p^{-}p^{+} - p(p^{-} + p^{+})$. Since $f(p)$ is linear in $p$, for any $p \in [p^{-}, p^{+})$, we have $\min(f(p^{-}), f(p^{+})) \le f(p) \le \max(f(p^{-}), f(p^{+}))$. Since $f(p^{-}) = p^{-}(1 - p^{-}), f(p^{+}) = p^{+}(1 - p^{+})$, we obtain \begin{align*}
        \min\bigc{p^{-}(1 - p^{-}), p^{+}(1 - p^{+})} \le p + p^{-}p^{+} - p(p^{-} + p^{+}) \le \max\bigc{p^{-}(1 - p^{-}), p^{+}(1 - p^{+})}
    \end{align*}
    for all $p \in [p^{-}, p^{+})$. Therefore, \begin{align*}
        \mathbb{E}_{q}[\ell(q, y)] - \ell(p, y) \le (p^{+} - p^{-}) ^ {2} \cdot \max\bigc{\frac{1}{p^{-}(1 - p^{-})}, \frac{1}{p^{+}(1 - p^{+})}} = \cO\bigc{\frac{1}{K ^ {2}}},
    \end{align*}
    where the last equality follows from Lemma \ref{lem:discretization_II}. This completes the proof.
\end{proof}
\begin{lemma}\label{lem:discretization_II}
Fix a $k \in \mathbb{N}$. Let $\{z_{i}\}_{i = 0} ^ {K}$ be a sequence where $z_{0} = \sin ^ {2} \frac{\pi}{4K}, z_{i} = \sin ^ {2} \bigc{\frac{\pi i}{2K}}$ for $i \in [K - 1]$, and $z_{K} = \cos ^ 2 \frac{\pi}{4K}$. For each $i = 1, \dots, K$, define $d_{i} \coloneqq z_{i} - z_{i - 1}$. Then, the following holds true for all $i \in [K]$: (a) $\frac{d_{i} ^ {2}}{z_{i}(1 - z_{i})} = \cO\bigc{\frac{1}{K ^ {2}}}$, and (b) $\frac{d_{i} ^ {2}}{z_{i - 1}(1 - z_{i - 1})} = \cO\bigc{\frac{1}{K ^ {2}}}$.
\end{lemma}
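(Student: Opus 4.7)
The plan is to split into cases based on whether $i$ lies in the interior or at the boundary of the augmented discretization. For interior indices $i \in \{2, \dots, K-1\}$, both $z_{i-1}$ and $z_i$ coincide with points in the discretization of \pref{lem:discretization} (which is just the present construction without the two extra endpoints), and $d_i$ equals the corresponding gap there. So part (b) of \pref{lem:discretization} applied at index $i \in [K-1]$ yields
\[
\frac{d_i^2}{z_i(1-z_i)} \;\le\; \frac{\max^2(d_i, d_{i+1})}{z_i(1-z_i)} \;=\; \cO\!\left(\tfrac{1}{K^2}\right),
\]
and the same Lemma applied at index $i-1 \in [K-1]$ yields the companion bound $\frac{d_i^2}{z_{i-1}(1-z_{i-1})} = \cO(1/K^2)$.

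It remains to handle the boundary indices $i \in \{1, K\}$. For $i=1$, the product-to-sum identity $\sin^2\alpha - \sin^2\beta = \sin(\alpha+\beta)\sin(\alpha-\beta)$ gives $d_1 = \sin\tfrac{3\pi}{4K}\sin\tfrac{\pi}{4K}$, while direct computation gives $z_0(1-z_0) = \tfrac{1}{4}\sin^2\tfrac{\pi}{2K}$ and $z_1(1-z_1) = \tfrac{1}{4}\sin^2\tfrac{\pi}{K}$. Plugging in and using the double-angle identity $\sin\tfrac{\pi}{K} = 4\sin\tfrac{\pi}{4K}\cos\tfrac{\pi}{4K}\cos\tfrac{\pi}{2K}$, the $\sin^2\tfrac{\pi}{4K}$ factors cancel cleanly in both ratios, leaving $\sin^2\tfrac{3\pi}{4K}$ divided by products of cosines that are bounded below by absolute constants for all $K \ge 2$. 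Since $\sin\tfrac{3\pi}{4K} = \cO(1/K)$, parts (a) and (b) follow at $i=1$. The case $i = K$ then reduces to $i = 1$ by the reflection $z \mapsto 1 - z$: one has $z_{K-1} = 1 - z_1$ and $z_K = 1 - z_0$ (using $\sin^2\tfrac{\pi(K-1)}{2K} = \cos^2\tfrac{\pi}{2K}$), which leaves $d_K = d_1$ as well as the products $z(1-z)$ invariant.

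The main obstacle is merely spotting that the $\sin^2\tfrac{\pi}{4K}$ factor introduced by the new endpoint $z_0$ precisely cancels the corresponding factor in $z_0(1-z_0)$ (and, by reflection, in $z_K(1-z_K)$); once this cancellation is identified, the boundary bounds become immediate and the interior case is essentially free from \pref{lem:discretization}.
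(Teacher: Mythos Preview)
Your proof is correct and follows essentially the same strategy as the paper: reduce the interior indices $i\in\{2,\dots,K-1\}$ to \pref{lem:discretization} and handle the boundary cases $i\in\{1,K\}$ by direct trigonometric computation. The only minor differences are that the paper bounds $d_1$ (and $d_K$) crudely by $\sin^2\tfrac{\pi}{2K}$ rather than using the exact product-to-sum formula, and it treats $i=K$ by repeating the computation rather than invoking the $z\mapsto 1-z$ symmetry you used.
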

\begin{proof}
     It follows from Lemma \ref{lem:discretization} that (a), (b) hold for all $2 \le i \le K - 1$. For $i = 1$, since $d_{1} \le z_{1} = \sin ^ {2} \frac{\pi}{2K}$, we have \begin{align*}
         \frac{d_{1} ^ {2}}{z_{1}(1 - z_{1})} \le \frac{\sin ^ {4} \frac{\pi}{2K}}{\sin ^ {2} \frac{\pi}{2K} \cos ^ {2} \frac{\pi}{2K}} = \tan ^ {2} \frac{\pi}{2K},
     \end{align*}
     which is $\cO(\frac{1}{K ^ {2}})$ for a large $K$. Similarly, for $i = K$, $d_{i} = \cos ^ {2} \frac{\pi}{4K} - \cos ^ {2} \frac{\pi}{2K} = \sin ^ {2} \frac{\pi}{2K} - \sin ^ {2} \frac{\pi}{4K} \le \sin ^ {2} \frac{\pi}{2K}$. Therefore, \begin{align*}
         \frac{d_{K} ^ {2}}{z_{K}(1 - z_{K})} \le \frac{\sin ^ {4} \frac{\pi}{2K}}{\sin ^ {2} \frac{\pi}{4K} \cos ^ {2} \frac{\pi}{4K}} = 4 \sin ^ {2} \frac{\pi}{2K} \le \frac{\pi ^ {2}}{K ^ {2}},
     \end{align*}
     where the equality follows from the identity $\sin 2\theta = 2 \sin \theta \cos \theta$. This completes the proof for (a). For (b), when $i = 1$, we have \begin{align*}
         \frac{d_{1} ^ {2}}{z_{0}(1 - z_{0})} \le \frac{\sin ^ {4} \frac{\pi}{2K}}{\sin ^ {2} \frac{\pi}{4K} \cos ^ {2} \frac{\pi}{4K}} = 4 \sin ^ {2} \frac{\pi}{2K} \le \frac{\pi ^ {2}}{K ^ {2}}.
     \end{align*} 
     Similarly, when $i = K$, we have \begin{align*}
         \frac{d_{K} ^ {2}}{z_{K - 1}(1 - z_{K - 1})} \le \frac{\sin ^ {4} \frac{\pi}{2K}}{\sin ^ {2} \frac{\pi}{2K} \cos ^ {2} \frac{\pi}{2K}} = \tan ^ {2} \frac{\pi}{2K},
     \end{align*}
     which is $\cO(\frac{1}{K ^ {2}})$ for a large $K$. This completes the proof.
\end{proof}

\subsection{Proof of Corollary \ref{cor:simultaneous_bounds_psreg}}
\begin{proof}
     Since $\kcal \ge \pcal_{2}$, Algorithm \ref{alg:BM_log_loss} ensures that $\pcal_{2} = \cO(T^{\frac{1}{3}} (\log T) ^ {\frac{2}{3}})$. Next, we show that the  $\pcal_{1}$ satisfies (a) $\pcal_{1} \le \sqrt{T \cdot \pcal_{2}}$; (b) for any proper loss $\ell$, we have $\psreg^{\ell} \le 4\pcal_{1}$. The proof is exactly similar to the corresponding variants of (a), (b) above for $\cal$ as shown by  \cite{kleinberg2023u}. For (a), applying the Cauchy-Schwartz inequality, we obtain \begin{align*}
        \sum_{p \in \cZ} \sum_{t = 1} \cP_{t}(p)\abs{p - \tilde{\rho}_{p}} \le \bigc{\sum_{p \in \cZ} \sum_{t = 1} ^ {T} \cP_{t}(p)} ^ {\frac{1}{2}} \bigc{\sum_{p \in \cZ} \sum_{t = 1} ^ {T} \cP_{t}(p) (p - \tilde{\rho}_{p}) ^ {2}} ^ {\frac{1}{2}} = \sqrt{T \cdot \pcal_{2}}.
    \end{align*}
    Towards showing (b), we first rewrite $\psreg^{\ell} = \sum_{p \in \cZ} \sum_{t = 1} ^ {T} \cP_{t}(p) \breg_{-\ell}(\tilde{\rho}_{p}, p)$, which holds for any proper loss $\ell$ as per Proposition \ref{prop:breg_div_decomposable}. Next, we observe that \begin{align*}
        \breg_{-\ell}(\tilde{\rho}_{p}, p) =  \ell(p) -\ell(\tilde{\rho}_{p}) + \partial \ell (p) (\tilde{\rho}_{p} - p) &\le \partial \ell(\tilde{\rho}_{p}) (p - \tilde{\rho}_{p}) + \partial \ell(p) (\tilde{\rho}_{p} - p) \\
        &\le 4 \abs{p - \tilde{\rho}_{p}},
    \end{align*}
    where the first inequality follows since $\ell(p)$ is concave; the second inequality follows by noting that $\ell(p, 1) - \ell(p, 0) = \partial \ell (p)$ as per Lemma \ref{lem:characterization_proper_loss}, and since $\ell(p, y) \in [-1, 1]$, we have $\abs{\partial \ell (p)} \le 2$ for all $p \in [0, 1]$. Substituting the bound on $\breg_{-\ell}(\tilde{\rho}_p, p)$ obtained above into $\psreg^{\ell}$, we obtain $\psreg^{\ell} \le 4\pcal_{1}$ as desired. Since Algorithm \ref{alg:BM_log_loss} ensures $\pcal_{1} = \cO(T^{\frac{1}{3}} (\log T) ^ {\frac{1}{3}})$, we obtain $\psreg^{\ell} = \cO(T^{\frac{1}{3}} (\log T) ^ {\frac{1}{3}})$. Combining the above results with Propositions \ref{prop:breg_div_decomposable}, \ref{prop:bound_breg_div_quadratically} finishes the proof.
\end{proof}

\section{Deferred proofs in Section \ref{sec:bound_calibration}}\label{app:hp_bound}
\subsection{Proof of Theorem \ref{thm:high_prob_bound}}
Our proof of Theorem \ref{thm:high_prob_bound} crucially relies on the following version of Freedman's inequality from \cite{beygelzimer2011contextual}. Refer therein for a proof. 
\begin{lemma}\label{lem:Freedman} Let $X_{1}, \dots, X_{n}$ be a martingale difference sequence adapted to the filtration $\cF_{1} \subseteq \dots \subseteq \cF_{n}$, where $\abs{X_{i}} \le B$ for all $i = 1, \dots, n$, and $B$ is a fixed constant. Define $\cV \coloneqq \sum_{i = 1} ^ {n} \mathbb{E}[X_{i}^2 | \cF_{i - 1}]$. Then, for any fixed $\mu \in \bigs{0, \frac{1}{B}}, \delta \in [0, 1]$, with probability at least $1 - \delta$, we have \begin{align*}
    \abs{\sum_{i = 1} ^ {n} X_{i}} \le \mu \cV + \frac{\log \frac{2}{\delta}}{\mu}.
\end{align*}
\end{lemma}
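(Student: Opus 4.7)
The strategy is the standard exponential-moment / supermartingale route, tailored so that the $\mu$ parameter appears both on the variance term and inside the deviation, as in Freedman/Bernstein-type bounds. The three ingredients are: (i) a quadratic Taylor-style upper bound on $e^{\mu X_i}$ that is valid because $|\mu X_i|\le 1$; (ii) building a nonnegative supermartingale out of the partial sums; and (iii) a Markov plus union bound to convert an expectation bound into a two-sided deviation bound.

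\textbf{Step 1 (elementary exponential inequality).} First I would establish that $e^y\le 1+y+y^2$ for every $y\le 1$. The function $f(y)=1+y+y^2-e^y$ satisfies $f(0)=0$, $f'(0)=0$, and $f''(y)=2-e^y>0$ for $y<\ln 2$, so $f'$ is increasing on $(-\infty,\ln 2)$ and $f'(1)=3-e>0$; consequently $f$ is decreasing on $(-\infty,0)$ and increasing on $(0,1]$, giving $f\ge 0$ throughout $(-\infty,1]$. Since $\mu\in[0,1/B]$ and $|X_i|\le B$, we have $\mu X_i\in[-1,1]$, so this inequality applies pointwise to $\mu X_i$.

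\textbf{Step 2 (conditional moment generating function).} Taking $\cF_{i-1}$-conditional expectations in $e^{\mu X_i}\le 1+\mu X_i+\mu^2 X_i^2$ and using that $X_i$ is a martingale difference, I would get
\begin{align*}
\mathbb{E}[e^{\mu X_i}\mid \cF_{i-1}] \;\le\; 1+\mu^2\,\mathbb{E}[X_i^2\mid\cF_{i-1}] \;\le\; \exp\!\bigl(\mu^2\,\mathbb{E}[X_i^2\mid\cF_{i-1}]\bigr),
\end{align*}
where the last step uses $1+x\le e^x$. Define
\begin{align*}
M_n \;=\; \exp\!\Bigl(\mu\sum_{i=1}^n X_i \;-\; \mu^2\sum_{i=1}^n \mathbb{E}[X_i^2\mid\cF_{i-1}]\Bigr),\qquad M_0=1.
\end{align*}
A direct computation shows $\mathbb{E}[M_n\mid \cF_{n-1}]\le M_{n-1}$, so $(M_n)$ is a nonnegative supermartingale and $\mathbb{E}[M_n]\le 1$.

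\textbf{Step 3 (Markov and symmetrization).} Markov's inequality gives $\Pr(M_n\ge 2/\delta)\le \delta/2$. Taking logarithms and dividing by $\mu>0$, this is exactly
\begin{align*}
\Pr\!\Bigl(\,\textstyle\sum_{i=1}^n X_i \;\ge\; \mu\cV + \tfrac{1}{\mu}\log\tfrac{2}{\delta}\,\Bigr) \;\le\; \delta/2.
\end{align*}
The sequence $(-X_i)$ is again a martingale difference sequence adapted to $(\cF_i)$ with the same bound $B$ and the same conditional second moments, so rerunning Steps~1--2 on $-X_i$ yields the analogous lower-tail bound. A union bound over the two tails produces the two-sided statement with total failure probability at most $\delta$.

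\textbf{Anticipated obstacle.} The only nontrivial step is getting the quadratic exponential bound with the correct constants so that the variance proxy is exactly $\cV$ (and not, say, $\frac{e-2}{1}\cV$ or $(e^{\mu B}-1-\mu B)\cV/B^2$); this is why the restriction $\mu\le 1/B$ is essential and why one uses $e^y\le 1+y+y^2$ rather than a Hoeffding-style bound. Everything else (supermartingale property, Markov, symmetrization) is routine once this inequality is in place.
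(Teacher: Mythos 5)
Your proof is correct, and it follows the standard exponential-moment/supermartingale argument that the paper implicitly relies on by citing \citet{beygelzimer2011contextual}: the inequality $e^y\le 1+y+y^2$ for $y\le 1$ (which is exactly what the restriction $\mu\le 1/B$ licenses), the resulting nonnegative supermartingale $M_n$, Markov's inequality, and a union bound over the two tails to absorb the factor of $2$ into $\log\frac{2}{\delta}$. One small remark: the cited source proves the one-sided inequality with the sharper constant $e-2$ in front of $\mu\cV$ via $e^y\le 1+y+(e-2)y^2$ on $|y|\le 1$; your cruder bound $e^y\le 1+y+y^2$ gives the weaker constant $1$, which is precisely the constant stated in the lemma, so nothing is lost. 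The only edge cases you implicitly gloss over are $\mu=0$ and $\delta=0$, where the right-hand side is $+\infty$ and the claim holds vacuously; this is harmless.
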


\begin{proof}[Proof of Theorem \ref{thm:high_prob_bound}]
    Before discussing the proof, we introduce some notation. Let $\cZ$ be enumerated as $\cZ = \{z_{0}, \dots, z_{K}\}$, where $K = \abs{\cZ} - 1$. Observe that at time $t$, $\cA_{\cal}$ can be equivalently described by the following procedure: (a) it samples $i_{t}$ from the set $\{0, \dots, K\}$ with $\mathbb{P}_{t}(i_{t} = i) = \cP_{t}(z_{i})$, which we write as $\cP_{t, i}$ for convenience; (b) forecasts $p_{t} = z_{i_{t}}$.
Clearly, $\ind{p_{t} = z_{i}} = \ind{i_{t} = i}$. For simplicity, we denote $\rho_{z_{i}} = \rho_{i}$ and $\tilde{\rho}_{z_{i}} = \tilde{\rho}_{i}$. Under this notation, $\rho_{i}, \tilde{\rho}_i$ can be expressed as \begin{align*}
    \rho_{i} = \frac{\sum_{t = 1} ^ {T} y_{t} \ind{i_{t} = i}}{\sum_{t = 1} ^ {T} \ind{i_{t} = i}}, \quad \tilde{\rho}_{i} = \frac{\sum_{t = 1} ^ {T} y_{t} \cP_{t, i}}{\sum_{t = 1} ^ {T} \cP_{t, i}}.
\end{align*} 
We begin by bounding $\abs{\rho_{i} - \tilde{\rho}_{i}}$ using Lemma \ref{lem:Freedman}. Fix a $i \in \{0, \dots, K\}$ and define the martingale difference sequences $X_{t} \coloneqq y_{t} (\cP_{t, i} - \ind{i_{t} = i})$ and $Y_{t} \coloneqq \cP_{t, i} - \ind{i_{t} = i}$. Observe that $\abs{X_{t}} \le 1, \abs{Y_{t}} \le 1$ for all $t$. Fix a $\mu_{i} \in [0, 1]$. Applying Lemma \ref{lem:Freedman} to the sequences $X, Y$ and taking a union bound (over $X, Y$), we obtain that with probability at least $1 - \delta$,\begin{align}\label{eq:bounding_X_and_Y}
    \abs{\sum_{t = 1} ^ {T} y_{t}(\cP_{t, i} - \ind{i_{t} = i})} \le \mu_{i} \cV_{X} + \frac{\log \frac{4}{\delta}}{\mu_{i}}, \quad \abs{\sum_{t = 1} ^ {T} \cP_{t, i} - \ind{i_{t} = i}} \le \mu_{i}\cV_{Y} + \frac{\log \frac{4}{\delta}}{\mu_{i}},
\end{align}
where $\cV_{X}, \cV_{Y}$ are given by \begin{align*}
    \cV_{X} &= \sum_{t = 1} ^ {T} \mathbb{E}\bigs{X_{t} ^ 2 | \cF_{t - 1}} = \sum_{t = 1} ^ {T} y_{t} \cdot \cP_{t, i} (1 - \cP_{t, i}) \le \sum_{t = 1} ^ {T} \cP_{t, i}, \text{\,\,and} \\
    \cV_{Y} &= \sum_{t = 1} ^ {T} \mathbb{E}\bigs{Y_{t} ^ 2 | \cF_{t - 1}} = \sum_{t = 1} ^ {T} \cP_{t, i} (1 - \cP_{t, i}) \le \sum_{t = 1} ^ {T} \cP_{t, i}.\end{align*}
    
    The upper tail $\rho_{i} - \tilde{\rho}_{i}$ can then be bounded in the following manner: \begin{align*}
    \rho_{i} - \tilde{\rho}_{i} &= \frac{\sum_{t = 1} ^ {T} y_{t} \ind{i_{t} = i}}{\sum_{t = 1} ^ {T} \ind{i_{t} = i}} - \frac{\sum_{t = 1} ^ {T} y_{t} \cP_{t, i}}{\sum_{t = 1} ^ {T} \cP_{t, i}} \\
    &\le \frac{\sum_{t = 1} ^ {T} y_{t} \ind{i_{t} = i}}{\sum_{t = 1} ^ {T} \ind{i_{t} = i}} + \frac{\mu_{i} \sum_{t = 1} ^ {T} \cP_{t, i} + \frac{\log \frac{4}{\delta}}{\mu_{i}} -  \sum_{t = 1} ^ {T} y_{t}\ind{i_{t} = i}}{\sum_{t = 1} ^ {T} \cP_{t, i}} \\
    &= \frac{\sum_{t = 1} ^ {T} y_{t} \ind{i_{t} = i}}{\bigc{\sum_{t = 1} ^ {T} \ind{i_{t} = i}} \bigc{\sum_{t = 1} ^ {T} \cP_{t, i}}} \cdot \bigc{\sum_{t = 1} ^ {T} \cP_{t, i} - \ind{i_{t} = i}} + \frac{\mu_{i} \sum_{t = 1} ^ {T} \cP_{t, i} + \frac{\log \frac{4}{\delta}}{\mu_{i}}}{\sum_{t = 1} ^ {T} \cP_{t, i}} \\
    &\le \frac{\sum_{t = 1} ^ {T} y_{t} \ind{i_{t} = i}}{\bigc{\sum_{t = 1} ^ {T} \ind{i_{t} = i}} \bigc{\sum_{t = 1} ^ {T} \cP_{t, i}}} \cdot \bigc{\mu_{i} \sum_{t = 1} ^ {T} \cP_{t, i} + \frac{\log \frac{4}{\delta}}{\mu_{i}}} + \frac{\mu_{i} \sum_{t = 1} ^ {T} \cP_{t, i} + \frac{\log \frac{4}{\delta}}{\mu_{i}}}{\sum_{t = 1} ^ {T} \cP_{t, i}} \\
    &\le 2\mu_{i} + \frac{2\log \frac{4}{\delta}}{\mu_{i} \sum_{t = 1} ^ {T} \cP_{t, i}}, 
\end{align*} 
where the first and second inequalities follow from \eqref{eq:bounding_X_and_Y}, while the last inequality follows by bounding $y_{t} \ind{i_{t} = i} \le \ind{i_{t} = i}$. The lower tail can be bounded in an exact same manner as \begin{align*}
    \tilde{\rho}_{i} - \rho_{i} &= \frac{\sum_{t = 1} ^ {T} y_{t} \cP_{t, i}}{\sum_{t = 1} ^ {T} \cP_{t, i}} - \frac{\sum_{t = 1} ^ {T} y_{t} \ind{i_{t} = i}}{\sum_{t = 1} ^ {T} \ind{i_{t} = i}} \\
    &\le \frac{\sum_{t = 1} ^ {T} y_{t}\ind{i_{t} = i} + \mu_{i} \sum_{t = 1} ^ {T} \cP_{t, i} + \frac{\log \frac{4}{\delta}}{\mu_{i}}}{\sum_{t = 1} ^ {T} \cP_{t, i}} - \frac{\sum_{t = 1} ^ {T} y_{t} \ind{i_{t} = i}}{\sum_{t = 1} ^ {T} \ind{i_{t} = i}} \\
    &= \frac{\sum_{t = 1} ^ {T} y_{t} \ind{i_{t} = i}}{\bigc{\sum_{t = 1} ^ {T} \cP_{t, i}} \bigc{\sum_{t = 1} ^ {T} \ind{i_{t} = i}}} \cdot \bigc{\sum_{t = 1} ^ {T} \ind{i_{t} = i} - \cP_{t, i}} + \frac{\mu_{i} \sum_{t = 1} ^ {T} \cP_{t, i} + \frac{\log \frac{4}{\delta}}{\mu_{i}}}{\sum_{t = 1} ^ {T} \cP_{t, i}} \\
    & \le \frac{\sum_{t = 1} ^ {T} y_{t} \ind{i_{t} = i}}{\bigc{\sum_{t = 1} ^ {T} \ind{i_{t} = i}} \bigc{\sum_{t = 1} ^ {T} \cP_{t, i}}} \cdot \bigc{\mu_{i} \sum_{t = 1} ^ {T} \cP_{t, i} + \frac{\log \frac{4}{\delta}}{\mu_{i}}} + \frac{\mu_{i} \sum_{t = 1} ^ {T} \cP_{t, i} + \frac{\log \frac{4}{\delta}}{\mu_{i}}}{\sum_{t = 1} ^ {T} \cP_{t, i}} \\
    &\le 2\mu_{i} + \frac{2\log \frac{4}{\delta}}{\mu_{i}\sum_{t = 1} ^ {T} \cP_{t, i}} .
\end{align*}
Combining both the bounds, we have shown that for a fixed $\mu_{i} \in [0, 1]$, $\abs{\rho_{i} - \tilde{\rho}_{i}} \le 2\mu_{i} + \frac{2\log \frac{4}{\delta}}{\mu_i\sum_{t = 1} ^ {T} \cP_{t, i}}$ holds with probability at least $1 - \delta$. Taking a union bound over all $i$, with probability $1 - \delta$, we have (simultaneously for all $i$)
   \begin{align}
    \abs{\sum_{t = 1} ^ {T} y_{t}(\cP_{t, i} - \ind{i_{t} = i})} &\le \mu_{i} \sum_{t = 1} ^ {T} \cP_{t, i} + \frac{\log \frac{4(K + 1)}{\delta}}{\mu_{i}}, \nn\\
    \abs{\sum_{t = 1} ^ {T} \cP_{t, i} - \ind{i_{t} = i}} &\le \mu_{i}\sum_{t = 1} ^ {T} \cP_{t, i} + \frac{\log \frac{4(K + 1)}{\delta}}{\mu_{i}}, \label{eq:bounding_diff_p_ind} \\
    \abs{\rho_{i} - \tilde{\rho_{i}}} &\le 2\mu_{i} + \frac{2\log \frac{4 (K + 1)}{\delta}}{\mu_{i} \sum_{t = 1} ^ {T} \cP_{t, i}}\label{eq:bounding_rho_union_bound}.
\end{align}
Consider the function $g(\mu) \coloneqq \mu + \frac{a}{\mu}$, where $a \ge 0$ is a fixed constant. Clearly, $\min_{\mu \in [0, 1]} g(\mu) = 2\sqrt{a}$ when $a \le 1$, and $1 + a$ otherwise. Minimizing the bound in \eqref{eq:bounding_rho_union_bound} with respect to $\mu_{i}$, we obtain
\begin{align*}
    \abs{\rho_{i} - \tilde{\rho}_{i}} \le 4 \sqrt{\frac{\log \frac{4(K + 1)}{\delta}}{\sum_{t = 1} ^ {T} \cP_{t, i}}}, \text{when } \log \frac{4(K + 1)}{\delta} \le \sum_{t = 1} ^ {T} \cP_{t, i}.
\end{align*}
However, when $\log \frac{4(K + 1)}{\delta} > \sum_{t = 1} ^ {T} \cP_{t, i}$, we obtain that $\abs{\rho_{i} - \tilde{\rho}_{i}} \le 2 + \frac{2\log \frac{4(K + 1)}{\delta}}{\sum_{t = 1} ^ {T} \cP_{t, i}}$. In particular, when $\sum_{t = 1} ^ {T} \cP_{t, i}$ is tiny, which is possible if $\cA_{\cal}$ does not allocate enough probability mass to the index $i$, the bound obtained is large making it much worse than the trivial bound $\abs{\rho_{i} - \tilde{\rho}_i} \le 1$ which follows since $\rho_{i}, \tilde{\rho}_{i} \in [0, 1]$ by definition. Based on this reasoning, we define the set \begin{align}\label{eq:defintion_I}
    \cI \coloneqq \bigcurl{i \in \{0, \dots, K\} \text{\,s.t.\,} \log \frac{4(K + 1)}{\delta} \le \sum_{t = 1} ^ {T} \cP_{t, i}},
\end{align}
and bound ${(\rho_{i} - \tilde{\rho}_{i})} ^ 2$ as \begin{align}\label{eq:final_bound_rho_tilde_rho}
    {(\rho_{i} - \tilde{\rho}_{i})^2} \le \begin{cases}
         {\frac{16\log \frac{4(K + 1)}{\delta}}{\sum_{t = 1} ^ {T} \cP_{t, i}}} & \text{\,if\,}i \in \cI,  \\
        1 & \text{\,otherwise}.
    \end{cases}
\end{align}
Similarly, $\abs{\sum_{t = 1} ^ {T} \cP_{t, i} - \ind{i_{t} = i}}$ can be bounded by substituting the optimal $\mu_{i}$ obtained above in \eqref{eq:bounding_diff_p_ind}; we obtain 
\begin{align}\label{eq:final_bound_diff_p_ind}
    \abs{\sum_{t = 1} ^ {T} \cP_{t, i} - \ind{i_{t} = i}} \le \begin{cases}
        2\sqrt{\log \frac{4 (K + 1)}{\delta}\sum_{t = 1} ^ {T} \cP_{t, i}} & \text{\,if\,} i \in \cI, \\
        \sum_{t = 1} ^ {T} \cP_{t, i} + \log \frac{4(K + 1)}{\delta} & \text{\,otherwise}.
    \end{cases}
\end{align}
Equipped with \eqref{eq:final_bound_rho_tilde_rho}, \eqref{eq:final_bound_diff_p_ind}, we proceed to bound $\cal_{2}$ in the following manner: \begin{align*}
    \cal_{2} &= \sum_{i = 0} ^ {K} \sum_{t = 1} ^ {T} \ind{i_{t} = i} \bigc{z_{i} - \rho_{i}} ^ 2  \le 2\sum_{i = 0} ^ {K}\sum_{t = 1} ^ {T} \ind{i_{t} = i} \bigc{\bigc{z_{i} - \tilde{\rho}_i} ^ 2 + (\rho_{i} - \tilde{\rho}_i) ^ 2},
\end{align*}
where the inequality is because $(a + b)^2 \le 2 a^2 + 2b ^ 2$ for all $a, b \in \Rn$. To further bound the term above, we split the summation into two terms $\cT_{1}, \cT_{2}$ defined as \begin{align*}
    \cT_{1} \coloneqq \sum_{i \in \cI} \sum_{t = 1} ^ {T} \ind{i_{t} = i} \bigc{\bigc{z_{i} - \tilde{\rho}_i} ^ 2 + (\rho_{i} - \tilde{\rho}_i) ^ 2}, \\
    \cT_{2} = \sum_{i \in \bar{\cI}} \sum_{t = 1} ^ {T} \ind{i_{t} = i} \bigc{\bigc{z_{i} - \tilde{\rho}_i} ^ 2 + (\rho_{i} - \tilde{\rho}_i) ^ 2},
\end{align*}
and bound $\cT_{1}$ and $\cT_{2}$ individually. We bound $\cT_{1}$ as \begin{align*}
    \cT_{1} &\le \sum_{i \in \cI} \bigc{\sum_{t = 1} ^ {T} \cP_{t, i} + 2 \sqrt{\log \frac{4 (K + 1)}{\delta}\sum_{\tau = 1} ^ {T} \cP_{\tau, i}}} \bigc{\bigc{z_{i} - \tilde{\rho}_i} ^ 2 + \frac{16 \log \frac{4(K + 1)}{\delta}}{\sum_{\tau = 1} ^ {T} \cP_{\tau, i}}} \\
    &= \sum_{i \in \cI} \sum_{t = 1} ^ {T} \cP_{t, i} \bigc{z_{i} - \tilde{\rho}_{i}} ^ 2 + 16 \log \frac{4(K+1)}{\delta}\abs{\cI} + \\
    &\hspace{45mm} 2 \sum_{i \in \cI} \sqrt{\log \frac{4 (K + 1)}{\delta}\sum_{\tau = 1} ^ {T} \cP_{\tau, i}} \bigc{\bigc{z_{i} - \tilde{\rho}_i} ^ 2 +  \frac{16 \log \frac{4(K+1)}{\delta}}{\sum_{\tau = 1} ^ {T} \cP_{\tau, i}}} \\
    & \le \sum_{i \in \cI} \sum_{t = 1} ^ {T} \cP_{t, i} \bigc{z_{i} - \tilde{\rho}_{i}} ^ 2 + 16 \log \frac{4(K+1)}{\delta}\abs{\cI} + 2 \sum_{i \in \cI} \sum_{\tau = 1} ^ {T} \cP_{\tau, i} \bigc{\bigc{z_{i} - \tilde{\rho}_i} ^ 2 +  \frac{16 \log \frac{4(K+1)}{\delta}}{\sum_{\tau = 1} ^ {T} \cP_{\tau, i}}} \\
    &=  3\sum_{i \in \cI} \sum_{t = 1} ^ {T} \cP_{t, i} \bigc{z_{i} - \tilde{\rho}_{i}} ^ 2 + 48 \log \frac{4(K+1)}{\delta}\abs{\cI},
\end{align*}
where the first inequality follows by substituting the bounds from \eqref{eq:final_bound_rho_tilde_rho}, \eqref{eq:final_bound_diff_p_ind}, while the final inequality follows since by the definition of $\cI$ in \eqref{eq:defintion_I}, we have $\sqrt{\log \frac{4(K + 1)}{\delta} \sum_{\tau = 1} ^ {T} \cP_{\tau, i}} \le \sum_{\tau = 1} ^ {T} \cP_{\tau, i}$. Next, we bound $\cT_{2}$ as \begin{align*}
    \cT_{2} &\le \sum_{i \in \bar{\cI}} \bigc{2\sum_{t = 1} ^ {T} \cP_{t, i} + \log \frac{4 (K + 1)}{\delta}} \bigc{\bigc{z_{i} - \tilde{\rho}_i} ^ 2 + 1} \\
    &\le 2 \sum_{i \in \bar{\cI}} \sum_{t = 1} ^ {T} \cP_{t, i} \bigc{z_{i} - \tilde{\rho}_i}^2 + 2 \sum_{i \in \bar{\cI}} \sum_{t = 1} ^ {T} \cP_{t, i} + 2 \log \frac{4 (K + 1)}{\delta} \abs{\bar{\cI}} \\
    & \le 2 \sum_{i \in \bar{\cI}} \sum_{t = 1} ^ {T} \cP_{t, i} \bigc{z_{i} - \tilde{\rho}_i}^2 + 4 \log \frac{4 (K + 1)}{\delta} \abs{\bar{\cI}},
\end{align*}
where the first inequality follows by substituting the bounds from \eqref{eq:final_bound_rho_tilde_rho}, \eqref{eq:final_bound_diff_p_ind}; the second inequality follows by bounding $(z_{i} - \tilde{\rho}_i)^2 \le 1$; the final inequality follows from the definition of $\cI$ \eqref{eq:defintion_I}.
Collecting the bounds on $\cT_{1}$ and $\cT_{2}$, we obtain \begin{align*}
    \cT_{1} + \cT_{2} &\le 3 \sum_{i = 0} ^ {K} \sum_{t = 1} ^ {T} \cP_{t, i} \bigc{z_{i} - \tilde{\rho}_{i}} ^ 2 + 48 \log \frac{4(K+1)}{\delta}\abs{\cI} + 4 \log \frac{4 (K + 1)}{\delta} \abs{\bar{\cI}} \\
    &\le 3 \pcal_{2} + 48 (K + 1)\log \frac{4(K + 1)}{\delta},
\end{align*}
where the last inequality follows from the definition of $\pcal_{2}$ and since $\abs{\cI} + \abs{\bar{\cI}} = K + 1$. Since $\cal_{2} \le 2(\cT_{1} + \cT_{2})$, we have shown that \begin{align}\label{eq:hp_bound_cal_2}
    \cal_{2} \le 6 \pcal_{2} + 96 (K + 1)\log \frac{4(K + 1)}{\delta}
\end{align}
with probability at least $1 - \delta$. This completes the proof.
\end{proof}

\subsection{Proof of Corollary \ref{cor:cal_2_hp_bound}}
 \begin{proof}
Since Algorithm \ref{alg:BM_log_loss} ensures that $\pcal_{2} = \cO\bigc{\frac{T}{K ^ {2}} + K \log T}$ (refer Section \ref{sec:pseudo-KL-Cal}), we obtain \begin{align*}
    \cal_{2} = \cO\bigc{\frac{T}{K ^ {2}} + K \log T + K \log \frac{K}{\delta}}
\end{align*}
with probability at least $1 - \delta$, which is $\cO\bigc{\frac{T ^ {\frac{1}{3}}}{(\log T) ^ {\frac{1}{3}}} \log \frac{T}{\delta}}$ on substituting $K$. The high probability bound on $\msr_{\cL_{G}}$ follows since $\msr_{\cL_{G}} \le G \cdot \cal_{2}$. To bound $\mathbb{E}\bigs{\cal_{2}}$, we let $\cE$ denote the event that $\cal_{2} \le \Delta$, where $\Delta \coloneqq 6 \pcal_{2} + 96 (K + 1) \log \frac{4(K + 1)}{\delta}$. We then have, \begin{align*}
    \mathbb{E}[\cal_{2}] = \mathbb{E}[\cal_{2} | \cE]\cdot \mathbb{P}(\cE) + \mathbb{E}[\cal_{2} | \bar{\cE}] \cdot \mathbb{P}(\bar{\cE}) = \cO\bigc{\frac{T}{K ^ 2} + K \log T + K\log \frac{K}{\delta} + \delta \cdot T}
\end{align*}
which is $\cO(T^{\frac{1}{3}} (\log T)^{\frac{2}{3}})$ on substituting $\delta = \frac{1}{T}$ and $K$. Note that the second equality above follows since $\mathbb{E}[\cal_{2} | \cE] \le \Delta$ and $\mathbb{P}(\cE) \le 1$, $\cal_{2} \le T$ and $\mathbb{P}(\bar{\cE}) < \delta$. Finally, bounding $\msr_{\cL_{G}} \le G \cdot \cal_{2}$ finishes the proof.
\end{proof}

\end{document}